\newtheorem{proposition}{Proposition}
\newtheorem{theorem}{Theorem}
\newtheorem{lemma}{Lemma}
\theoremstyle{definition}
\newtheorem{remark}{Remark}
\newcommand{\X}{\mathcal{X}}
\newcommand{\Y}{\mathcal{Y}}
\newcommand{\Gd}{G_{\Delta}}
\newcommand{\Gds}{G_{\Delta}^*}
\newcommand{\gmo}{g_{-1}} % g minus one
\newcommand{\gD}{g_{\Delta}} % g Delta
\DeclareMathOperator*{\argmin}{arg\,min}
\newcommand{\lp}{\left(}
\newcommand{\rp}{\right)}
\newcommand{\lb}{\left [}
\newcommand{\rb}{\right]}
\newcommand{\lc}{\left\{}
\newcommand{\rc}{\right\}}
\newcommand{\gd}{\gamma_{\delta}}  
\newcommand{\indi}{\mathbbm{1}}
\newcommand{\Oh}{\mathcal{O}}
\newcommand{\mbb}[1]{\mathbb{#1}}
\newenvironment{proofoutline}
 {\proof[Proof outline]}
 {\endproof}
\title{Binary Classification with Bounded Abstention Rate}
\author{
Shubhanshu Shekhar\\
  \texttt{shshekha@eng.ucsd.edu}
  \and
Mohammad Ghavamzadeh\\
  \texttt{mohammad.ghavamzadeh.inria.fr}
  \and 
  Tara Javidi\\
  \texttt{tjavidi@eng.ucsd.edu}
}
\date{}
\begin{document}

\maketitle

\begin{abstract}
% This paper considers the problem of binary classification in which the learner has an additional option of abstaining from declaring a label. Unlike most existing works in the literature which assign a fixed cost for abstaining, the learner is allowed to employ the abstain option up to a fraction $\delta$ of input samples without incurring any cost. We obtain a complete characterization of the Bayes optimal classifier for this problem, generalizing as well as providing an alternative proof of the existing results. exploiting the particular threshold type structure of the optimal classifier, we then construct a plug-in abstaining classifier that uses the unlabelled samples to approximate the thresholds of an estimator of the regression function. The constructed classifier adapts to the unknown smoothness of the regression function and can achieve \emph{fast}, i.e., faster than $\tilde{\mathcal{O}}\lp n^{-1/2} \rp$,  rates of convergence under some margin and detectability conditions.  We then propose a computationally feasible algorithm, which builds upon the prior work on convex loss surrogates designed for the problem of classification with fixed cost of rejection, and obtain bounds on its excess risk in the \emph{realizable} case. We also propose a naive baseline algorithm for comparison and  empirically validate the performance of these algorithms on some standard machine learning benchmark datasets.

We consider the problem of binary classification with abstention in the relatively less studied \emph{bounded-rate} setting. We begin by obtaining a characterization of the Bayes optimal classifier for an arbitrary input-label distribution $P_{XY}$. Our result generalizes and provides an alternative proof for the result first obtained by \cite{chow1957optimum}, and then re-derived by \citet{denis2015consistency}, under a continuity assumption on $P_{XY}$. 
%removing a continuity assumption required by \cite{chow1957optimum}. 
We then propose a plug-in classifier  that employs unlabelled samples to decide the region of abstention and  derive an upper-bound on the excess risk of our classifier under standard \emph{H\"older smoothness}  and \emph{margin} assumptions. Unlike the plug-in rule of \citet{denis2015consistency},  our constructed classifier satisfies the abstention constraint with high probability and  can also deal with discontinuities in the empirical cdf. We also derive  lower-bounds that demonstrate the minimax  near-optimality of our proposed algorithm. To address the excessive complexity of the plug-in classifier in high dimensions, we propose a computationally efficient algorithm that builds upon prior work on convex loss surrogates, and obtain bounds on its excess risk in the \emph{realizable} case. We empirically compare the performance of the proposed algorithm with a baseline on a number of UCI benchmark datasets.

% This document provides a basic paper template and submission guidelines.
% Abstracts must be a single paragraph, ideally between 4--6 sentences long.
% Gross violations will trigger corrections at the camera-ready phase.
\end{abstract}

%--------------------------------------------------------- 
%--------------------------------------------------------- 
% INTRODUCTION 
%--------------------------------------------------------- 
%--------------------------------------------------------- 

%\todo{I think we're using ICML-18 style. We shall download and use the ICML-19 style.}

\section{Introduction}
\vspace{-1em}
We consider the problem of binary classification with a caveat that the classifier has an additional option to \emph{abstain}, or not declare the label, for some points in the input space. This framework, alternatively referred to as \emph{classification with rejection}~\citep{cortes2016learning} or \emph{selective classification}~\citep{elyaniv2010selective}, allows the learner an option to withhold her decision over the highly noisy parts of the input space, in which the probability of making an error is large. Classification with abstention provides a suitable framework for modeling several practical scenarios. One example of such a problem is automated medical diagnosis systems, where the consequences  of a wrong diagnosis may be much more critical than the alternative of the subject having to  undergo more tests. Other relevant areas of applications include DNA sequencing, dialog systems, and detecting harmful contents on the internet. 

The most common approach to learning with abstention is the fixed-cost setting, in which the classifier incurs a fixed cost every time the abstain option is invoked. Recent works exploring different aspects of this approach include~\citet{cortes2016learning,wegkamp2011support,bartlett2008classification,herbei2006classification}. In this paper, we consider the relatively less studied formulation of this problem in which the learner is allowed to abstain for up to a fixed fraction $\delta$ of the input samples without incurring any costs. This formulation models situations where we cannot assign a precise cost to abstention but  the  bottleneck is the rate at which the abstained inputs are processed \citep{pietraszek2007use}. 

%----https://www.overleaf.com/project/5ce1f40cc6fe8e13bf93802f---------------------------------------------------------------------------
% OUTLINE OF THE PAPER
%-------------------------------------------------------------------------------

% The rest of the paper is organized as follows. We provide a summary of the related prior work in Section~\ref{subsec:prior_work} and present our contributions in Section~\ref{subsec:contributions}. We formally state the problem and assumptions in Section~\ref{sec:problem_setup}. We derive the Bayes optimal classifier in Section~\ref{sec:bayes_optimal}, and construct and analyze a plug-in classifier in Section~\ref{sec:plugin_classifier}. In Section~\ref{sec:convex}, we propose computationally feasible algorithms using convex surrogates for the problem. We empirically evaluate our algorithms on two standard datasets in Section~\ref{sec:experiments} and conclude the paper in Section~\ref{sec:conclusion}.% contains the conclusion.\todo{We shall mention the experiments here.} 

%--------------------------------------------------------- 
% PRIOR WORK
%--------------------------------------------------------- 

% \subsection{Prior work} 
% \label{subsec:prior_work}
\paragraph{Prior Work.} 
The formal analysis of the problem of classification with a reject option was initiated by~\citet{chow1957optimum,Chow1970:Optimum}.~\citet{chow1957optimum} derived the Bayes optimal classifier for this problem considering the fixed-cost abstention model, as well as under the bounded-rate of abstention constraint. In the latter case, some continuity assumptions were implicitly made on the joint distribution, which we relax in this paper.~\citet{Chow1970:Optimum} further obtained a functional relation between the error rate and the rejection rate. 

More recent works in this area have concentrated on the fixed-cost setting, in which employing the abstain option incurs a fixed cost $\lambda \in (0,1/2)$, which is assumed to be known to the learner.~\citet{herbei2006classification} obtained convergence rates on the excess risk for plug-in and risk minimization based classifiers. ~\citet{bartlett2006convexity} introduced a convex surrogate loss, called the Generalized Hinge Loss, for this problem and proved results on its calibration and excess risk.
~\citet{yuan2010classification} further obtained necessary and sufficient  conditions for the infinite sample complexity of arbitrary convex surrogate loss functions. Other related works include~\citet{wegkamp2007lasso} and~\citet{wegkamp2011support} that analyzed the binary classification with reject option with $\ell_1$-regularization. 
\citet{cortes2016learning} considered this problem in a  new framework, in which an abstaining classifier is represented by a pair of functions $(h,r)$, where the sign of $h$ is used for prediction and the sign of $r$ decides whether to abstain or not. They proposed new calibrated convex surrogate loss functions for this problem and obtained generalization and consistency guarantees. This framework was further extended to construct boosting classifier~\citep{cortes2016boosting} as well as to the online setting~\citep{cortes2017online}. Other related work which employ a pair of functions to represent abstaining classifiers include \citep{elyaniv2010selective, wiener2011agnostic}. 

% ~\citet{elyaniv2010selective,wiener2011agnostic} studied the trade-off between the coverage and risk in a similar setting that involves learning a pair of functions $(f,g)$, where $f$ is used for prediction and $g$ is the selection function.~\citet{elyaniv2010selective} showed that \emph{perfect} learning, i.e.,~zero risk learning, can be achieved with certain guarantees on the coverage for the {\em realizable} setting. In~\citet{wiener2011agnostic}, it was shown that a similar notion, called \emph{weakly optimal} learning, was achievable in the {\em agnostic} setting. 

Unlike the fixed-cost setting, the literature is relatively sparse for the bounded-rate of abstention.~\citet{pietraszek2007use} proposed algorithms for this as well as  related settings using ROC analysis. The work of  \cite{denis2015consistency}  is closely related to the results of Section~\ref{sec:bayes_optimal} and Section~\ref{sec:plugin_classifier} of our paper. More specifically, \cite{denis2015consistency} also obtained the Bayes optimal classifier for the bounded-rate setting, and proposed a general plug-in strategy for constructing an abstaining classifier given any consistent estimator of the regression function. However, both these results in \citet{denis2015consistency} required certain continuity assumptions (\textbf{A1} and \textbf{A2} in \citep{denis2015consistency}), which we relax in our work (Remark~\ref{remark:chow_comparison} and Remark~\ref{remark:denis_hebiri_comparison}).
Furthermore, our approach in constructing the plug-in classifier is complementary to that of \citep{denis2015consistency} in the following way: instead of proposing a general strategy which takes in as input an estimator, we construct a specific estimator and and a particular randomized rule which allows us to have certain desirable properties such as tight control over abstention rate, and adaptivity to local smoothness parameters.

% Furthermore, we also derive lower bounds on the excess risk which establish the minimax (near)-optimality of our classifier. Finally, noting the computational infeasibility of the plug-in approach in higher dimensions, we then  propose and analyze a computationally efficient algorithm employing convex surrogates which is more important for practical applications. 

% \todomgh{We should list in more details the contributions of Denis both in terms of Bayes optimal and plug-in and mention how we improve over.}

%---------------------------------------------------------
% CONTRIBUTIONS 
%---------------------------------------------------------

% \subsection{Contributions}
% \label{subsec:contributions}
\paragraph{Contributions.} We now highlight the four main contributions of this paper to the problem of binary classification with bounded-rate of abstention. 
 \textbf{1)} In Section~\ref{sec:bayes_optimal}, we derive the form of the Bayes optimal classifier for this problem for arbitrary input-label joint distributions. The result extends the threshold type classifier first derived by~\citet{chow1957optimum}, and re-derived by \cite{denis2015consistency},  and provides an alternate and more comprehensive proof (see Remark~\ref{remark:chow_comparison}).
%\item 
\textbf{2)} We then propose a plug-in abstaining classifier which adapts to the unknown smoothness of the regression function in a data driven manner, and derive upper-bounds on its excess risk in terms of the number of required labelled and unlabelled samples. Unlike the plug-in classifier of \citep{denis2015consistency}, our proposed classifier satisfies the constraint with high probability and does not impose the continuity condition on the empirical cdf (see Remark~\ref{remark:denis_hebiri_comparison}). 
\textbf{3)} We also demonstrate the minimax near optimality of our classifier by deriving lower-bound on the excess risk (Theorem~\ref{theorem:lower_bound}). 
\textbf{4)} Since the implementation of the plug-in classifier may be intractable in higher dimensions, we also propose a computationally feasible algorithm that leverages the existing algorithms for the fixed cost setting, and derive bounds on its excess risk. We also propose a  baseline algorithm for comparison, which uses convex surrogates for both objective and constraints. Preliminary empirical results suggest that these algorithms can be used to learn classifiers with tight control over the rejection rate. 

% \todomgh{Improvements over Denis, especially for in the plug-in case are missing.}

%--------------------------------------------------------- 
% PROBLEM SETUP
%--------------------------------------------------------- 
\section{Problem Setup}
\label{sec:problem_setup}
\vspace{-1em}
Let $\X$ denote  the input space, and $\Y = \{-1,1\}$ denote the set of labels to be assigned to points in $\X$. For simplicity, we consider $\X = [0,1]^D$ for some $D>0$ and use $\|\cdot\|$ to represent the Euclidean norm on $\X$. The classification problem is completely specified by $P_{XY}$, the joint distribution of the input-label random variables. Equivalently, we can represent the problem in terms of the marginal over the input space, $P_X$, and the regression function $\eta(x) \coloneqq P_{Y|X}\lp Y=1 \mid X=x\rp$. 

A (randomized) abstaining classifier can be represented by a mapping  $g:\X \mapsto \mathcal{P}\lp \Y_1\rp$, where $\Y_1 =\Y \cup \{\Delta\}$, the symbol $\Delta$ represents the option of the classifier to abstain from declaring a label, and $\mathcal{P}(\Y_1 )$ represents the set of probability distributions on $\Y_1$. Such a classifier $g$ comprises of three functions $g_i:\X \to [0,1]$, for $i \in \Y_1$, satisfying $\sum_{i \in \Y_1} g_i(x) = 1$, for all $x \in \X$. A classifier $g$ is called \emph{deterministic} if the functions $g_i$ take values in the set $\{0,1\}$, for $i \in \Y_1$. Every deterministic classifier $g$ partitions the set $\X$ into three disjoint sets $(G_{-1},G_1,G_{\Delta})$ and we will use the two representations of a deterministic classifier interchangeably.
% Our goal is to construct a classifier $g:\X \mapsto \Y\cup \{\Delta \}$, where the symbol $\Delta$ represents the option of the classifier to  abstain from declaring a label. 
% Let $(X,Y)$ denote the input-label random variables. Then
% we can decompose the joint density function $f_{XY}(x,y)$ as follows: $f_{XY}(x,y) = f_X(x)\big( \eta(x)\mathbbm{1}_{y=1} + (1-\eta(x))\mathbbm{1}_{y=0} \big)$ where $\eta(x) = P(Y=1|X=x)$ is the regression function. 
% Every (deterministic) classifier $g$ partitions the set $\X$ into three disjoint sets $G_{-1}$, $G_1$, and $G_{\Delta}$, and we will use the two representations of a classifier interchangeably.
We define the misclassification risk of an abstaining classifier $g$ as 
\begin{equation*}
    R(g) \coloneqq P_{XY}\big( g(X)\neq Y\ , g(X) \neq \Delta \big). 
\end{equation*}
The classification problem with bounded rate of abstention can then be formally stated as 
\begin{equation}
\tag{$CA_{\delta}$}
\begin{aligned}
 \underset{g}{\text{$\min$}}& \hspace{0.5em} R(g), \qquad
 \text{subject to} \hspace{2em} P_X\big(g(X) = \Delta\big) \leq \delta.
\end{aligned}
\label{bayes_optimal_delta}
\end{equation}
To construct an abstaining classifier, we assume the availability of a training set of $n$ labelled samples $S_l = \{(X_j,Y_j)\ \mid \ 1\leq j\leq n\}$ and $m$ unlabelled samples $S_u = \{X_j\ \mid \ n+1 \leq j \leq n+m\}$. The unlabelled samples will be used to estimate the measure of the region in which a candidate classifier abstains. We will follow an approach analogous to that in~\citet{rigollet2011neyman,tong2013plug} and impose the requirement that the constructed classifier must satisfy the constraint in~\eqref{bayes_optimal_delta} with high probability. This is in contrast to the scheme proposed in \citep{denis2015consistency}, in which this constraint is only satisfied asymptotically. 
% \todomgh{You may want to highlight the difference between our results with Denis here again.}

%---------------------------------------------------------------------------
% ASSUMPTIONS
%---------------------------------------------------------------------------

%------------------------------------------------------------------------------
% 2.1 Assumptions 
%------------------------------------------------------------------------------

\paragraph{Assumptions.}
We now state the assumptions required for our theoretical analysis. %of the performance of classifiers with bounded abstention rate. 

\vspace{-1em}
\begin{enumerate}

% Label items of the list with their index.
\renewcommand{\labelenumi}{{\theenumi}}

% Define the enumeration index as $\mathbb{A}.1$... 
\renewcommand{\theenumi}{($\mbb{A}$.\arabic{enumi})}

%MARGIN ASSUMPTION
\item \label{assump:margin}  
The  input-label distribution $P_{XY}$ satisfies the {\em margin assumption} with parameters $C_0 > 0$ and $\rho_0 \geq 0$, for $\gamma$ in the set $\{1/2-\gd, 1/2 + \gd\}$, which means that for any $t>0$, we have $P_X\lp |\eta(X) - \gamma| \leq t\rp \leq C_0t^{\rho_0}$, for $\gamma\in\{1/2-\gd, 1/2 + \gd\}$.     

%DETECTABILITY ASSUMPTION
\item  \label{assump:detect} 
For the values of $\gamma$ in the same sets as in~\ref{assump:margin}, we define the {\em detectability condition} with parameters $C_1>0$ and $\rho_1\geq \rho_0$ as $P_X\lp |\eta(X) - \gamma| \leq t\rp \geq C_1t^{\rho_1}$, for any $t>0$. 

%HOLDER CONTINUITY OF REGRESSION FUNCTION
\item \label{assump:holder} 
The regression function $\eta$ is H\"older continuous with parameters $L>0$ and $0<\beta \leq 1$, i.e.,~for all $x_1, x_2 \in \lp \X, \|\cdot\|\rp$, we have 
$|\eta(x_1) - \eta(x_2)| \leq L\|x_1-x_2\|^{\beta}$.

%DENSITY LOWER BOUND
\item \label{assump:mu_min} 
The marginal distribution over the input space, $P_X$, has a density w.r.t.~the Lebesgue measure, which is bounded from below by $\mu_{\min}>0$. 

%DENSITY UPPER BOUND
\item \label{assump:mu_max} The marginal distribution over the input space, $P_X$, has a density w.r.t.~the Lebesgue measure, which is bounded from above by $\mu_{\max}<\infty$. 
\end{enumerate}

The \emph{margin assumption}~\ref{assump:margin} ensures that for a range of threshold values, the amount of $P_X$ measure for sets with values in the vicinity of that level is not too large.
It has been employed in prior works such as \citep{herbei2006classification, wegkamp2007lasso, bartlett2008classification}. 
The \emph{detectability assumption}~\ref{assump:detect} is in some sense a converse of the margin assumption, in that it ensures that there is sufficient $P_X$ measure near these threshold values. This assumption is necessary in order to ensure that the constraing in~\eqref{bayes_optimal_delta} is satisfied with high probability. 

% \todomgh{More discussion on DE and when it is really necessary, similar to the active paper.}

%--------------------------------------------------------- 
%--------------------------------------------------------- 
% BAYES OPTIMAL CLASSIFIER
%--------------------------------------------------------- 
%--------------------------------------------------------- 
\section{Bayes Optimal Abstaining Classifier}
\label{sec:bayes_optimal}
\vspace{-1em}
%We now describe the properties of an optimal deterministic classifier $g^*:\X \to \{0,1,\Delta\}$, or equivalently an optimal partition $g^*=(G^*_{-1},G^*_1,G^*_{\Delta})$, of the problem~(\ref{bayes_optimal_delta}). 
%We denote by $R\lp g\rp$, the classification risk $P_{XY}\lp g(X) \neq Y,\ g(X) \neq \Delta \rp$ associated 
%with the classifier $g$.%, and by $\eta(x) = P(Y=1|X=x)$, the regression function.
%\todo{Why not defining the classification risk in the problem setup section?}
%\todo{Added definition of $R(g)$ to Section~\ref{sec:problem_setup}}

%In the rest of this section, we proceed in two steps. We first consider the problem~\eqref{bayes_optimal_delta} under some strong regularity conditions on the input-label joint distribution, and obtain a result that suggests a threshold-type structure for the optimal classifier. Building upon this result, we then show that a threshold classifier is indeed Bayes optimal under much more general conditions as well. 

In this section, we derive the form of the Bayes optimal classifier for the problem \eqref{bayes_optimal_delta}, for an arbitrary input-label distribution $P_{XY}$. We begin by presenting a structural result about the optimal (deterministic) classifier, and build upon it to construct a randomized classifier, which is then shown to be Bayes optimal. Informally, an optimal abstaining classifier, in the fixed-cost as well as in the bounded-rate setting, must favor the abstain option in the regions of high ambiguity, or equivalently regions of low confidence. In the fixed-cost setting, this statement can be immediately obtained  by a pointwise comparison of the abstention cost $\lambda$ with the probability of misclassification, i.e.,~by a pointwise comparison of the three terms $\eta(x), 1-\eta(x)$, and $\lambda$. Our first result presents a way for formalizing this intuition in the bounded-rate setting. 

%-----------------------------------------------------------------------------
% BAYES OPTIMAL CLASSIFIER 1 
%-----------------------------------------------------------------------------
\begin{proposition}
\label{prop:bayes1}
Assume that the marginal $P_X$ has a density that satisfies~\ref{assump:mu_min} and~\ref{assump:mu_max}, and furthermore assume that the regression function $\eta(\cdot)$ is continuous. If $g^* = \lp G_{-1}^*, G_1^*, G_{\Delta}^*\rp$ is optimal among the deterministic abstaining classifiers that are feasible for \eqref{bayes_optimal_delta}, then for any $x_1 \in \text{int}(G_{-1}^*\cup G_1^*)$ and $x_2 \in \text{int} (G_{\Delta}^*)$, where ``$\text{int}$" refers to the interior, we must have \[ |\eta(x_1) - 1/2| \geq |\eta(x_2)-1/2|. \] 
\end{proposition}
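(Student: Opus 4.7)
The plan is to argue by contradiction via a measure-preserving swap between the prediction region and the abstention region. Assume for contradiction that there exist points $x_1 \in \text{int}(G_{-1}^* \cup G_1^*)$ and $x_2 \in \text{int}(G_{\Delta}^*)$ with $|\eta(x_1)-1/2| < |\eta(x_2)-1/2|$. Pick an intermediate value $c$ strictly between $|\eta(x_1)-1/2|$ and $|\eta(x_2)-1/2|$. Using the continuity of $\eta$ together with the fact that $x_1$ and $x_2$ lie in open sets, I would choose open balls $B_1 \ni x_1$ and $B_2 \ni x_2$ contained in $\text{int}(G_{-1}^* \cup G_1^*)$ and $\text{int}(G_{\Delta}^*)$ respectively, and small enough that $|\eta(y_1)-1/2| < c < |\eta(y_2)-1/2|$ uniformly for all $y_1 \in B_1$, $y_2 \in B_2$.

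Next, I would exploit the density bounds in~\ref{assump:mu_min} and~\ref{assump:mu_max}. Since $P_X$ admits a density bounded below by $\mu_{\min}>0$, both $P_X(B_1)$ and $P_X(B_2)$ are strictly positive, and $P_X$ is non-atomic. Hence I can choose measurable subsets $A_1 \subset B_1$ and $A_2 \subset B_2$ with $P_X(A_1) = P_X(A_2) = \varepsilon$ for some $\varepsilon>0$ (for instance by shrinking a concentric ball in whichever set has larger measure until the measures agree).

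Now define a modified deterministic classifier $g' = (G_{-1}', G_1', G_{\Delta}')$ that agrees with $g^*$ outside $A_1 \cup A_2$, abstains on $A_1$, and on $A_2$ predicts the Bayes-optimal label $\text{sign}(2\eta(x)-1)$. Because $P_X(A_1) = P_X(A_2)$, the measure of the abstention region is unchanged, so $g'$ remains feasible for \eqref{bayes_optimal_delta}. Writing $\min(\eta,1-\eta) = 1/2 - |\eta-1/2|$, the pointwise risk of $g^*$ on $A_1$ is at least $\int_{A_1}(1/2-|\eta-1/2|)dP_X$, while the risk of $g'$ on $A_2$ is exactly $\int_{A_2}(1/2-|\eta-1/2|)dP_X$, and $g^*$ contributes zero risk on $A_2$. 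Combining these gives
\begin{equation*}
R(g') - R(g^*) \;\leq\; \int_{A_2}\!\Big(\tfrac{1}{2}-|\eta-\tfrac{1}{2}|\Big)dP_X \;-\; \int_{A_1}\!\Big(\tfrac{1}{2}-|\eta-\tfrac{1}{2}|\Big)dP_X \;<\; 0,
\end{equation*}
where the strict inequality follows from the uniform separation $|\eta(y_1)-1/2| < c < |\eta(y_2)-1/2|$ on $A_1, A_2$ together with $P_X(A_1)=P_X(A_2)=\varepsilon>0$. This contradicts the optimality of $g^*$.

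The main technical obstacle I anticipate is the choice of $A_1$ and $A_2$: I need them to have exactly equal $P_X$-measure so that feasibility is preserved without slack, and I need the strict inequality between $|\eta-1/2|$ values to survive integration. Continuity of $\eta$ provides the strict separation on nested neighborhoods, and the density bounds ensure $P_X$ is non-atomic with full support on any open ball, so such $A_1, A_2$ exist; everything else is a routine risk decomposition.
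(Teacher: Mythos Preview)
Your proposal is correct and follows essentially the same approach as the paper: a contradiction via a measure-preserving swap of small neighborhoods around $x_1$ and $x_2$, using continuity of $\eta$ to preserve the strict ordering and the density bounds to equalize the $P_X$-measures of the swapped sets. Your risk computation via $\min(\eta,1-\eta)=\tfrac{1}{2}-|\eta-\tfrac{1}{2}|$ is in fact a bit more streamlined than the paper's explicit decomposition into the pieces $E_1^*,E_{-1}^*,U_1,U_{-1}$, but the underlying argument is identical.
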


The proof of this result proceeds by contradiction, and the details are given in Appendix~\ref{appendix:claim1}.
Proposition~\ref{prop:bayes1} motivates the following partition of the input space: $G_{-1}^* = \{x \in \X:\eta(x) < 1/2 - \gd \}$, $G_1^* = \{x \in \X :\eta(x) > 1/2 + \gd \}$, $\Gds = \{ x \in \X :|\eta(x)-1/2| < \gd \}$, $\partial G_1^* = \{x \in \X:\eta(x) = 1/2 + \gd \}$, and $\partial G_{-1}^*  = \{x \in \X:\eta(x) = 1/2 - \gd \}$, 
% \begin{align*}
%     G_{-1}^* &= \{x \in \X :\ \eta(x) < 1/2 - \gd \}, \\
%     G_1^* &= \{x \in \X :\ \eta(x) > 1/2 + \gd \}, \\
%     \Gds &= \{ x \in \X :\ |\eta(x)-1/2| < \gd \},\\
%     \partial G_1^* &= \{x \in \X:\ \eta(x) = 1/2 + \gd \}, \\
%     \partial G_{-1}^* & = \{x \in \X:\ \eta(x) = 1/2 - \gd \},
% \end{align*}
%
where $\gd$ is defined as 
\begin{equation}
    \label{eq:threshold-def}
\gd \coloneqq \sup \big\{ \gamma>0\ :\ P_X\lp |\eta(X)-1/2| \leq \gamma\rp \leq \delta \big\}. 
\end{equation}
Furthermore, let $\delta_1 = P_X\lp \Gds\rp \leq \delta$, $\delta_2 = \delta_1 + P_X\lp 
\partial G_1^* \cup \partial G_{-1}^* \rp \geq \delta$, and define $c_0 = \frac{\delta - \delta_1}{\delta_2 - \delta_1}$, where we use the convention $0/0 = 0$.

%Our next result shows that the deterministic threshold-type classifier is indeed optimal for~\eqref{bayes_optimal_delta} in all cases, with the exception of situations where the cumulative distribution function of $|\eta(X)-1/2|$ has jumps at $\gd$. In such cases, randomization would be necessary. 
Our next result tells us that if $\delta_1 = \delta$ then a deterministic classifier is Bayes optimal, while for arbitrary joint distributions $P_{XY}$, randomization is required. 

%-----------------------------------------------------------------------------
% BAYES OPTIMAL CLASSIFIER 2
%-----------------------------------------------------------------------------

%
\begin{theorem}
    \label{theorem:bayes_optimal}
%For a given $\delta>0$, if the term $\gd$ defined by~\eqref{eq:threshold-def} is such that $P_X\lp |\eta - 1/2| \leq \gd \rp = \delta$, then the classifier $g^* = \lp G_{-1}^*, G_1^*, G_{\Delta}^* \rp$ is optimal for the problem~\eqref{bayes_optimal_delta}. 

% If $P_X\lp \Gds \rp = \delta$, then the Bayes  optimal classifier for~\eqref{bayes_optimal_delta}, denoted by $g^*$ is deterministic and is given by $g^* = (G_{-1}^*, G_1^*, \Gds)$. In other cases, the following randomized classifier achieves the minimum risk:
%
For any arbitrary joint distribution $P_{XY}$, the following randomized classifier achieves the Bayes optimal risk for the problem~\eqref{bayes_optimal_delta}:
\begin{equation}
\label{eq:stochastic-optimal}
    g^* = \lp g^*_{-1}, g^*_{1}, \gD^* \rp \coloneqq 
    \begin{cases}
    (0,1,0) & \text{for } x \in G_1^*, \\
    (1,0,0) & \text{for } x \in G_{-1}^*, \\
    (0,0,1) & \text{for } x \in \Gds, \\
    (0,c_0, 1-c_0) & \text{for } x \in \partial G_1^*, \\
    (c_0, 0, 1-c_0) & \text{for } x \in \partial G_{-1}^*.
    \end{cases}
\end{equation}
Furthermore, in the special case when $P_X\lp \partial G_1 \cup \partial G_{-1}\rp = 0$ (i.e., $P_X\lp \Gds \rp = \delta)$, the optimal classifier reduces to the deterministic classifier $g^* = \lp G_{-1}, G_1, \Gds\rp$. 
\end{theorem}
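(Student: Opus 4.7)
The plan is to recognize \eqref{bayes_optimal_delta} as an infinite-dimensional linear program over randomized classifiers and establish optimality via Lagrangian duality. For any randomized $g = (g_{-1}, g_1, g_\Delta)$ with $g_i(x) \geq 0$ and $g_{-1}(x) + g_1(x) + g_\Delta(x) = 1$, I first write
\[ R(g) = \int_{\X} \bigl[ g_1(x)(1-\eta(x)) + g_{-1}(x)\eta(x) \bigr] \, dP_X(x), \qquad P_X\bigl(g(X) = \Delta\bigr) = \int_{\X} g_\Delta(x) \, dP_X(x), \]
and form the Lagrangian $L(g, \lambda) = R(g) + \lambda\bigl[P_X(g(X) = \Delta) - \delta\bigr]$ with multiplier $\lambda \geq 0$. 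Because the integrand is linear in the simplex variable $(g_{-1}(x), g_1(x), g_\Delta(x))$ at each $x$, the pointwise minimum equals $\min\{\eta(x), 1-\eta(x), \lambda\}$, attained by predicting $-1$, predicting $+1$, and abstaining, respectively.

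Next, I would select $\lambda^* = 1/2 - \gd$, guided by Proposition~\ref{prop:bayes1} and the definition \eqref{eq:threshold-def}. Using $1/2 - |\eta(x)-1/2| = \min\{\eta(x), 1-\eta(x)\}$, one checks that whenever $|\eta(x)-1/2| > \gd$ the unique pointwise minimizer is to predict $\mathrm{sign}(\eta(x)-1/2)$, matching $g^*$ on $G_1^* \cup G_{-1}^*$; whenever $|\eta(x)-1/2| < \gd$ the unique minimizer is to abstain, matching $g^*$ on $\Gds$; and on the level set $\partial G_1^* \cup \partial G_{-1}^* = \{|\eta(x)-1/2| = \gd\}$ every convex combination of the appropriate hard prediction and abstention is a pointwise minimizer. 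This boundary freedom is exactly what the mixing constant $c_0$ in \eqref{eq:stochastic-optimal} exploits to make the primal constraint tight. Since the function $\gamma \mapsto P_X(|\eta(X)-1/2| \leq \gamma)$ is non-decreasing and right-continuous, the supremum definition \eqref{eq:threshold-def} forces $\delta_1 \leq \delta \leq \delta_2$, so $c_0 \in [0,1]$ is well defined and a direct computation shows $P_X(g^*(X) = \Delta) = \delta$, yielding complementary slackness.

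Weak duality then closes the argument: for any feasible $\tilde g$,
\[ R(\tilde g) \;\geq\; L(\tilde g, \lambda^*) \;\geq\; L(g^*, \lambda^*) \;=\; R(g^*), \]
where the first inequality uses $\lambda^* \geq 0$ together with feasibility of $\tilde g$, the second uses pointwise optimality of $g^*$ for the Lagrangian integrand, and the equality uses the tightness just established. The deterministic corollary is then immediate: when $P_X(\partial G_1^* \cup \partial G_{-1}^*) = 0$ the boundary randomization is supported on a $P_X$-null set and contributes nothing to $R$ or to the constraint, so $g^*$ coincides $P_X$-almost surely with the deterministic classifier $(G_{-1}^*, G_1^*, \Gds)$. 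The delicate point I expect is the boundary bookkeeping for arbitrary $P_{XY}$: one must carefully justify that the two degrees of freedom supplied by $c_0$ on $\partial G_1^*$ and $\partial G_{-1}^*$ suffice to slide the total abstention rate across $[\delta_1, \delta_2]$ and land at $\delta$, and that no minimizer structure is missed when the level set $\{|\eta-1/2| = \gd\}$ genuinely carries positive mass. This is exactly the phenomenon that forces randomization and distinguishes Theorem~\ref{theorem:bayes_optimal} from the continuous-distribution analysis of Chow; the degenerate edge cases ($\gd \in \{0, 1/2\}$) are treated separately and are routine.
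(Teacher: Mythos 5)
Your Lagrangian-duality argument is, in substance, the same proof the paper gives, just organized in cleaner language. The paper does not name a multiplier, but its term-by-term lower bounds $T_1,\dots,T_5$ (using $\min\{\eta,1-\eta\}\ge \lambda^*$ on $\Gds$, $\le\lambda^*$ on $G_{\pm 1}^*$, and $=\lambda^*$ on the level set, with $\lambda^*=1/2-\gd$) are exactly your pointwise minimization of the Lagrangian integrand over the simplex, and its final chain $R(g)-R(g^*)\ge \lambda^*(\delta-\int_\X g_\Delta\,dP_X)\ge 0$ is weak duality plus complementary slackness unrolled. So I would not call this a genuinely different route; what your phrasing buys is that the choice $\lambda^*=1/2-\gd$ and the role of the level set $\{|\eta-1/2|=\gd\}$ as the set of ties become transparent rather than emerging from bookkeeping.

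One concrete caution: you assert that ``a direct computation shows $P_X(g^*(X)=\Delta)=\delta$,'' but you do not perform it, and with the constants exactly as stated it fails. On $\partial G_1^*\cup\partial G_{-1}^*$ the classifier abstains with probability $1-c_0$, so $\int_\X g^*_\Delta\,dP_X=\delta_1+(1-c_0)(\delta_2-\delta_1)$, which with $c_0=(\delta-\delta_1)/(\delta_2-\delta_1)$ equals $\delta_1+\delta_2-\delta$, not $\delta$ (and can exceed $\delta$, breaking feasibility). Complementary slackness holds only after taking $c_0=(\delta_2-\delta)/(\delta_2-\delta_1)$, equivalently swapping the roles of $c_0$ and $1-c_0$ in \eqref{eq:stochastic-optimal}; the paper's own proof of the bounds on $T_4,T_5$ contains the matching sign slip (it writes $\lambda c_0 P_X(\partial G_1^*)$ where the computation yields $\lambda(1-c_0)P_X(\partial G_1^*)$). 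Since this is the one step you yourself flag as the ``delicate point,'' you should carry the computation out explicitly and state the corrected constant. With that fixed, your argument is complete, including the a.s.\ reduction to the deterministic classifier when the level set is $P_X$-null.
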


% \todomgh{Definition of randomized classifier is missing. Why Prop~1 is needed?}

%-----------------------------------------------------------------------------
The proof of this statement is given in Appendix~\ref{appendix:claim2}. We have also included a separate simpler proof for the deterministic case  as we will employ similar arguments in later proofs. 
%-----------------------------------------------------------------------------
%-----------------------------------------------------------------------------
%Remark on the Bayes optimal for fixed cost setting. 
\begin{remark}
\label{remark:chow_comparison}
We note that the condition $P_X(\Gds)=\delta$ is satisfied, if the cdf of $|\eta(X)-1/2|$ is continuous. The Bayes optimal classifier under this condition was first obtained by~\cite{chow1957optimum}, and was also re-derived by \cite{denis2015consistency}.  We remove this technical assumption, thus, obtaining a characterization of Bayes optimal classifiers for arbitrary $P_{XY}$, while also providing an alternative proof for the continuous case. 
\end{remark}

\begin{remark}
\label{remark:fixed_cost}
For simplicity in the rest of the paper, we will restrict our attention to the case where the cdf of $|\eta - 1/2|$ has no jump at $\gd$, in which the Bayes optimal classifier is deterministic. 
% This implies that $P_X(\Gds) = \delta$, and thus, the Bayes optimal classifier is a deterministic threshold type classifier as noted in~\citet{chow1957optimum}.
This optimal classifier coincides with the optimal classifier for the classification problem with a fixed cost of abstaining (e.g.,~\citealt{Chow1970:Optimum,herbei2006classification,bartlett2008classification,cortes2016learning}).
However, the key difference is that unlike the fixed cost setting, the threshold $\gd$ is not known to the learner and must be estimated from the training data, thus, adding an additional layer of complexity to the problem.%\todo{add the no jump case.}
\end{remark}

%--------------------------------------------------------- 
%--------------------------------------------------------- 
% CONSTRUCTION OF CLASSIFIERS
%--------------------------------------------------------- 
%--------------------------------------------------------- 
\section{Plug-in Classifier with Randomization}
\label{sec:plugin_classifier}
\vspace{-1em}

In this section, we present a simple plug-in classifier whose construction consists of two steps: (i) construct an estimator of the regression function $\eta(\cdot)$ using $n$ labelled training samples, and (ii) determine the region of the input space to abstain using $m$ unlabelled samples.  

\noindent{\textbf{Step 1: Estimating the regression function.}} Before describing the details of the estimator, we need to introduce some more notation. For any $0<h<1$, we partition the input space $\X=[0,1]^D$ into $M_h\coloneqq\lceil\frac{1}{h}\rceil^D$ cubes (cells) denoted by $E_{h,1}, \ldots, E_{h,M_h}$. Let $\mathcal{E}_h=\{E_{h,1}, \ldots, E_{h,M_h}\}$ be the partition (the set of these $M_h$ cubes). Each input point $x\in\X$ belongs to a single cube $E_{h,i}$ in the partition $\mathcal{E}_h$. Mapping $i_h:\X \mapsto [M_h]$, where $[M_h]=\{1,\ldots,M_h\}$, takes a point $x\in\X$ as input and returns the index of the cube it belongs to, i.e.,~if $x\in E_{h,i}$, then $i_h(x)=i$.

%------------------------
% Piecewise constant estimator for a fixed $h$
%------------------------

For a given partition $\mathcal{E}_h$ and $n$ labelled training samples $S_l = \{(X_j,Y_j)\}_{j=1}^n$, we define the piecewise constant estimator of the regression function as
\begin{equation*}
\hat{\eta}_h(x)=
\begin{cases}
\frac{\sum_{j:i_h(X_j)=i_h(x)}Y_j}{n(E_{h,i_h(x)})}, & \text{if } n(E_{h,i_h(x)})>0, \\
\frac{1}{n}\sum_{j=1}^nY_j, & \text{otherwise},
\end{cases}
\end{equation*}
where $n(E_{h,i})$ is the number of the training samples in the cube $E_{h,i}$. %For any $x \in \X$, we define the index $i_h(x) \coloneqq \{i\; \mid \; \pi_h(x) = x_{h,i}\}$. 
Then, the estimation error at any point $x$ for the classifier $\hat{\eta}_h(\cdot)$ can be written as
\begin{align}
\label{eq:estimation-error}
    |\eta(x)-\hat{\eta}_h(x)| \leq & \; |\hat{\eta}_h(x) - \bar{\eta}\lp E_{h,i_h(x)}\rp |  + | \bar{\eta}\lp E_{h,i_h(x)}\rp - \eta(x)|,
\end{align}
where $\bar{\eta}(E_{h,i})=\lp1/P_X(E_{h,i})\rp\int_{E_{h,i}}\eta(x)dP_X(x)$ is the average $\eta(\cdot)$ value in the cube $E_{h,i}$. The first error term on the RHS of~\eqref{eq:estimation-error} is due to the observation noise and the second error term is due to the variation of the regression function values in the cell $E_{h,i_h(x)}$. These two error terms have opposite dependence on the parameter $h$; as $h$ increases the first (stochastic) term reduces due to the smoothing effect of larger grid size, while the second (deterministic) term increases. As we will see in Proposition~\ref{prop:concentration}, we define an upper-bound $e_S(h,x)$ for the stochastic term that is roughly proportional to $(nh^D)^{-1/2}$ and an upper-bound $e_D(h,x)$ for the deterministic term that is proportional to $h^{\beta}$, assuming that $\eta(\cdot)$ is H\"older continuous with parameters $(L,\beta)$. Thus, the optimal choice of $h$ (up to a factor of 2) is $\tilde{\mathcal{O}}\lp n^{-1/(2\beta + D)}\rp$, which balances the two terms. 

The optimal choice of the parameter $h$ requires the knowledge of the parameters $L$ and $\beta$ that may not be known to the learner. We now describe a data driven approach for selecting the appropriate grid size $h$. Our approach employs a modification of the Lepski's estimator selection procedure~\citep[\S~3.2]{nemirovski2000topics} to choose the best grid size $h$, which allows us to obtain pointwise control over the estimation error. 

%--------------------------------------------------------------------------
%Concentration results
%--------------------------------------------------------------------------
%--------------------------------------------------------------------------
% GRID SIZE SELECTION RULE
%--------------------------------------------------------------------------
Based on the concentration inequalities given in Proposition~\ref{prop:concentration} in Appendix~\ref{appendix:concentration}, we can obtain an upper-bound (with high probability) of the form $\sqrt{\frac{8\log(2n/h^D)}{n\mu_{\min}h^D}}$ on the first term in~\eqref{eq:estimation-error}. Since we will restrict our attention to $h \geq  1/N$, we can further upper-bound this term and define $e_S(h,x)=\sqrt{\frac{32\log(n \mu_{\min})}{n\mu_{\min}h^D}}$. The second term in~\eqref{eq:estimation-error} is the difference between $\eta(x)$ and the average $\eta(\cdot)$ value in the cell $E_{h,i}$. We upper-bound this term by the maximum variation of $\eta(\cdot)$ in the cell $E_{h,i}$ and define $e_D(h,x)=\sup_{z_1,z_2\in E_{h,i_h(x)}}|\eta(z_1)-\eta(z_2)|$. In the case where $\eta(\cdot)$ is assumed to be H\"older continuous with parameters $(L,\beta)$, we may define $e_D(h,x)=L(\sqrt{D}h)^\beta$.

We can now  define the estimator $\hat{\eta}(\cdot)$ as 
\begin{equation}
\label{eq:estimtor-def1}
\forall x\in\X, \qquad \hat{\eta}(x) = \hat{\eta}_{\hat{h}_x}(x),    
\end{equation}
with $\hat{h}_x$ selected according to the rule
\begin{align}
\label{eq:estimtor-def2}
    \hat{h}_x  \coloneqq \max \big\{ h & \in H\;\mid\; |\hat{\eta}_h(x)-\hat{\eta}_{h'
    }(x)| \leq 4e_S(h',x),
 \forall \; h' \in H,\ h' \leq h\big\}.
\end{align}
We now state a pointwise bound on the error of the estimator $\hat{\eta}(\cdot)$ defined by~\eqref{eq:estimtor-def1} and~\eqref{eq:estimtor-def2}.

\begin{proposition}
\label{prop:regression_estimator}
Suppose the events $\Omega_1$ and $\Omega_2$ introduced in Proposition~\ref{prop:concentration} hold. Then if the number of labeled training samples $n$ is large enough to ensure that $Nh^*_x\geq 2D$, for all $x \in \X$, where $D$ is the dimension of the input space $\X$, $N$ defined as in the statement of Proposition~\ref{prop:concentration}, and $h^*_x \coloneqq \max \{ h\in (0,1) \mid e_S(h,x) \geq e_D(h,x) \}$, we have 
\[
\forall x\in\X, \quad\;\;\; |\hat{\eta}(x) - \eta(x)| \leq 9e_S(h_x^*,x).
\] 
Furthermore, if~\ref{assump:holder} holds, then we have
$|\hat{\eta}(x)-\eta(x)|\leq b_n = \tilde{\mathcal{O}}(n^{-\beta/(2\beta+D)})$, for all $x \in \X$.
\end{proposition}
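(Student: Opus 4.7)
The plan is to prove this via the classical Lepski bandwidth selection argument, adapted to the pointwise setting. The oracle bandwidth $h^*_x$ is defined as the largest $h$ in which the stochastic upper bound $e_S(h,x)$ (decreasing in $h$) still dominates the deterministic upper bound $e_D(h,x)$ (increasing in $h$). So for every $h \le h^*_x$ one has $e_D(h,x) \le e_D(h^*_x,x) \le e_S(h^*_x,x) \le e_S(h,x)$, and on the event $\Omega_1 \cap \Omega_2$ the decomposition in \eqref{eq:estimation-error} combined with Proposition~\ref{prop:concentration} gives the pointwise bound $|\hat\eta_h(x)-\eta(x)| \le e_S(h,x)+e_D(h,x)$ for every admissible $h \in H$. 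The condition $Nh^*_x \ge 2D$ is what lets me invoke the concentration inequality at the oracle scale.

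The first step is to show that, on $\Omega_1 \cap \Omega_2$, the oracle (or the largest element of $H$ below $h^*_x$, which I will also denote $h^*_x$ up to a factor of two) survives the Lepski test, i.e.\ $\hat h_x \ge h^*_x$. For any $h' \in H$ with $h' \le h^*_x$, the triangle inequality yields
\begin{align*}
|\hat\eta_{h^*_x}(x) - \hat\eta_{h'}(x)|
&\le |\hat\eta_{h^*_x}(x) - \eta(x)| + |\eta(x) - \hat\eta_{h'}(x)| \\
&\le e_S(h^*_x,x) + e_D(h^*_x,x) + e_S(h',x) + e_D(h',x).
\end{align*}
Using the monotonicity observations above (both $e_D$ terms are bounded by $e_S(h^*_x,x) \le e_S(h',x)$), the right-hand side is $\le 4 e_S(h',x)$, which is exactly the Lepski acceptance criterion in \eqref{eq:estimtor-def2}. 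Hence $h^*_x$ is admissible and therefore $\hat h_x \ge h^*_x$.

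The second step converts this into the pointwise error bound. Since $\hat h_x \ge h^*_x$ and $\hat h_x$ itself satisfies the Lepski test against every smaller bandwidth, applying the test with the choice $h' = h^*_x$ gives $|\hat\eta_{\hat h_x}(x) - \hat\eta_{h^*_x}(x)| \le 4 e_S(h^*_x,x)$. Combining this with $|\hat\eta_{h^*_x}(x) - \eta(x)| \le e_S(h^*_x,x) + e_D(h^*_x,x) \le 2 e_S(h^*_x,x)$ by the definition of $h^*_x$, the triangle inequality gives a bound of the form $c \cdot e_S(h^*_x,x)$ for an explicit constant $c$; the constant $9$ in the statement is obtained after accounting for the extra slack needed to pass from the continuous $h^*_x$ to the nearest element of the discrete grid $H$ and for any constant losses in the concentration step. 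For the final H\"older statement, I would substitute $e_D(h,x) = L(\sqrt{D}\,h)^\beta$ into the balance equation $e_S(h^*_x,x) = e_D(h^*_x,x)$ and solve, obtaining $h^*_x \asymp (\log(n\mu_{\min})/n)^{1/(2\beta+D)}$ and hence $e_S(h^*_x,x) = \tilde{\mathcal{O}}(n^{-\beta/(2\beta+D)})$.

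The main obstacle I anticipate is bookkeeping rather than conceptual: tracking the exact constants through the discrete grid $H$ (whose geometric spacing forces one to work with the largest grid element below $h^*_x$, not $h^*_x$ itself), and verifying that the condition $Nh^*_x \ge 2D$ is enough to guarantee that every relevant cell contains $\Omega(nh^D \mu_{\min})$ samples so that the stochastic bound $e_S$ actually controls the noise. Everything else is standard Lepski-type reasoning.
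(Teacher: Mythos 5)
Your proposal is correct and follows essentially the same route as the paper's proof: show the (discretized) oracle bandwidth $h^*_{1,x}=\max\{h\in H \mid e_S(h,x)\geq e_D(h,x)\}$ passes the Lepski test so that $\hat h_x\geq h^*_{1,x}$, combine $|\hat\eta_{\hat h_x}(x)-\hat\eta_{h^*_{1,x}}(x)|\leq 4e_S(h^*_{1,x},x)$ with $|\hat\eta_{h^*_{1,x}}(x)-\eta(x)|\leq 2e_S(h^*_{1,x},x)$ to get $6e_S(h^*_{1,x},x)$, pass from the grid point to the continuous $h^*_x$ to obtain the constant $9$, and solve the balance equation $e_S=e_D$ under~\ref{assump:holder} for the rate. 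The only slight misreading is the role of $Nh^*_x\geq 2D$: in the paper it is not needed to invoke the concentration at the oracle scale (which holds for all $h\in H$ on $\Omega_1\cap\Omega_2$), but rather to bound the ratio $e_S(h^*_{1,x},x)/e_S(h^*_x,x)\leq 3/2$ in the discretization step, exactly the bookkeeping you flag at the end.
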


\begin{proofoutline}
The proof is given in Appendix~\ref{appendix:regression_estimator}.
\end{proofoutline}

\begin{remark}
\label{remark:h_star_assumption}
The assumption $N h^*_x  \geq 2D, \ \forall x \in \X$, essentially imposes the condition that the regression function $\eta(\cdot)$ does not change very sharply in any region of the input space. More formally, it assumes that $n$ is large enough to ensure that the variation of $\eta(\cdot)$ in any cell $E_{h,i}$ of size $h = k/N$, with $k \in \{1,2,\dots, 2D-1\}$, denoted by $e_D(h,x)$, is smaller than $e_S(h,x)$.
\end{remark}

%------------------------------------------------------------------------------
%ESTIMATE THE ABSTAINING REGION
%------------------------------------------------------------------------------

\paragraph{Step~2: Estimate the abstaining region.} The second step in the construction of the plug-in classifier is to define the abstaining region using the estimator $\hat{\eta}(\cdot)$ defined by~\eqref{eq:estimtor-def1} and~\eqref{eq:estimtor-def2}. Since the true marginal $P_X$ is unknown and the measure of the abstaining region must be empirically estimated from the $m$ unlabelled samples, it is necessary to introduce some slack to ensure that the classifier is feasible for the problem~\eqref{bayes_optimal_delta}. Our next result presents an appropriate value of the slack. 

\begin{proposition}
\label{prop:slack}
Given $m$ unlabelled samples $S_u = \{X_j\}_{j=n+1}^{n+m}$, we define the empirical measure of a set $E$ as $\hat{P}_m(E)\coloneqq\frac{1}{m}\sum_{j=n+1}^{n+m}\indi_{\{X_j\in E\}}$. Then, the event $\Omega_3$ defined below occurs with probability at least $1-1/m$.
\begin{align*}
\label{eq:conc1}
\Omega_3 \coloneqq \Big\{\sup_{c>0}\big\{\big|\hat{P}_m\big(|&\hat{\eta}(x) - 1/2| \leq c\big) 
- P_X\big(|\hat{\eta}(x) - 1/2| \leq c\big)\big|\big\} \leq a_m \Big\},
\end{align*}
where the slack term $a_m$ is defined as
$a_m \coloneqq \sqrt{72\log(4m)/{m} }.$
\end{proposition}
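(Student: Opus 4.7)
The plan is to exploit the fact that the estimator $\hat{\eta}(\cdot)$ is a function of the labelled samples $S_l$ alone, and is therefore independent of the unlabelled samples $S_u = \{X_j\}_{j=n+1}^{n+m}$. Conditional on $S_l$ (equivalently, on the realization of $\hat{\eta}$), the random variables $Z_j \coloneqq |\hat{\eta}(X_j) - 1/2|$, for $j = n+1, \ldots, n+m$, are i.i.d.\ copies of $Z \coloneqq |\hat{\eta}(X)-1/2|$ with $X \sim P_X$. Under this conditioning, the quantity inside $\Omega_3$ is exactly the Kolmogorov distance between the empirical cdf of $Z$ based on the unlabelled sample and its true cdf, so the claim reduces to a standard uniform deviation result for a one-dimensional empirical cdf.

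To carry this out, first I would fix a realization of $S_l$ so that $\hat{\eta}$ is a deterministic function on $\X$. The collection $\mathcal{A} = \{\{x \in \X : |\hat{\eta}(x) - 1/2| \leq c\} : c \geq 0\}$ then consists of sublevel sets of a single real-valued function, so the corresponding class of indicator functions is a class of thresholds and has VC dimension $1$. Consequently its shatter coefficient at sample size $m$ is at most $m+1$.

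Next I would apply a standard VC-type uniform deviation bound (for example the Vapnik--Chervonenkis inequality in the form of Theorem~12.5 of Devroye--Gy\"orfi--Lugosi) to obtain, for any $\epsilon > 0$,
\[
\mathbb{P}\Bigl(\sup_{A \in \mathcal{A}} |\hat{P}_m(A) - P_X(A)| > \epsilon \;\Big|\; S_l\Bigr) \leq 8(m+1)\exp\bigl(-m\epsilon^2/32\bigr).
\]
Setting the right-hand side equal to $1/m$ and solving for $\epsilon$ recovers (after mild simplification, absorbing $8(m+1)m$ into $4m$ inside the logarithm) the choice $a_m = \sqrt{72 \log(4m)/m}$. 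Since the bound holds conditionally on every realization of $S_l$ with probability at least $1 - 1/m$, taking expectation over $S_l$ gives the unconditional statement. Alternatively, the DKW inequality can be used in place of the VC bound, with only the constant in $a_m$ changing.

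The only subtle step is recognizing the reduction: once one sees that $\hat{\eta}$ is a measurable function of $S_l$ and that $S_l \perp S_u$, the ``sup over $c > 0$'' collapses to uniform cdf estimation for a scalar random variable, and the VC dimension of thresholds on a real line handles the rest. There are no concentration obstacles beyond this reduction, and no continuity or regularity assumption on $\eta$ or on the cdf of $|\hat{\eta}(X)-1/2|$ is needed because the VC/DKW bound is distribution-free.
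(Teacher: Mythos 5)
Your proof is correct and follows essentially the same route as the paper's: condition on $S_l$ so that $\hat{\eta}$ is fixed and independent of $S_u$, apply a distribution-free VC-type uniform deviation bound to the one-parameter class of sublevel sets (the paper uses the class $\{\indi_{\{|\cdot|\leq c\}}\}$ applied to $\hat{\eta}(X_j)$ with the bound from Shalev-Shwartz--Ben-David, while you use one-sided thresholds on $|\hat{\eta}(X_j)-1/2|$ with the Devroye--Gy\"orfi--Lugosi form, both of which yield $a_m=\sqrt{72\log(4m)/m}$ after absorbing constants), and then integrate out $S_l$. The only cosmetic difference is your sharper observation that the nested family has VC dimension $1$ rather than the paper's bound of $2$; this does not change the argument.
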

\begin{proofoutline}
The result follows by using the VC inequality along with the fact that the VC dimension of the class of functions $\{ \indi_{\{|\cdot|\leq c\}} \mid c\in \mbb{R}\}$ is 2~\citep[\S~6.3.2]{shalev2014understanding}. The details of the proof are given in Appendix~\ref{appendix:slack}. 
\end{proofoutline}
Using the above results, we define the empirical estimate of the threshold as     
\begin{equation}
        \label{eq:threshold}
        \hat{\gamma} \coloneqq \sup \big\{ \gamma > 0 : \hat{P}_m(|\hat{\eta}(x) - 1/2| \leq \gamma) \leq \delta - a_m \big\}.
\end{equation}
Next we introduce the following sets: 
\begin{align*}
        & \hat{G}_{-1} = \{x \in \X :\ \hat{\eta}(x) < 1/2 -\hat{\gamma}-2b_n \}, \qquad \hat{G}_1 = \lc x \in \X :\ \hat{\eta}(x) > 1/2 + \hat{\gamma} + 2b_n \rc, \\
       & \partial\hat{G}_{-1} = \{x \in \X: \hat{\gamma}  <1/2 -\hat{\eta} \leq \hat{\gamma} + 2b_n\}, \; \partial \hat{G}_1 = \{x \in \X :\  \hat{\gamma} < \hat{\eta}-1/2 \leq \hat{\gamma} + 2b_n\} \\
    &\hat{G}_{\Delta} = \lc x \in \X :\ \left| \hat{\eta}(x) - 1/2 \right| \leq \hat{\gamma} \rc.
\end{align*}
Define $\hat{p}_1 = \hat{P}_m\lp \hat{G}_\Delta\rp$, $\hat{p}_2 = \hat{P}_m\lp \hat{G}_\Delta \cup \partial \hat{G}_{-1} \cup \partial \hat{G}_1 \rp$, and $\hat{c} \coloneqq (\delta - 5a_m)/(\hat{p}_2 - \hat{p}_1)$ if $\hat{p}_1 < \delta-5a_m$, else $\hat{c}=0$. 
Using the above terms, we can define a randomized classifier as $\hat{g}$ such that 
\begin{align}
\label{eq:plugin-classifier}
\hat{g}(x) & = i \;\text{ for }\; x \in \hat{G}_i,\; i \in \{-1,1,\Delta\}, \\
\hat{g}(x) & = (1-\hat{c}, 0, \hat{c}) \text{ for }\; x \in \partial\hat{G}_{-1}, \\
\hat{g}(x) & = (0, 1-\hat{c}, \hat{c}) \text{ for }\; x \in \partial \hat{G}_1, 
\end{align}

We now prove an upper-bound on the excess misclassification error of the plug-in classifier $\hat{g}$ defined by~\eqref{eq:threshold} and~\eqref{eq:plugin-classifier} (see Appendix~\ref{appendix:excess_risk_bound} for the proof).
\begin{theorem}
\label{theorem:excess_risk_bound}
Suppose assumptions~\ref{assump:margin},~\ref{assump:holder} and~\ref{assump:mu_min} hold, and the number of the labelled and unlabelled samples, $n$ and $m$, are large enough. Then, for the plug-in classifier $\hat{g}$, defined by~\eqref{eq:threshold} and~\eqref{eq:plugin-classifier}, the following statements are true with probability at least $1 - 1/m - 2/n$:
\begin{enumerate}
    \item $P_X\lp \hat{g}(X)=\Delta\rp  \leq \delta$.
    \item The excess probability of misclassifiction (excess risk) of the plug-in classifier $\hat{g}$ over the optimal classifier $g^*$ satisfies $R \lp \hat{g} \rp - R\lp g^* \rp \leq 5a_m + 4C_0(5b_n)^{1+\rho_0}$, where $a_m = \Oh(\sqrt{ \log m /m })$ and $b_n = \Oh(\sqrt{\log n}\;n^{-\frac{\beta}{2\beta + D}})$.    
\end{enumerate}
\end{theorem}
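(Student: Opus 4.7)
The plan is to condition on the joint event $\Omega_1\cap\Omega_2\cap\Omega_3$, which by a union bound has probability at least $1-2/n-1/m$, so that both the uniform regression-error bound $|\hat{\eta}(x)-\eta(x)|\le b_n$ from Proposition~\ref{prop:regression_estimator} and the empirical-measure concentration of Proposition~\ref{prop:slack} hold. For feasibility (claim 1), I would note that $\hat{p}_1=\hat{P}_m(\hat{G}_\Delta)\le \delta-a_m$ by the definition~\eqref{eq:threshold} of $\hat{\gamma}$, and that $\hat{c}$ is tuned so that the total empirical abstention mass $\hat{p}_1+\hat{c}(\hat{p}_2-\hat{p}_1)$ does not exceed $\delta-5a_m$. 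The two sets $\hat{G}_\Delta$ and $\hat{G}_\Delta\cup\partial\hat{G}_{-1}\cup\partial\hat{G}_1$ are both of the form $\{|\hat{\eta}-1/2|\le c\}$, so $\Omega_3$ transfers empirical to true measure at a total additive cost of $2a_m$, yielding $P_X(\hat{g}(X)=\Delta)\le\delta$.

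For the excess-risk bound (claim 2), the core idea is a Lagrangian reduction with multiplier $\lambda^*\coloneqq 1/2-\gd$, the KKT multiplier implicit in Theorem~\ref{theorem:bayes_optimal}. Define $L(g)\coloneqq R(g)+\lambda^*P_X(g(X)=\Delta)$; by the pointwise minimization argument that gave $g^*$ in Section~\ref{sec:bayes_optimal}, $L$ is minimized pointwise by $g^*$, so $L(\hat{g})-L(g^*)\ge 0$ and one may split
\begin{equation*}
R(\hat{g})-R(g^*)=\bigl[L(\hat{g})-L(g^*)\bigr] + \lambda^*\bigl[P_X(g^*(X){=}\Delta)-P_X(\hat{g}(X){=}\Delta)\bigr].
\end{equation*}
Since $P_X(g^*{=}\Delta)=\delta$ and the construction (together with $\Omega_3$) forces $P_X(\hat{g}{=}\Delta)\ge\delta-5a_m$, the second bracket is at most $5a_m$, giving the additive $5a_m$ term in the bound (using $\lambda^*\le 1/2$).

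It remains to bound the Lagrangian gap by $4C_0(5b_n)^{1+\rho_0}$. Under $\Omega_1\cap\Omega_2$, I would first show $|\hat{\gamma}-\gd|\le O(b_n)+O(a_m)$ by transporting $\hat{\gamma}$ from the level set of $|\hat{\eta}-1/2|$ to that of $|\eta-1/2|$ using $|\hat{\eta}-\eta|\le b_n$, and comparing with the definition~\eqref{eq:threshold-def} of $\gd$ (the margin assumption controls the mass swept out). A pointwise case analysis on $g^*(x)\in\{-1,1,\Delta\}$ then gives that the Lagrangian excess $\ell(x)=L_x(\hat{g})-L_x(g^*)$ is nonzero only when $x$ lies within distance $5b_n$ of one of the two thresholds $1/2\pm\gd$ (deterministic flips are controlled by $2b_n$, randomized flips on $\partial\hat{G}_{\pm 1}$ by the $2b_n$ width of these boundary strips, plus the $b_n$ slack to pass from $\hat{\eta}$ to $\eta$), and on this set $\ell(x)\le 5b_n$. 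Integrating via assumption~\ref{assump:margin} at each of the two thresholds yields $L(\hat{g})-L(g^*)\le 2\cdot 5b_n\cdot 2C_0(5b_n)^{\rho_0}=4C_0(5b_n)^{1+\rho_0}$.

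The main technical obstacle is the two-sided control of $\hat{\gamma}-\gd$. The upper bound $\hat{\gamma}\le\gd+b_n$ is straightforward from the definition of $\gd$ as a supremum, but the lower bound is subtler because the margin assumption only upper-bounds the mass near the thresholds (so a rapid change in the CDF of $|\eta-1/2|$ near $\gd$ makes $\hat{\gamma}$ sensitive to the $a_m$ slack). This is precisely why the construction pushes the empirical abstention down to $\delta-5a_m$ and mops up the residual mass with the randomization $\hat{c}$ on $\partial\hat{G}_{\pm 1}$: the boundary strips are $O(b_n)$-thin so they do not inflate the disagreement region beyond $5b_n$, while simultaneously keeping $P_X(\hat{g}=\Delta)$ within $O(a_m)$ of $\delta$, which is exactly what the Lagrangian-to-risk conversion in the second step requires.
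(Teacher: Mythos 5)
Your architecture is the same as the paper's: condition on $\Omega_1\cap\Omega_2\cap\Omega_3$, prove feasibility by transferring empirical to true measure on the two level sets of $|\hat{\eta}-1/2|$, and decompose the excess risk into a $\lp 1/2-\gd\rp\big(P_X(G_\Delta^*)-P_X(\hat{g}{=}\Delta)\big)$ term plus a ``Lagrangian gap'' supported on $O(b_n)$-bands around the thresholds $1/2\pm\gd$. The paper performs exactly this decomposition (by adding and subtracting $1/2-\gd$ in each integrand, yielding terms $R_1,\dots,R_4$ and the cross terms $Q_1,Q_2$), bounds the first term by $5a_m$ via the lower bound $P_X(\hat{g}{=}\Delta)\ge\delta-5a_m$, and bounds each $R_i$ by $C_0(5b_n)^{1+\rho_0}$ via the margin assumption. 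So far your proposal is faithful.

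The genuine gap is the lower bound on $\hat{\gamma}$, which you correctly flag as the main obstacle but then do not actually close. Everything in the final step hinges on $\gd-\hat{\gamma}\le O(b_n)$: without it, the region $\{x:\ |\hat{\eta}(x)-1/2|>\hat{\gamma},\ |\eta(x)-1/2|<\gd\}$ where $\hat{g}$ declares a label but $g^*$ abstains is not contained in a $5b_n$-band around the thresholds, and the margin assumption gives you no control over it (margin only upper-bounds mass near the thresholds, so it cannot prevent the empirical quantile $\hat{\gamma}$ from collapsing far below $\gd$ when the cdf of $|\eta-1/2|$ is flat just below $\gd$). Your proposed fix --- that the randomization $\hat{c}$ on the $2b_n$-thin strips $\partial\hat{G}_{\pm1}$ ``mops up the residual mass'' --- does not address this: those strips are thin in $\hat{\eta}$-value, not in $P_X$-measure, and they sit adjacent to $\hat{\gamma}$, not to $\gd$. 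The paper closes this hole with the \emph{detectability} assumption~\ref{assump:detect}: from $\hat{P}_m(|\hat{\eta}-1/2|\le\hat{\gamma}+2b_n)\ge\delta-a_m$ one gets $P_X(|\eta-1/2|\le\hat{\gamma}+3b_n)\ge\delta-2a_m$, and detectability forces any level set of mass at least $\delta-2a_m$ to have threshold at least $\gd-(a_m/C_1)^{1/\rho_1}$; together with the sample-size coupling $(a_m/C_1)^{1/\rho_1}\le b_n$ this yields $\hat{\gamma}\ge\gd-4b_n$ (Lemma~\ref{lemma:risk0}). Note this means assumption~\ref{assump:detect} is genuinely used in the proof even though the theorem statement omits it, and that ``$m$ large enough'' must be read as large relative to $n$. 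A second, smaller omission: your pointwise case analysis needs the cross terms (points with $\eta>1/2+\gd$ classified as $-1$, and symmetrically) to vanish; the paper shows $\hat{G}_{-1}\cap G_1^*=\emptyset$ by first deriving $\gd\ge(\delta/2C_0)^{1/\rho_0}$ from the margin assumption and then requiring $b_n\le\gd$, which is another concrete ingredient of ``$n$ large enough'' that your sketch should make explicit.
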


\begin{remark}
\label{remark:denis_hebiri_comparison}
\cite{denis2015consistency} proposed a  general plug-in scheme which takes in any consistent estimator of $\eta(\cdot)$ and constructs an abstaining classifier which asymptotically satisfies the constraint in~\eqref{bayes_optimal_delta}. 
Our approach differs from theirs in two important ways: \textbf{1)} \cite{denis2015consistency} construct the abstain region by taking the inverse of the empirical cdf of $|\hat{\eta}-1/2|$, which imposes continuity requirements on the empirical cdf, and thus, restricts the class of estimators of $\eta$ that can be used. For instance, the piecewise constant estimator that we have constructed above does not satisfy their assumption \textbf{A2}. 
On the other hand, we employ a  randomized strategy motivated by the form of the Bayes optimal in Theorem~\ref{theorem:bayes_optimal}, which imposes no continuity restrictions on the estimator of the regression function. 
\textbf{2)} In many problem instances, it is desirable that the bounded-rate constraint is strictly satisfied (see~\cite[\S~3.1]{rigollet2011neyman} for a similar discussion in context of Neyman-Pearson classification). Accordingly, our randomized approach implies that the abstention constraint is satisfied with high probability. This is in contrast to the classifier constructed by \cite{denis2015consistency}, which can only satisfy the constraint asymptotically. 
\end{remark}

\begin{remark}
\label{remark:adaptivity}
An important feature of our proposed classifier is that it automatically adapts to the local smoothness of the regression function. While this data-driven adaptivity to the smoothness parameters comes at the cost of an additional $\log n$ factor in $b_n$, it can result in much faster convergence rates in spatially inhomogeneous functions. 
More specifically, if $\eta$ is steep near the boundaries $(\beta \approx 1)$ and flat away from it $(\beta \approx 0)$, then for $n$ large enough, the convergence rates of our algorithm will only depend on the local smoothness near the boundaries.
\end{remark}

\noindent{\textbf{Lower Bound.}} We conclude this section by deriving a minimax lower-bound on the excess risk for the class of problems considered, i.e., $P_{XY}$ satisfying the assumptions ~\ref{assump:margin},~\ref{assump:holder}.  This lower bound demonstrates  the near-optimality of our adaptive plug-in classifier. To the best of our knowledge, this is the first lower-bound result for the problem of classification with abstention. 
% \todomgh{We may wanna mention the 1st lower-bound result both in the intro and abstract.}

\begin{theorem}
\label{theorem:lower_bound}
Let $\mathcal{A}$ represent any algorithm that learns an abstaining classifier $\hat{g}$ and let $\mathfrak{P}\lp \beta, \rho_0\rp$ represent the class of $P_{XY}$ satisfying assumptions~\ref{assump:margin} and~\ref{assump:holder}. Then, we have 
\begin{equation*}
    \inf_{\mathcal{A}} \sup_{P_{XY} \in \mathfrak{P}(\beta, \rho_0)} R(\hat{g}) - R(g^*) = \Omega\big(n^{-\beta(1+\rho_0)/(2D + \beta)}\big). 
\end{equation*}
\end{theorem}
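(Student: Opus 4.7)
The plan is to apply Assouad's lemma to a family of hard instances parameterized by a binary hypercube $\sigma \in \{-1,+1\}^M$, adapting the Audibert--Tsybakov lower bound for plug-in classification to the abstention setting. The essential modification is that the perturbations used to generate the hard instances must straddle the abstention threshold $1/2 + \gd$ (rather than $1/2$, as in standard classification), so that flipping $\sigma_i$ changes whether the Bayes optimal rule predicts $+1$ or abstains on the $i$-th perturbation.

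Concretely, choose $r = c_0 n^{-1/(2\beta + D)}$ and partition a subregion of $[0,1]^D$ into $M = \Theta(r^{\beta \rho_0 - D})$ disjoint cubes $C_1, \dots, C_M$ of side $r$. On each $C_i$ place a smooth bump $\phi_i$ supported in $C_i$ with peak value $c_1 r^\beta$ so that the collection $\{\phi_i\}$ is globally H\"older-$\beta$ with constant $L$. For each $\sigma \in \{-1,+1\}^M$, define
\[
\eta_\sigma(x) \;=\; \tfrac{1}{2} + \gd + \sum_{i=1}^M \sigma_i \phi_i(x)
\]
on the bumpy region, and extend $\eta_\sigma$ (and choose $P_X$) on the complement so that: (a) $\eta_\sigma$ is globally $L$-H\"older-$\beta$; (b) the margin condition at $1/2 \pm \gd$ holds with constants $(C_0,\rho_0)$, which the choice $M r^D = \Theta(r^{\beta\rho_0})$ enforces near $1/2 + \gd$; and (c) the abstention budget is tight at $\delta$ with the same threshold $\gd$ for every $\sigma$. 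The cleanest way to guarantee (c) is to restrict attention to $\sigma$ with $\sum_i \sigma_i = 0$ (or to pair bumps symmetrically around $1/2 + \gd$), since then the cdf of $|\eta_\sigma - 1/2|$ is identical across the family.

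I would then establish two key estimates. First, for any data-driven $\hat g$, if $\hat g$ makes the ``wrong'' decision on $C_i$ (predicts $+1$ where the Bayes rule abstains, or vice versa), the contribution to $R(\hat g) - R(g^*_\sigma)$ is $\Omega(r^{D+\beta})$. The cleanest way to see this is to reduce the bounded-rate excess risk to the fixed-cost excess risk at multiplier $\gd$ via Lagrangian duality, using the characterization in Theorem~\ref{theorem:bayes_optimal}, so that the ``compensation'' between wrongly abstaining and wrongly classifying is automatically priced in, and the usual pointwise gap calculation $|1 - 2\eta| \asymp r^\beta$ times bump measure $r^D$ applies. Second, the KL between $P_\sigma^{\otimes n}$ and $P_{\sigma'}^{\otimes n}$ for $\sigma,\sigma'$ differing in a single coordinate is bounded by $n \cdot O(r^{D+2\beta}) = O(1)$ with the above choice of $r$. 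Assouad's lemma then yields that for any $\hat g$ the expected Hamming distance between its induced decision vector and the true $\sigma$ is $\Omega(M)$, so
\[
\inf_{\mathcal{A}} \sup_\sigma \mbb{E}\big[R(\hat g) - R(g^*_\sigma)\big] \;=\; \Omega(M\, r^{D+\beta}) \;=\; \Omega\big(r^{\beta(1+\rho_0)}\big) \;=\; \Omega\big(n^{-\beta(1+\rho_0)/(2\beta+D)}\big),
\]
which gives the claimed lower-bound order.

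The main obstacle is engineering the family so that the threshold $\gd$ and the constraint budget $\delta$ are simultaneously preserved across the hypercube; if either shifts by $\Theta(r^\beta)$ when a single $\sigma_i$ flips, the Bayes classifier $g^*_\sigma$ changes in a hard-to-control way and the per-coordinate risk decomposition blurs. Restricting to balanced $\sigma$ (so the cdf of $|\eta_\sigma - 1/2|$ is invariant) resolves this while shrinking the effective hypercube only by a constant factor, which is harmless for Assouad. A secondary subtlety is the risk decomposition itself: the budget constraint couples ``wrongly abstain'' and ``wrongly classify'' errors, and the Lagrangian reformulation is the cleanest way to decouple them for the per-coordinate calculation.
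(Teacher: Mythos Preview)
Your proposal is correct and follows essentially the same approach as the paper: the paper's outline likewise invokes the Audibert--Tsybakov template with two modifications, a new comparison inequality (your Lagrangian reduction to the fixed-cost excess risk at $\lambda = 1/2 - \gd$ plays this role) and a new class of hard instances (your bumps centered at $1/2 + \gd$ rather than $1/2$, with the balanced-$\sigma$ restriction to keep $\gd$ invariant). Note that your derived exponent $\beta(1+\rho_0)/(2\beta+D)$ matches the upper bound in Theorem~\ref{theorem:excess_risk_bound}; the $2D+\beta$ in the statement appears to be a typo.
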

\begin{proofoutline}
The proof follows the general outline described in~\citet{audibert2007fast} with two modifications: \textbf{1)} a new comparison inequality and \textbf{2)} construction of a new class of \emph{hard} problem instances. The details of these two steps can be found in the proof of Theorem~2 in \citep{anonymous2019active}.
\end{proofoutline}

%--------------------------------------------------------- 
%--------------------------------------------------------- 
% COMPUTATIONALLY FEASIBLE ALGORITHMS
%--------------------------------------------------------- 
%--------------------------------------------------------- 
\section{Computationally Feasible Algorithms}
\label{sec:convex}
\vspace{-1em}
 The implementation of the plug-in classifier of Section~\ref{sec:plugin_classifier} requires an exhaustive search over a uniform grid partitioning the input space; an operation with an exponential runtime complexity.
 We now present two computationally tractable algorithms (in Sections~\ref{subsec:binary_search} and~\ref{subsec:algorithm1}) for constructing abstaining classifiers, as their implementation involves solving convex programs. 

Following~\citet{cortes2016learning} and~\citet{elyaniv2010selective}, we now consider classifiers $g$ represented by the pair $(h,r)$, where the sign of $h$ is used for predicting labels and the sign of $r$ decides whether to abstain or not. The original problem~\eqref{bayes_optimal_delta} can now be re-written as 
\begin{equation}
\tag{$CA_{\delta,2}$}
\label{abstain2}
\begin{aligned}
 \underset{h,r}{\text{$\min$}} \quad \mathbb{E}\Big[l\big(h(X)Y,r(X)\big)\Big], \qquad \text{subject to} \quad  \mathbb{E} \lb \indi_{\{r\lp X \rp \leq 0 \}}\rb  \leq \delta,
\end{aligned}
\end{equation}
where the loss function $l$ is defined as $l(z_1,z_2) \coloneqq \indi_{\{z_1 \leq 0\}}\indi_{\{-z_2<0\}}$.

%--------------------------------------------------------- 
% BINARY SEARCH WITH COST-BASED REJECTION
%--------------------------------------------------------- 

\subsection{Binary Search with Cost-based Rejection}
\label{subsec:binary_search}

As noted in Remark~\ref{remark:fixed_cost}, the optimal solution to the problem~\eqref{bayes_optimal_delta} is the same as that in the  fixed-cost setting, with cost  equal to $1/2-\gd$.  Classification  with a fixed cost of abstention $\lambda \in (0,1/2)$ involves minimizing the cost function 
%\todo{We need a reference here} 
%
$l_{\lambda}(g,x,y) \coloneqq \indi_{\{g(x)\neq y\}} \indi_{\{g(x)\neq \Delta\}} + \lambda \indi_{\{g(x) = \Delta\}}$.
Several works in the literature, such as~\citet{bartlett2008classification, yuan2010classification,cortes2016learning}, have proposed convex surrogates $\varphi_\lambda(\cdot)$ to this loss function that are calibrated w.r.t.~the Bayes optimal solution. More specifically, computationally tractable algorithms minimize the cost $\mbb{E}\big[ \varphi_\lambda(h,r,X,Y)\big]$ for $(h,r) \in \mathcal{H}\times \mathcal{R}$ for suitable choices of $\mathcal{H}$ and $\mathcal{R}$. 
%
% \begin{equation}
% \label{eq:fixed_cost_convex}
%     \min_{(h,r) \in \mathcal{H} \times \mathcal{R}} \mbb{E}\big[ \varphi_\lambda(h,r,X,Y)\big],
% \end{equation}
% %
% for some appropriate choice of function classes $\mathcal{H}$ and $\mathcal{R}$. 
The definitions of these convex surrogate functions rely heavily on the knowledge of the abstention cost $\lambda$, which  cannot be determined beforehand for the bounded rate setting. Thus the existing approaches to defining convex surrogate loss functions are not applicable here. We now propose a computationally feasible algorithm for~\eqref{bayes_optimal_delta} that leverages the above mentioned connection between the optimal solutions of~\eqref{bayes_optimal_delta} and the problem of classification with fixed-cost of abstention. 

\noindent{\textbf{Algorithm~1:}}
Take as input the labeled and unlabeled training sets $S_l$ and $S_u$, a slack term $\alpha_m$, an interval $\mathcal{I}_n \subset (0,\delta)$, and an algorithm $\mathcal{A}$ for learning with fixed-cost of abstention. Set $L_1 = 0$ and $U_1 = 1/2$. For $k \in \{1,2,\dots,\}$ perform the following steps:
\begin{enumerate}
    \item Set the rejection cost to $\lambda_k = (L_k + U_k)/2$. 
    \item Use the algorithm $\mathcal{A}$ and learn a classifier $\hat{g}_{\lambda_k} = (h_k,r_k)$ with abstention cost $\lambda_k$ on the labelled training set $S_l$ with $n$ samples. 
    \item Compute $Q_k \coloneqq \frac{1}{m}\sum_{j=n+1}^{n+m}\indi_{\{r_k(X_j) \leq 0\}} + \alpha_{m}$ using the unlabelled training set $S_u$ with $m$ samples. 
    \item If $Q_k \in \mathcal{I}_{n}$, then stop, else if $Q_k \leq \delta$, update $U_k \leftarrow \lambda_k$, else update $L_k \leftarrow \lambda_k$. 
\end{enumerate}
In the fixed-cost setting, in addition to the calibration results, there exist proven bounds on the true excess risk in terms of the surrogate excess risk. More formally, if $\bar{R}_\lambda(g) = \mbb{E}[l_{\lambda}(g,X,Y)]$ and $\bar{R}_{\varphi_{\lambda}}(g)=\mbb{E}[\varphi_{\lambda}(g,X,Y)]$ denote the risk and the convex risk, respectively, then we have $\bar{R}_\lambda(g) - \bar{R}_\lambda(g_\lambda) \leq \Psi\lp \bar{R}_{\varphi_{\lambda}}(g)-\bar{R}_{\varphi_{\lambda}}(g_\lambda)\rp$, where $g_\lambda = \argmin_{g}\bar{R}_{\lambda}(g)$ represents the optimal abstaining classifier for the fixed-cost setting and $\Psi:[0,\infty)\mapsto [0,\infty)$ is some non-decreasing function with $\Psi(0)=0$. Our next result exploits this property to obtain bounds on the excess risk of the classifier returned by our proposed Algorithm~1.
% \todo{Let's give these algorithms a label to be able to refer to them easily.} 
%
\begin{theorem}
\label{theorem:binary_search}
Suppose the following conditions are satisfied:
\vspace{-0.5em}
\begin{enumerate}
\item Assumptions~\ref{assump:margin},~\ref{assump:detect}, and~\ref{assump:mu_min} hold. 
\item  The Bayes optimal classifier lies in the function class $\mathcal{H}\times \mathcal{R}$, for all $\lambda \in (0,1/2)$.
\item  The convex cost function $\varphi_\lambda$ is calibrated for all values of $\lambda \in (0,1/2)$.
\item  $\bar{R}_{\varphi_{\lambda}}\lp \hat{g}_{\lambda}\rp - \bar{R}_{\varphi_\lambda}\lp g_\lambda\rp \leq \bar{A}_n$ w.p. at least $1-1/n$, where $\lim_{n \to \infty}\bar{A}_n = 0$. Moreover, define $A_n = \Psi(A_n)$, where $\Psi(\cdot)$ is a non-decreasing function with 
$\Psi(0) = 0$. 
\end{enumerate}
Suppose $\hat{g}_{\lambda}$ is the output classifier when Algorithm~1 is run over the function class $\mathcal{H}\times \mathcal{R}$ with parameters $\alpha_m=2\mathfrak{R}_m(\mathcal{R}) + \sqrt{2\log(2m)/m}$, where $\mathfrak{R}_m$ is the Rademacher complexity of $\mathcal{R}$, and $\mathcal{I}_n = [\delta-3{B}_n, \delta-2{B}_n]$, where $B_n \coloneqq 4C_0\big(\frac{A_n}{C_0}\big)^{\rho_0/(\rho_0+1)}$. Then, for $m$ and $n$ large enough, with probability at least $1-1/n - 1/m$, we have
\begin{equation*}
\label{eq:binary_search_risk}
        R(\hat{g}_\lambda) - R(g^*) \leq A_n + 4(1/2 - \gd) K_{m,n} + 4\big(K_{m,n}^{1+1/\rho_1}/C_1^{1/\rho_1}\big),
\end{equation*}
where $K_{m,n} \coloneqq (5B_n + 2\alpha_m)/2$.
\end{theorem}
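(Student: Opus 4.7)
The plan is to decompose the excess risk $R(\hat{g}_{\lambda_k}) - R(g^*)$ into three controllable pieces via the identity $R(g) = \bar{R}_{\lambda^*}(g) - \lambda^* P(g(X) = \Delta)$, where $\lambda^* = 1/2 - \gd$. Using $P(g^*(X) = \Delta) = \delta$ (valid under the continuity convention from Remark~\ref{remark:fixed_cost}), I would write
\begin{equation*}
R(\hat{g}_{\lambda_k}) - R(g^*) = \bigl[\bar{R}_{\lambda^*}(\hat{g}_{\lambda_k}) - \bar{R}_{\lambda^*}(g^*)\bigr] + \lambda^*\bigl[\delta - P(\hat{g}_{\lambda_k}(X) = \Delta)\bigr].
\end{equation*}
The term in square brackets will pick up the $A_n$ contribution via calibration, while the second term will pick up the $(1/2 - \gd) K_{m,n}$ contribution via empirical concentration of the abstention rate, and the implicit mismatch $(\lambda^* - \lambda_k)$ inside the first bracket will generate the last $K_{m,n}^{1+1/\rho_1}/C_1^{1/\rho_1}$ term via the detectability assumption.

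First, I would verify the empirical step. Since $\alpha_m = 2\mathfrak{R}_m(\mathcal{R}) + \sqrt{2\log(2m)/m}$, a standard symmetrization-plus-McDiarmid argument for the class $\{x \mapsto \indi_{\{r(x) \leq 0\}} : r \in \mathcal{R}\}$ gives $|\hat{P}_m(r \leq 0) - P_X(r \leq 0)| \leq \alpha_m$ uniformly over $r \in \mathcal{R}$ with probability at least $1 - 1/m$. Together with the stopping rule $Q_k \in [\delta - 3B_n, \delta - 2B_n]$, this yields $P_X(r_k(X) \leq 0) \in [\delta - 3B_n - 2\alpha_m,\; \delta - 2B_n]$, which both confirms feasibility and shows $|\delta - P_X(r_k \leq 0)| \leq 3B_n + 2\alpha_m \leq 2K_{m,n}$. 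Termination of the binary search within $O(\log(1/B_n))$ iterations follows because the map $\lambda \mapsto P(g_\lambda(X) = \Delta)$ is monotone and, by detectability, takes values densely enough to guarantee some $\lambda_k$ lands in $\mathcal{I}_n$.

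Next, I would translate the surrogate excess risk into the needed ingredients. By hypothesis $\bar{R}_{\varphi_{\lambda_k}}(\hat{g}_{\lambda_k}) - \bar{R}_{\varphi_{\lambda_k}}(g_{\lambda_k}) \leq \bar{A}_n$, and calibration gives $\bar{R}_{\lambda_k}(\hat{g}_{\lambda_k}) - \bar{R}_{\lambda_k}(g_{\lambda_k}) \leq A_n$. Since $g_{\lambda_k}$ abstains exactly on $\{|\eta - 1/2| \leq 1/2 - \lambda_k\}$, the margin assumption~\ref{assump:margin} — applied in the standard way used for fixed-cost abstention (cf.~\citealt{bartlett2008classification}) — converts this excess risk bound into a bound on the $P_X$-measure of the symmetric difference between $\hat{g}_{\lambda_k}$ and $g_{\lambda_k}$ of the form $B_n = 4C_0 (A_n/C_0)^{\rho_0/(\rho_0+1)}$. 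Consequently $|P_X(\hat{g}_{\lambda_k} = \Delta) - P_X(g_{\lambda_k} = \Delta)| \leq B_n$, and combining with the concentration step yields $|P(g_{\lambda_k}(X) = \Delta) - \delta| \leq 4B_n + 2\alpha_m$. The detectability assumption~\ref{assump:detect} applied at $\gamma = 1/2 \pm \gd$ then forces $|\lambda_k - \lambda^*| \leq ((4B_n + 2\alpha_m)/C_1)^{1/\rho_1} \lesssim (K_{m,n}/C_1)^{1/\rho_1}$.

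Finally, I would assemble the pieces. Rewriting $\bar{R}_{\lambda^*} = \bar{R}_{\lambda_k} + (\lambda^* - \lambda_k)P(\cdot = \Delta)$ and using $\bar{R}_{\lambda_k}(g_{\lambda_k}) \leq \bar{R}_{\lambda_k}(g^*)$, one gets $\bar{R}_{\lambda^*}(\hat{g}_{\lambda_k}) - \bar{R}_{\lambda^*}(g^*) \leq A_n + (\lambda_k - \lambda^*)(\delta - P(\hat{g}_{\lambda_k} = \Delta))$, and plugging in the previous estimates of $|\lambda_k - \lambda^*|$ and $|\delta - P(\hat{g}_{\lambda_k} = \Delta)| \leq 2K_{m,n}$ produces the $K_{m,n}^{1+1/\rho_1}/C_1^{1/\rho_1}$ term with the stated constants. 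Bounding the second bracket of the decomposition above by $\lambda^* \cdot 2K_{m,n} \leq 2(1/2 - \gd) K_{m,n}$ completes the proof. The main obstacle is the margin-based conversion from surrogate excess risk to symmetric-difference control of the abstention regions (as opposed to prediction regions), since one must ensure that the margin assumption is invoked at both boundary levels $1/2 \pm (1/2 - \lambda_k)$ simultaneously and that constants match those in the final bound; a secondary subtlety is verifying that the binary search is guaranteed to find some $\lambda_k$ with $Q_k \in \mathcal{I}_n$ under the discrete bisection schedule, which requires $\mathcal{I}_n$ to have width at least $B_n$ times a continuity modulus derived from~\ref{assump:detect}.
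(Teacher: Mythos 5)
Your overall architecture matches the paper's in its essential ingredients: concentration of the empirical abstention rate via the Rademacher bound, conversion of the calibrated excess risk $A_n$ into a measure bound $B_n$ on the disagreement of the abstention regions via the margin assumption, and the detectability assumption to force $|\lambda_k - (1/2-\gd)|$ small. Your final decomposition through $\bar{R}_{\lambda^*}$ (a Lagrangian-style split) differs cosmetically from the paper's triangle inequality through the intermediate classifier $g_{\lambda_k}$, i.e.\ $|R(\hat{g})-R(g_{\lambda_k})| + |R(g_{\lambda_k})-R(g^*)|$, but both assemblies produce the same three terms with comparable constants.

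There is, however, a genuine circularity that you flag only obliquely and do not resolve. Assumptions~\ref{assump:margin} and~\ref{assump:detect} are stated \emph{only} at the levels $\gamma \in \{1/2-\gd,\,1/2+\gd\}$, not at arbitrary levels. Your step converting the excess risk $A_n$ into the symmetric-difference bound $B_n$ applies the margin condition at the boundary levels $\lambda_k$ and $1-\lambda_k$ of the fixed-cost optimal classifier $g_{\lambda_k}$; this is legitimate only once you already know $\lambda_k$ lies within some $\epsilon_0$ of $1/2-\gd$. But in your chain, the closeness $|\lambda_k - \lambda^*| \lesssim (K_{m,n}/C_1)^{1/\rho_1}$ is derived \emph{downstream} of that margin step (via $B_n$ and then detectability), so the argument is circular as written. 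The paper breaks this cycle with a separate bootstrapping lemma (Lemma~\ref{lemma:binary4}): assuming $|\lambda - 1/2 + \gd| > \epsilon_0$, it constructs, using only the detectability condition and the stopping rule, a set $U$ of measure at least $2C_1(\epsilon_0/4)^{\rho_1}$ on which $|\eta - \lambda| \geq \epsilon_0/4$, and shows this would force the surrogate-controlled excess risk to exceed $A_n$ for $n$ large — a contradiction that does not rely on the margin assumption at level $\lambda_k$. You need this (or some equivalent a priori localization of $\lambda_k$) before your margin-based conversion is valid; without it the derivation of $B_n$, and hence everything built on it, is unjustified.
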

The proof of this result is given in Appendix~\ref{appendix:binary_search}. An concrete example of the terms $\bar{A}_n$ and $\Psi(\cdot)$ is given in Remark~\ref{remark:Psi} in Appendix~\ref{appendix:binary_search}.

% \begin{remark}
% \label{remark:Psi}
% A concrete example of the terms $\bar{A}_n$ and $\Psi(\cdot)$  can be obtained from Corollary~19 in~\citet{yuan2010classification}. Here $\mathcal{H}$ is some class of functions $h:\X \mapsto \mbb{R}$ and 
% $\mathcal{R} = \{\indi_{\{|h|>c\}}\ \mid \ h \in \mathcal{H}, \ c \in [0,\infty)\}$. If $N_n$ denotes the $1/n$ covering number of $\mathcal{H}$ w.r.t.~the uniform metric and $\varphi_\lambda(\cdot)$ is a convex surrogate satisfying the conditions of Theorem~9 in~\citet{yuan2010classification}, then we have $\bar{A}_n = \mathcal{O}\lp \frac{1}{n} + \frac{\log(n N_n)}{n}\rp$ and $\Psi(x) = x^{1/(s + \beta - s\beta)}$, for some $s>0$ and $\beta= \rho_0/(1+\rho_0)$.
% \end{remark}

%--------------------------------------------------------- 
% CONVEX SURROGATE WITH CONVEX CONSTRAINTS
%--------------------------------------------------------- 

\subsection{Baseline: Convex Surrogate with Convex Constraints}
\label{subsec:algorithm1}
\vspace{-0.8em}
Since the loss function $l$ is not convex, we now propose a baseline algorithm which  employs a convex surrogate as in~\citet{cortes2016learning} using a convex function $\varphi_1(\cdot)$, which is an upper-bound on $\indi_{\{\cdot \leq 0\}}$ as
$
    l(z_1,z_2) = \indi_{\{z_1 \leq 0\}}\indi_{\{-z_2 <0\}} \leq \indi_{\{\max\{z_1, -z_2\}\leq 0\}} \leq \indi_{\{ \frac{z_1-z_2}{2} \leq 0 \} } \leq \varphi_1\big( \frac{z_1 -z_2}{2}\big).
$
Similarly, we can also replace the constraint with its convex relaxation by employing another function $\varphi_2(\cdot)$ to upper bound the indicator $\indi_{\{\cdot \leq 0\}}$. Note that since $\varphi_2(\cdot)$ is an upper bound on $\indi_{\{\cdot \leq 0\}}$, we are restricting the set of feasible solutions. 

% Similarly, we can use $\varphi_2(\cdot)$, another convex upper-bound on $\indi_{\{\cdot \leq 0\}}$, for the constraint to obtain the following convex relaxation of the problem~\eqref{abstain2}:
% %
% \begin{equation}
%     \tag{$CS_1$}
%     \label{convex_surrogate_1}
%     \begin{aligned}
%      \underset{h,r}{\min} \quad \mathbb{E}\lb \varphi_1\lp \frac{ h(X)Y - r(X) }{2} \rp \rb, \qquad \text{subject to} \quad \mathbb{E}\Big[\varphi_2\big((r(X)\big)\Big] \leq \delta.
%     \end{aligned}
% \end{equation}
% %
% Note that since $\varphi_2(\cdot)$ is an upper-bound on $\indi_{\{\cdot \leq 0\}}$,~\eqref{convex_surrogate_1} has a smaller feasible set than~\eqref{abstain2}. 

{\textbf{Algorithm~2:}} 
We now describe an algorithm that solves the following empirical version of the convex relaxation of~\eqref{abstain2}, in which both $\varphi_1$ and $\varphi_2$ are set to the hinge loss $\varphi_H(z) \coloneqq \max \{0, 1-z\}$:
\begin{equation}
    \tag{$CS_2$}
    \label{convex_surrogate_2}
    \begin{aligned}
     \underset{(h,r) \in \mathcal{H} \times \mathcal{R}}{\min} \; \frac{1}{n}\sum_{j=1}^n\varphi_H\lp \frac{h(X_j)Y_j - r(X_j)}{2}\rp, \quad\; \text{s.t.} \;\;\; \frac{1}{m}\sum_{j=n+1}^{n+m}\varphi_H\big( r(X_j)\big) \leq \delta - \frac{\tau}{\sqrt{m}},
    \end{aligned}
\end{equation}
where $\mathcal{H}$ and $\mathcal{R}$ are some function classes. The slack term $\tau/\sqrt{m}$ is introduced in the empirical constraint of~\eqref{convex_surrogate_2} in order to ensure that the feasible functions for~\eqref{convex_surrogate_2} also satisfy the constraint of~\eqref{abstain2} with high probability. An appropriate choice of $\tau$ depending on the function class $\mathcal{R}$ is given in Proposition~\ref{prop:empirical_constraint} in Appendix~\ref{appendix:binary_search}.

%--------------------------------------------------------- 
%--------------------------------------------------------- 
% EXPERIMENTS
%--------------------------------------------------------- 
%--------------------------------------------------------- 
\section{Experiments}
\label{sec:experiments}
\vspace{-1em}
\begin{wrapfigure}{r}{0.5\textwidth}
\label{fig:pima}
  \begin{center}
    \includegraphics[width=0.48\textwidth]{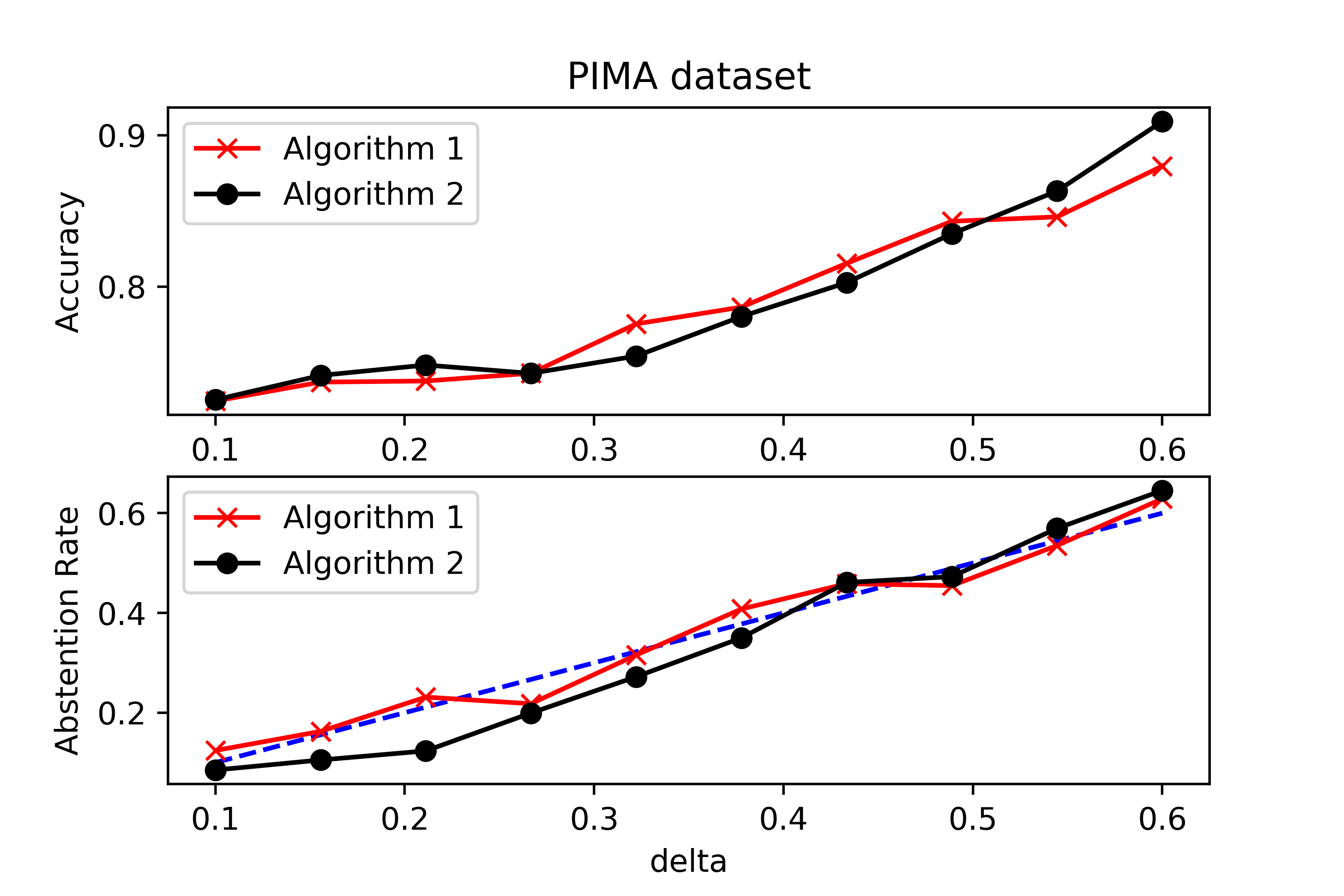}
  \end{center}
 \caption{Plot of the rejection rate versus accuracy as $\delta$ varies from $0.1$
to $0.6$ for the two algorithms on the PIMA dataset.}
\end{wrapfigure}

We now describe some empirical results on the performance of our proposed algorithms. We  emphasize that the goal of these experiments is not to construct the most accurate  classifiers, but to demonstrate that \emph{(i)} abstention improves classification accuracy, and \emph{(ii)} our proposed algorithms can achieve tight control over the abstention rate. 
We implemented the algorithms using CVXPY \citep{cvxpy}.
Figure~\ref{fig:pima}  shows the plot of rejection rate versus classification
accuracy for the PIMA dataset, as the parameter $\delta$ was varied from $0.1$ to $0.6$. As expected, 
the classification accuracy increases with increase in $\delta$. 
Algorithm~1 was able to find classifiers with very rejection rates very close 
to $\delta$, while Algorithm~2 learned classifiers which were more conservative since it searches over a smaller set.
Additional experiments are reported in Appendix~\ref{appendix:details_experiment}. 

% \begin{figure}[h]
% \label{fig:digits}
% \centering
% \includegraphics[ width=0.5\linewidth]{UTILS/fig_pima.png}
% \caption{Plot of the rejection rate versus accuracy as $\delta$ varies from $0.1$
% to $0.6$ for the two algorithms on the digits dataset.}
% \end{figure}

%-------------------------------------------------------------------
%-------------------------------------------------------------------

%-------------------------------------------------------------------
%-------------------------------------------------------------------
%-------------------------------------------------------------------
%-------------------------------------------------------------------

\newpage
\bibliographystyle{apalike}
\bibliography{ref}

%-------------------------------------------------------------------
%-------------------------------------------------------------------
% APPENDIX A
%-------------------------------------------------------------------
%-------------------------------------------------------------------

\newpage 
\onecolumn 
\begin{appendix}
\section{Deferred Proofs from Section~\ref{sec:bayes_optimal}}

%-------------------------------------------------------------------
% APPENDIX A.1
%-------------------------------------------------------------------

\subsection{Proof of Proposition~\ref{prop:bayes1}}
\label{appendix:claim1}

\begin{proofoutline}
The proof proceeds by contradiction. We assume that there exists an optimal abstaining classifier $g^* = \lp G_{-1}^*, G_1^*, \Gds \rp$ with points $x_1 \in \text{int}\lp G_{-1}^*\cup G_1^*\rp$ and $x_2 \in \text{int}(\Gds)$, such that $|\eta(x_1)-1/2| < |\eta(x_2)-1/2|$. Then using the continuity of $\eta$ and the assumptions on $P_X$, we can find appropriate open balls in $\lp G_{-1}^* \cup G_1^*\rp$ and $\Gds$ with the same $P_X$ measure, and use them to construct a new classifier satisfying the constraint of $\eqref{bayes_optimal_delta}$ with strictly better objective function value. This contradicts the optimality of $g^*$.
\end{proofoutline}

\begin{proof}
We proceed by contradiction. Suppose there exist $x_1 \in \text{int} \lp G_{-1}^*\cup G_1^*\rp$ and $x_2 \in \text{int}(G_{\Delta}^*)$ such that we have $|\eta(x_1)-1/2| < |\eta(x_2)-1/2|$. We will show that if this happens, then we can construct a classifier that satisfies the constraint in (\ref{bayes_optimal_delta}) and has a strictly smaller risk $R(\cdot)$, and thus, contradicting the optimality of $g^* = (G_{-1}^*,G_1^*, G_{\Delta}^*)$. 

We proceed in the following steps:

\begin{itemize}
\item Since $x_1 \in \text{int}(G_{-1}^*\cup G_1^*)$ and $x_2 \in \text{int}(G_{\Delta}^*)$, we can select an $\epsilon_1>0$, such that $B(x_1,\epsilon_1) \subset \text{int}(G_{-1}^*\cup G_1^*)$ and $B(x_2,\epsilon_1) \subset \text{int}(G_{\Delta}^*)$, where we denote by $B(x,\epsilon)$, the ball centered at $x$ with radius $\epsilon$. 

\item We define $\alpha_0 \coloneqq |\eta(x_2)-1/2| - |\eta(x_1)-1/2| >0$. By the continuity of $\eta $, there exists an $0<\epsilon_2\leq\epsilon_1$ such that $\sup_{x \in B(x_2, \epsilon_2)}|\eta(x)-\eta(x_2)| \leq \alpha_0/3$ and $\sup_{x \in B(x_1, \epsilon_2)}|\eta(x)-\eta(x_1)|\leq \alpha_0/3$. Thus, we may write  
\begin{align*}
\inf_{x \in B(x_2,\epsilon_2)} |\eta(x)-1/2| &\geq |\eta(x_2) - 1/2| - \sup_{x \in B(x_2, \epsilon_2)} |\eta(x) - \eta(x_2)|\\
& \geq |\eta(x_2) - 1/2| - \alpha_0/3,
\end{align*}
and similarly 
\[
\sup_{x \in B(x_1,\epsilon_2)} |\eta(x)-1/2| \leq |\eta(x_1)-1/2| + \alpha_0/3. \]
Together, these two inequalities imply that 
\begin{equation}
\label{eq:inf_minus_sup}
\inf_{x \in B(x_2, \epsilon_2)} |\eta(x) - 1/2| > \sup_{x \in B(x_1, \epsilon_2)}|\eta(x) - 1/2|.
\end{equation}
\item Assume that $P_X(B(x_1,\epsilon_2))> P_X(B(x_2, \epsilon_2))$ (the case in which $P_X(B(x_1,\epsilon_2))<P_X(B(x_2,\epsilon_2))$ can be handled similarly, while in the case of equality, we can skip this step). Since $P_X$ has a density w.r.t.~the Lebesgue measure which takes values in the range $[\mu_{\min}, \mu_{\max}]$, for any $\epsilon>0$, we
have $C_D\mu_{\min}\epsilon^D \leq P_X\lp B(x_1,\epsilon)\rp \leq C_D\mu_{\max}
\epsilon^D$, where $C_D$ is a constant depending on the dimension D.
This implies that the mapping $\epsilon
\mapsto P_X(B(x_1,\epsilon))$ is continuous and takes the value $0$ at $\epsilon=0$.
Hence, by the Intermediate Value theorem, there must exist an $\epsilon_3 \in (0,
\epsilon_2)$ such that $P_X\lp B(x_1,\epsilon_3)\rp = P_X\lp B(x_2, \epsilon_2)\rp$.

\item  We  now define $G_{\Delta} = \big(G_{\Delta}^*\setminus B(x_2,\epsilon_2) 
\big) \cup B(x_1,\epsilon_3)$, $G_1 = (G_1^*\setminus B(x_1,\epsilon_3))\cup \{x
\in B(x_2,\epsilon_2): \eta(x)-1/2\geq 0\}$, and $G_{-1} = (G_{-1}^*\setminus
B(x_1,\epsilon_3))\cup \{ x\in B(x_2,\epsilon_2): \eta(x)-1/2<0\}$. We first note 
that this new classifier $g$ is feasible for \eqref{bayes_optimal_delta} as 
$P_X(G_{\Delta}) = P_X(G_{\Delta}^*)-P_X(B(x_2,\epsilon_3)) + P_X(B(x_1,\epsilon_2))
= P_X(G_{\Delta}^*) \leq \delta$. 

To compute the excess risk of $g$ over $g^*$, we need to introduce some notation. 
Define $F_1^* = G_1^*\setminus B(x_1,\epsilon_3)$, $F_{-1}^* = G_{-1}^*\setminus
B(x_1,\epsilon_3)$, $E_1^* =G_1^*\cap B(x_1,\epsilon_3)$, $E_{-1}^* = G_{-1}^*\cap 
B(x_1,\epsilon_3)$, $U_1= \{x \in B(x_2,\epsilon_2)\ \mid \ \eta(x)\geq 1/2\}$, 
and $U_{-1} = \{x \in B(x_2,\epsilon_2)\ \mid \ \eta(x)<1/2\}$. Then we have  
\begin{align*}
   R(g) - R(g^*) &= \int_{F^*_{-1}\cup U_{-1}}\eta(x)dP_X(x) + \int_{F^*_1\cup U_1}(1
   -\eta(x)dP_X(x) \\
   &- \int_{F^*_{-1}\cup E_{-1}^*}\eta(x)dP_X(x) - \int_{F^*_1\cup E_1^*}(1-\eta(x))dP_X(x) \\
   & = \bigg(\int_{U_{-1}}\eta(x)dP_X(x) - \int_{E_{-1}^*}\eta(x)dP_X(x)\bigg) \\ 
   &+ \bigg(\int_{U_1}(1 -  \eta(x))dP_X(x) - \int_{E_1^*}(1-\eta(x))dP_X(x)\bigg) \\
 &\coloneqq \tau_1 + \tau_2
\end{align*}
Now, we consider the two terms separately:
\begin{align*}
    \tau_1 &\leq \sup_{x \in U_{-1}}\eta(x)P_X(U_{-1}) - \inf_{x \in E_{-1}^*}\eta(x)
    P_X(E_{-1}^*) \\
    &= \lp \frac{1}{2} - \inf_{x\in U_{-1}}\lp \frac{1}{2} -\eta(x)\rp\rp P_X(U_
    {-1}) - \lp \frac{1}{2} - \sup_{x \in E_{-1}^*}\lp \frac{1}{2} - \eta(x)\rp
    \rp P_X(E_{-1}^*)\\
    & = \frac{1}{2}\lp P_X(U_{-1}) - P_X(E_{-1}^*)\rp - \inf_{x\in U_{-1}}\lp \frac{1}
    {2} - \eta(x)\rp P_X(U_{-1}) + \sup_{x \in E_{-1}^*}\lp \frac{1}{2}-\eta(x)\rp
    P_X(E_{-1}^*).
\end{align*}
\begin{align*}
    \tau_2 & \leq \sup_{x\in U_1}\lp 1-\eta(x)\rp P_X(U_1) - \inf_{x\in E_1^*}\lp 
    1-\eta(x)\rp P_X(E_1^*).\\
    & = \lp \frac{1}{2} - \inf_{x \in U_1}\lp \eta(x) - \frac{1}{2}\rp \rp P_X(U_1)
- \lp \frac{1}{2} - \sup_{x \in E_1^*}\lp \eta(x) - \frac{1}{2}\rp \rp P_X(E_1^*)\\
& = \frac{1}{2}\lp P_X(U_1) - P_X(E_1^*)\rp - \inf_{x \in U_1}\lp \eta(x) - \frac{1}
{2}\rp + \sup_{x\in E_1^*}\lp \eta(x) - \frac{1}{2} \rp 
\end{align*}
Next, we note the following: 
\begin{align*}
\min \left\{ \inf_{x\in U_{-1}}\lp \frac{1}{2}-\eta(x)\rp,\ \inf_{x\in U_1}\lp 
\eta(x) - \frac{1}{2}\rp \right\} &\geq \inf_{x \in B(x_2,\epsilon_3)}\left| \eta(x)
-\frac{1}{2} \right|\\
\max\left\{\sup_{x\in E_{-1}^*}\lp \frac{1}{2}-\eta(x)\rp , \sup_{x\in E_1^*}\lp
\eta(x)-\frac{1}{2}\rp \right\} & \leq \sup_{x \in B(x_1,\epsilon_2)}\left| \eta(x)
-\frac{1}{2}\right|. 
\end{align*}
Combining these observations, we get the following:
\begin{align*}
    \tau_1 + \tau_2 &\leq \frac{1}{2}\lp P_X(U_{-1}) + P_X(U_1) - P_X(E_{-1}^*) - 
    P_X(E_1^*)\rp \\
    & - \lp  P_X(U_{-1}) + P_X(U_1)\rp \inf_{x \in B(x_2,\epsilon_3)}\left|\eta(x)-
   \frac{1}{2}  \right|\\
   & + \lp P_X(E_{-1}^*) + P_X(E_1^*)\rp \sup_{x \in B(x_1,\epsilon_2)}\left| \eta(x)
   -\frac{1}{2}\right |.
\end{align*}
Finally, by construction we have $P_X(U_{-1}) + P_X(U_1) = P_X(B(x_2,\epsilon_3)) =
P_X(B(x_1,\epsilon_2)) = P_X(E_{-1}^*) + P_X(E_1^*) \coloneqq \Gamma >0$. This gives us
\begin{align*}
    \tau_1 + \tau_2 & \leq \Gamma \lp \sup_{x \in B(x_1,\epsilon_2)}\left|\eta(x)-
    \frac{1}{2}\right| - \inf_{x \in B(x_2,\epsilon_3)}\left| \eta(x) - \frac{1}{2}
   \right|\rp \stackrel{(a)}{<} 0,
\end{align*}
where {\bf (a)} follows from \eqref{eq:inf_minus_sup}. 
This implies that the classifier $g$ is feasible for \eqref{bayes_optimal_delta} and 
has strictly smaller risk than $g^*$, thus contradicting the assumption of optimality
of $g^*$. 
\end{itemize}
\end{proof}

%-------------------------------------------------------------------
% APPENDIX A.2
%-------------------------------------------------------------------

\subsection{Proof of Theorem~\ref{theorem:bayes_optimal}}
\label{appendix:claim2}
\begin{proofoutline}
For any randomized classifier $g = \lp g_{-1},g_{1},\gD \rp$ that satisfies the constraint of~\eqref{bayes_optimal_delta}, we may write $R(g) = \int_{\X}\eta(x) g_{-1} + (1-\eta(x))g_1 dP_X$ and $\int_{\X}g_{\Delta}dP_X \leq \delta$. Since the five sets $(G_{-1}^*,G_1^*,\Gds,\partial G_1*,\partial G_{-1}^*)$ partition $\X$,  we may obtain a representation of $R(g^*)$ as the sum of the integrals over these five disjoint sets. The rest of the proof proceeds by employing the definition of $g^*$ to show that $R(g)- R(g^*)$ is non-negative, for any feasible abstaining classifier $g$. We have also included a separate proof for the case of deterministic classifiers, as this case is easier to follow (than the more general stochastic case) and we will employ similar arguments in the proofs of Theorem~\ref{theorem:excess_risk_bound} and Theorem~\ref{theorem:binary_search} later on in the paper. 
%\todo{Discuss the deterministic and stochastic proofs.}
\end{proofoutline}

\begin{proof}
Given any randomized feasible classifier $g = \lp \gmo, g_1, \gD \rp$, we can  write the
 excess risk $R(g) - R(g^*)$ as 
\begin{equation*}
R(g) - R(g^*) = \int_{\X} \lp \eta \gmo + (1-\eta)g_1 \rp dP_X - R(g^*)
\end{equation*}

We introduce the notation $f(x) = \eta(x)\gmo(x) + (1-\eta(x))g_1(x)$. Now, by 
the definition of $g^*$, we obtain the following:

\begin{align*}
R(g) - R(g^*) &= \int_{G_{1}^*}\lp f - (1-\eta)  \rp dP_X + \int_{G_{-1}^*}\lp f -
\eta \rp dP_X + \int_{\Gds} f  dP_X \\
&+ \int_{\partial G_1^*}\lp f + c_0 (1-\eta) \rp dP_X - \int_{\partial G_{-1}^*}
\lp f -c_0\eta \rp dP_X \\ 
&\coloneqq T_1 + T_2 + T_3 + T_4 + T_5
\end{align*}
where the terms $T_i$, for $i=1,\ldots,5$, are defined implicitly.
Using the notation $\lambda = \lp \frac{1}{2} - \gd \rp$, we now bound these five 
terms separately as follows :
\begin{itemize}
    \item $T_1 \geq -\lambda \int_{G_1^*}\gD dP_X$. To get this, we first  use
     the fact that $\eta \geq 1-\eta$ in the set $G_1^*$, which implies that 
     $f \geq (1-\gD)(1-\eta)$. Finally, the result follows from the fact that 
     $(1-\eta) \leq \lambda$ in the set $G_1^*$.  

    \item $T_2 \geq -\lambda \int_{G_{-1}^*}\gD dP_X$ follows from the fact that
    $f \geq (1-\gD)\eta$, and $\eta \leq \lambda$ in the set $G_{-1}^*$. 

    \item $T_3 \geq -\lambda \int_{\Gds}\gD dP_X + \lambda P_X\lp \Gds \rp$ follows from 
    the fact that $\min \{ \eta, 1-\eta\} \geq \lambda$ in $\Gds$, which implies
    that $f \geq \lambda (1-\gD)$ on $\Gds$.  
    
    \item $T_4 \geq \lambda c_0 P_X \lp \partial G_1^* \rp - \lambda \int_{
    \partial G_1^*}\gD dP_X$ follows from the fact that $\eta = 1-\lambda$ on 
    the set $\partial G_1^*$.
    
     \item $T_5 \geq \lambda c_0 P_X \lp \partial G_{-1}^* \rp - \lambda \int_{
    \partial G_{-1}^*}\gD dP_X$ follows from the fact that $\eta = \lambda$ on 
    the set $\partial G_{-1}^*$. 

    \end{itemize}

Combining these observations, we obtain
\begin{align*}
R(g) - R(g^*) & \geq -\lambda \int_{\X} \gD dP_X + \lambda P_X\lp \Gds \rp + 
\lambda c_0 P_X \lp \partial G_1^* \cup \partial G_{-1}^* \rp \\
& \stackrel{(a)}{=} \lambda \lp \delta - \int_{\X} \gD dP_X \rp \\
& \stackrel{(b)}{\geq} 0, 
\end{align*}
where {\bf (a)} follows from the choice of the term $c_0$ and {\bf (b)} follows 
from the assumption that $g$ is a feasible randomized classifier for the problem
~\eqref{bayes_optimal_delta}. 
%

%------------------------------------------------------------------------------
% ALTERNATIVE PROOF FOR THE NON RANDOMIZED CASE
%------------------------------------------------------------------------------

\paragraph{Alternate proof for the non-randomized case:}
In the case where we have $P_X\lp \Gds \rp = \delta$, the terms $T_4$ and $T_5$
are zero since $P_X\lp \partial G_1^*\rp = P_X \lp \partial G_{-1}^* \rp  = 0$.
The optimal classifier does not require randomization in these situations.

Since, we will use similar arguments for the proofs of Theorem~\ref{theorem:excess_risk_bound} and Theorem~\ref{theorem:binary_search}
, for completeness, we now provide the steps of a proof of the optimality of the 
classifier $g^* = (G_{-1}^*, G_1^*, \Gds)$ when restricted to the class of 
deterministic classifiers. 

\begin{proof}
Given any feasible (satisfies the constraint of~\eqref{bayes_optimal_delta}) classifier $g = \lp G_{-1}, G_1, \Gd \rp$, we can write the excess risk $R(g) - R(g^*)$ as 
\begin{equation*}
    R(g) - R(g^*) = \int_{G_{-1}}\eta(x) dP_X + \int_{G_1}\lp 1-\eta(x)\rp dP_X - \int_{G_{-1}^*}\eta(x) dP_X - \int_{G_1^*}\lp 1-\eta(x)\rp dP_X. 
\end{equation*}
Since we have $G_i = (G_i \cap G_{-1}^*)\cup (G_i\cap G_1^*)\cup (G_i\cap \Gds)$ and $G_i^* = (G_i^*\cap G_{-1})\cup (G_i^*\cap G_1) \cup (G_i^*\cap \Gd)$, for $i\in\{-1,1,\Delta\}$, we obtain
\begin{align*}
R(g) - R(g^*) &= \int_{G_{-1}\cap G_1^*}\lp 2\eta(x) -1 \rp dP_X + \int_{G_{-1}\cap\Gds}\eta(x) dP_X + \int_{G_1\cap G_{-1}^*}\lp 1-2\eta(x) \rp dP_X \\
&+ \int_{G_1\cap \Gds}\lp 1-\eta(x) \rp dP_X - \int_{G_{-1}^*\cap \Gd}\eta(x) dP_X - \int_{G_1^*\cap \Gd}(1-\eta(x))dP_X  \\
&\coloneqq L_1 + L_2 + L_3 + L_4 - L_5 - L_6,
\end{align*}
where the terms $L_i$, for $i=1,\ldots,6$, are defined implicitly. We now bound these six terms separately as follows:
\begin{itemize}
    \item $L_1 > 2\gd P_X\lp G_{-1} \cap G_1^* \rp$, since $\eta(x) > 1/2 + \gd$ on the set $G_1^*$. 
    \item $L_2 \geq \lp 1/2 - \gd \rp P_X\lp G_{-1} \cap \Gds \rp$, since $\eta(x) \geq 1/2 - \gd$ on the set $\Gds$.
    \item $L_3 > 2\gd P_X\lp G_1\cap G_{-1}^*\rp$, since $\eta(x) < 1/2 - \gd$ on the set $G_{-1}^*$. 
    \item $L_4 \geq \lp 1/2 - \gd \rp P_X\lp G_1 \cap \Gds\rp$, since $\eta(x) \leq 1/2 + \gd$ on the set $\Gds$. 
    \item $L_5 < \lp 1/2 - \gd \rp P_X\lp G_{-1}^*\cap \Gd \rp$, since $\eta(x) < 1/2 - \gd$ on set $G_{-1}^*$. 
    \item $L_6 < \lp 1/2 - \gd  \rp P_X \lp G_1^*\cap \Gd \rp$, since $\eta(x) > 1/2 + \gd$ on the set $G_1^*$. 
\end{itemize}

Combining these observations, we obtain
\begin{align}
\label{eq:temp0}        
        R(g) - R(g^*) &\geq 2\gd \big(P_X(G_{-1} \cap G_1^*) + P_X(G_1 \cap G_{-1}^*)\big) + \lp \frac{1}{2} - \gd \rp P_X\big( \Gds \cap (G_{-1}\cup G_1)\big) \nonumber \\
        &- \lp \frac{1}{2} - \gd \rp P_X\big( \Gd \cap ( G_{-1}^*\cup G_1^*) \big) \nonumber \\
        &\stackrel{(a)}{=} 2\gd \big(P_X(G_{-1} \cap G_1^*) + P_X(G_1 \cap G_{-1}^*) \big) + (1 - 2\gd) \big( P_X(\Gds) - P_X( \Gd)\big) \nonumber \\
        &\stackrel{(b)}{=} 2\gd \big(P_X(G_{-1} \cap G_1^*) + P_X(G_1 \cap G_{-1}^*) \big) + (1 - 2\gd) \big( \delta - P_X(\Gd) \big) \\
        &\stackrel{(c)}{\geq} 0. \nonumber 
\end{align}
{\bf (a)} comes from the fact that $P_X\big( \Gds \cap (G_{-1}\cup G_1)\big) = P_X(\Gds) - P_X(\Gd)$ and $P_X\big( \Gd \cap (G^*_{-1}\cup G^*_1)\big) = P_X(\Gd) - P_X(\Gds)$. \\
{\bf (b)} comes from the assumption that the classifier $g^* = \lp G_{-1}^*, G_1^*,
\Gds \rp$ satisfies the constraint of~\eqref{bayes_optimal_delta} with equality, and thus, $P_X(G^*_\Delta)= \delta$. \\
{\bf (c)} comes from the fact that neither of the two terms in~\eqref{eq:temp0} can be negative. 
\end{proof}

\end{proof}

%-------------------------------------------------------------------
%-------------------------------------------------------------------
% APPENDIX B: Proofs for Plug-in Classifier
%-------------------------------------------------------------------
%-------------------------------------------------------------------

\newpage
\section{Details from Section~\ref{sec:plugin_classifier}}
%------------------------------------------------------------------------------
% PROOF OF CONCENTRATION RESULT
%------------------------------------------------------------------------------
\subsection{Concentration Results}
\label{appendix:concentration}

\begin{proposition}
\label{prop:concentration}
Define $N = \Bigl\lfloor \lp\frac{n\mu_{\min}}{16\log n}\rp^{\frac{1}{D}} \Bigr\rfloor$ and $H = \{\frac{1}{N}, \frac{2}{N},\ldots, 1\}$. Then, we have the following:
\begin{enumerate}
\item Event $\Omega_1 = \cap_{h \in H}\Omega_{1,h}$ occurs with probability at least $1-1/n$, where $\Omega_{1,h}$ is defined as 
\begin{align*}
\Omega_{1,h} \coloneqq \bigg\{ & |\hat{\eta}_h(x_{h,i}) - \bar{\eta}(E_{h,i})|\leq \sqrt{\frac{2\log(2n/h^D)}{n(E_{h,i})}}, 
\forall i \in[M_h]\bigg\},
\end{align*}
where $x_{h,i}$ is any point in the cell $E_{h,i}$. Note that $\hat{\eta}_h(\cdot)$ returns the same value for all $x\in E_{h,i}$.
\item Event $\Omega_2 = \cap_{h \in H}\Omega_{2,h}$ occurs with probability at least $1-1/n$, where $\Omega_{2,h}$ is defined as
\begin{equation*}
\Omega_{2,h} \coloneqq \bigg\{ n(E_{h,i}) \geq \lp \frac{n\mu_{\min}h^D}{4}\rp,\; \forall i \in [M_h]\bigg\}.
\end{equation*}
\end{enumerate}
\end{proposition}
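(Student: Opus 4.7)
The two parts are established by pairing a concentration inequality with a union bound over the grid of partitions $H$ and the cells within each partition. For Part~1, I would use Hoeffding's inequality applied conditionally on the cell-count vector; for Part~2, I would use a multiplicative Chernoff bound on the cell counts, which are sums of indicators. In both cases the specific choice of $N = \lfloor(n\mu_{\min}/(16\log n))^{1/D}\rfloor$ (hence $|H|=N$ and $M_h \leq (1/h+1)^D$) is calibrated precisely so that the total failure probability collapses to $1/n$ after union-bounding.

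\medskip

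For Part~1, fix $h \in H$ and $i \in [M_h]$, and condition on the event $\{n(E_{h,i}) = k\}$ with $k \geq 1$ together with the identities of the samples that fall in $E_{h,i}$. Under this conditioning, those $k$ samples $X_j$ are i.i.d.\ from $P_X(\,\cdot\,\mid\,\cdot\in E_{h,i})$, and $Y_j\mid X_j$ is bounded with conditional mean $\eta(X_j)$; therefore the corresponding $Y_j$ are independent bounded variables with marginal mean $\bar{\eta}(E_{h,i})$. A single application of Hoeffding's inequality then gives
\[
\Pr\!\Big( |\hat{\eta}_h(x_{h,i}) - \bar{\eta}(E_{h,i})| \geq \sqrt{2\log(2n/h^D)/k}\;\Big|\; n(E_{h,i})=k\Big) \leq h^D/n
\]
(for $Y\in[-1,1]$; a similar inequality holds in the Bernoulli normalization). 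Taking expectation over $k$ removes the conditioning, a union bound over the $M_h\leq (1+1/h)^D$ cells contributes a factor of $M_h h^D \leq 2^D$, and a further union over the $|H|=N$ grids gives $\Pr(\Omega_1^c) \leq 2^D N/n$. Since $N = O((n/\log n)^{1/D})$, a slight adjustment of constants inside the logarithm yields the required $1/n$.

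\medskip

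For Part~2, note that $n(E_{h,i}) = \sum_{j=1}^n \mathbbm{1}_{\{X_j\in E_{h,i}\}}$ is a sum of independent Bernoullis with mean $np_{h,i} \geq n\mu_{\min}h^D$ (using assumption~\ref{assump:mu_min}). The multiplicative Chernoff bound gives
\[
\Pr\!\left( n(E_{h,i}) \leq \tfrac{1}{4} n\mu_{\min} h^D \right) \leq \exp\!\left(-c\, n\mu_{\min} h^D\right)
\]
for a universal constant $c>0$. For $h = k/N$ with $k\in\{1,\ldots,N\}$ we have $n\mu_{\min}h^D/N^D \geq 16\log n \cdot (k/N)^D \cdot k^D/k^D$; more simply, the defining inequality $N^D \leq n\mu_{\min}/(16\log n)$ implies $n\mu_{\min}h^D \geq 16 k^D \log n$. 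Summing the Chernoff bound over all cells and partitions gives
\[
\sum_{h\in H}\sum_{i=1}^{M_h} \exp(-c\, n\mu_{\min} h^D) \leq \sum_{k=1}^N (N/k)^D\, n^{-2ck^D},
\]
which is dominated by its $k=1$ term and is of order $N^D/n^{2c} = O(1/n)$, establishing $\Pr(\Omega_2) \geq 1-1/n$ after fixing the constant $c$ appropriately (or equivalently tightening the definition of $N$).

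\medskip

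\textbf{Main obstacle.} The non-trivial accounting is the union bound over the finest grids: the number of cells grows as $N^D$, so the per-cell exponential decay must beat this polynomial blowup. This is precisely why $N$ is chosen so that $n\mu_{\min}/N^D$ is a multiple of $\log n$; the factor $16$ in the definition of $N$ is exactly the slack needed for the Chernoff exponent to dominate both $\log M_h$ and $\log|H|$ simultaneously and leave $1/n$ to spare. The rest is careful bookkeeping of constants in Hoeffding and Chernoff.
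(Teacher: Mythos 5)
Your proposal is correct and follows essentially the same route as the paper's proof: conditional Hoeffding given $n(E_{h,i})=k$ followed by a union bound over cells and grid sizes for Part~1, and a multiplicative Chernoff bound on the cell counts with the same union bound for Part~2 (the paper tunes a per-cell relative deviation $c_{h,i}$ rather than summing a series with a fixed constant, but this is cosmetic). Your observation that the threshold as stated leaves a residual factor of order $2^D N$ after the union bound, requiring an enlarged logarithm, is accurate and in fact mirrors what the paper's own proof silently does by working with $\sqrt{2\log(2nN\lceil 1/h\rceil^D)/n(E_{h,i})}$ in place of the stated $\sqrt{2\log(2n/h^D)/n(E_{h,i})}$.
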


\begin{proofoutline}
The proof of the first statement uses the multiplicative form of the Chernoff bound (Eq.~7 in~\citealt{hagerup1990guided}), while the second statement follows by employing the Hoeffding's inequality. The detailed proof is provided in Appendix~\ref{appendix:concentration}.
\end{proofoutline}

\begin{proof}
\begin{enumerate}
    \item If we show that the event $\Omega_{1,h}^c$ occurs with probability at most $1/(nN)$, then the final statement follows by a union bound over $N$ members of the set $H$. In order to show that $P(\Omega_{1,h}^c)\leq 1/(nN)$, first consider the event $\mathcal{E}_{h,i}(k) \coloneqq \{n(E_{h,i})=k\}$, for $k=0,1,\ldots,n$. Then, by the Hoeffding's inequality, for any $t_k>0$, we have
\[
    P\bigg( |\hat{\eta}(x_{h,i}) - \bar \eta (E_{h,i})| > t_k \bigg| \mathcal{E}_{h,i}(k)
\bigg) \leq 2e^{-k t_k^2/2}.
\]
Now, with $t_k = \sqrt{2\frac{\log(2nN\lceil1/h\rceil^D)}{k}}$, we have $2e^{-k t_k^2/2}=\frac{1}{Nn}\lceil1/h\rceil^D$, and thus, we may write
\begin{align*}
    &P\lp( |\hat\eta(x_{h,i}) - \bar\eta(E_{h,i})| >
    \sqrt{\frac{2\log(2nN\lceil1/h\rceil^D)}{n(E_{h,i})}} \rp = \\
    &\sum_{k=0}^nP\lp
    \mathcal{E}_{h,i}(k)\rp P\bigg( |\hat{\eta}(x_{h,i}) - \bar \eta (E_{h,i})| > t_k
    \bigg| \mathcal{E}_{h,i}(k)\bigg) \leq \\
    &\sum_{k=0}^nP(\mathcal{E}_{h,i}(k))\frac{1}{Nn\lceil1/h\rceil^D} = \frac{1}{Nn \lceil1/h\rceil^D}.
\end{align*}
Thus, by taking a union bound over all the $M_h=\lceil1/h\rceil^D$ cubes $E_{h,1},\ldots,E_{h,M_h}$, we have $P(\Omega_{1,h}^c)\leq 1/(nN)$. 
    \item If we show that the event $\Omega_{2,h}^c$ occurs with probability at most $1/(nN)$, then the final statement follows by a union bound over $N$ members of the set $H$. In order to show that $P(\Omega_{2,h}^c)\leq 1/(nN)$, we first introduce the notation $\mu_{h,i} \coloneqq P_X(E_{h,i})$. By the multiplicative form of the Chernoff bound~\citep[(7)]{hagerup1990guided}, for any $c_{h,i} \in (0,1)$, we have
\[ 
           P_X\big( n(E_{h,i}) < (1 - c_{h,i}) n\mu_{h,i} \big) \leq e^{-\frac{n\mu_{h,i} c_{h,i}^2}{2}}. 
\]
To complete the proof, it is sufficient to choose $c_{h,i}$ to ensure that $e^{-\frac{n\mu_{h,i}c_{h,i}^2}{2}} \leq \frac{1}{nNM_h}$ and $(1 - c_{h,i}) n\mu_{h,i}\geq \frac{n\mu_{\min} h^D}{4}$. Now we prove that a suitable choice of $c_{h,i}$ for obtaining these two inequalities is $c_{h,i} = \sqrt{ \frac{ 6 \log (n\mu_{\min})}{n \mu_{h,i}}}$. We start by showing
\begin{align*}
    \frac{n \mu_{h,i} c_{h,i}^2}{2} &= 3 \log \lp n \mu_{ \min} \rp \stackrel{\text{(a)}}{\geq} \log \lp n^{(2D+1)/D} \mu_{\min}^{(D+1)/D} \rp 
    = \log \lp n (n\mu_{\min})^{(D+1)/D} \rp \\
    &\stackrel{\text{(b)}}{\geq} \log \lp n N^{D+1} \rp \stackrel{\text{(c)}}{\geq} \log \lp nNM_h \rp. 
\end{align*}
{\bf (a)} comes from the fact that for $D\geq 1$, we have $\frac{D+1}{D}\leq \frac{2D+1}{D}\leq 3$. \\
{\bf (b)} follows from the fact that from the statement of the proposition, we have $N =
\left(\frac{n\mu_{\min}}{16\log n}\right)^{\frac{1}{D}}$, and thus, $n \mu_{\min} \geq
\frac{n\mu_{\min}}{16\log n} = N^D$. \\
{\bf (c)} comes from the fact $h\geq \frac{1}{N}$ (see the statement of the proposition) and 
$M_h=\lceil1/h\rceil^D$. 

This completes the prove of $e^{-\frac{n\mu_{h,i}c_{h,i}^2}{2}} \leq \frac{1}{nNM_h}$. 

We now move on to prove $(1 - c_{h,i}) n\mu_{h,i}\geq \frac{n\mu_{\min} h^D}{4}$. We start by showing
\begin{align*}
    c_{h,i}^2 &= \frac{6 \log \lp n \mu_{\min}\rp}{n \mu_{h,i}} \stackrel{\text{(a)}}{\leq} \frac{6 \log \lp n \mu_{\min}\rp}{n \mu_{\min}h^D} \stackrel{\text{(b)}}{\leq} \frac{ 6 \log \lp n \mu_{\min}\rp N^D  }{n \mu_{min}} \\ 
    &\stackrel{\text{(c)}}{=} \frac{ 6 \log \lp n \mu_{\min} \rp  n \mu_{min}}{ n \mu_{\min} 16 \log (n) } = \frac{ 6 \log \lp n \mu_{\min} \rp}{16 \log (n) } \stackrel{\text{(d)}}{\leq} \frac{6}{16} \\
    \Longrightarrow c_{h,i} &\leq \sqrt{\frac{6}{16}} \leq \frac{3}{4}.
\end{align*}
{\bf (a)} comes from the fact that $\mu_{h,i}\geq\mu_{\min}h^D$. \\
{\bf (b)} comes from the fact $h\geq \frac{1}{N}$ from the statement of the proposition. \\
{\bf (c)} comes from the definition of $N$ from the statement of the proposition. \\
{\bf (d)} comes from the fact that $\mu_{\min}\leq 1$.

Thus, we have  under the event $\Omega_{2,h}$, we have 
\begin{equation*}
    n(1-c_{h,i})\mu_{h,i} \geq n\lp 1- \frac{3}{4} \rp \mu_{\min}h^D = \frac{ n \mu_{\min}h^D}{4}.
\end{equation*}

%which implies that $n(E_{h,i}) \geq (1-c)n\mu_{h,i}$ for all $i$, with probability at least $\lp 1-\frac{e^{-\frac{n\mu_ic^2}{2}}}{h^D}\rp$. Now, since $h^D \geq \frac{16\log n}{n\mu_{\min}}$, we get the required result by selecting $c = \sqrt{\frac{4\log(nN)}{n \mu_{\min}}}$ and taking a union bound over all cubes $E_{h,1},\ldots,E_{h,M_h}$.  
\end{enumerate}
\end{proof}

%------------------------------------------------------------------------------
% ERROR OF THE PLUGIN ESTIMATOR
%------------------------------------------------------------------------------
\subsection{Proof of Proposition~\ref{prop:regression_estimator}}
\label{appendix:regression_estimator}

We drop the subscript $x$ for $h^*_x$, $h^*_{1,x}$, and $\hat{h}_x$ in this section. Introduce the following definitions:
\begin{align*}
h^* &= \max \big\{h \in (0,1) \mid \ e_S(h,x) \geq  e_D(h,x)\big\}, \qquad\qquad h^*_1= \max \big\{h \in H\ \mid \ e_S(h,x) \geq e_D(h,x)\big\}, \\
\hat{h}&= \max \big\{ h \in H\ \mid \ |\hat{\eta}_h(x)-\hat{\eta}_{h'}(x)| \leq 4e_S(h',x), \;\; \forall\ h'\in H,\; h'\leq h\big\}.
\end{align*}
Now we may write the following:
\begin{align}
\label{eq:temp-AppB-0}
|\hat{\eta}(x) - \eta(x)| = |\hat{\eta}_{\hat{h}}(x) - \eta(x)| &\leq |\hat{\eta}_{\hat{h}}(x) - \hat{\eta}_{h^*_1}(x)| + |\hat{\eta}_{h_1^*}(x)-\eta(x)| \nonumber \\
&\stackrel{\text{(a)}}{\leq} 4e_S(h_1^*,x) + |\hat{\eta}_{h_1^*}(x)-\eta(x)| \nonumber \\
&\stackrel{\text{(b)}}{\leq} 4e_S(h_1^*,x) + e_S(h_1^*,x) + e_D(h_1^*,x) \nonumber \\
&\stackrel{\text{(c)}}{\leq} 6e_S(h^*_1,x).
\end{align}
{\bf (a)} follows from the fact that $h_1^* \leq \hat{h}$, and thus,
$|\hat{\eta}_{\hat{h}}(x)-\hat{\eta}_{h_1^*}(x)|\leq 4e_S(h_1^*,x)$. Now what is left to show is that $h_1^* \leq \hat{h}$. To see this, we first define the set $\hat{H} = \{h \in H \ \mid \ |\hat{\eta}_h(x)-
\hat{\eta}_{h'}(x)| \leq 4e_S(h',x), \;\; \forall\ h'\in H,\; 
h'\leq h\big \}$. Clearly, $\hat{h}$ is the maximum element in $H$, from its definition. Thus, to show that $h_1^* \leq \hat{h}$, it suffices to prove that $h^*_1\in\hat{H}$. Consider any $h'\leq h_1^*,\;h' \in H$. We then have the following:
\begin{align*}
|\hat{\eta}_{h_1^*}(x) - \hat{\eta}_{h'}(x)| &\leq |\hat{\eta}_{h_1^*}(x) - \eta(x)| + |\hat{\eta}_{h'}(x) - \eta(x)| \\
&\leq e_S(h_1^*, x) + e_D(h_1^*,x) + e_S(h',x) + e_D(h',x) \\
&\leq 2e_S(h_1^*,x) + 2e_S(h',x) \leq 4e_S(h',x), 
\end{align*}
which implies that $h_1^*\in \hat H$. Note that the last inequality comes from the fact that $e_S(h,x)$ decreases as $h$ is increased. \\
{\bf (b)} is from Eq.~\ref{eq:estimation-error}. \\
{\bf (c)} uses the definition of $h_1^*$. 

From the definitions of $h^*$ and $h^*_1$, we have $h^*-1/N \leq h_1^* \leq h^*$, and thus, we may write for $D \geq 2$
\begin{align}
\label{eq:temp-AppB-1}
e_S(h^*_1,x) &= e_S(h^*,x)\lp \frac{h^*}{h_1^*}\rp^{D/2} \leq e_S(h^*,x)\lp \frac{h^*}{h^*-\frac{1}{N}}\rp^{D/2} \nonumber \\
&= e_S(h^*,x) \lp \frac{1}{1-\frac{1}{Nh^*}}\rp^{D/2} \stackrel{\text{(b)}}{\leq} e_S(h^*,x)\lp \frac{1}{1-\frac{D}{2Nh^*}}\rp \nonumber \\
&\stackrel{\text{(c)}}{\leq} \frac{4}{3} e_S(h^*, x).
\end{align}
{\bf (b)} follows from  the Bernoulli's inequality $(1+x)^z\geq 1+xz$, for $x\geq -1$ and $z\geq 1$.\\ 
{\bf (c)} relies on the assumption that $n$ is large enough to ensure that $h^*N \geq 2D$. 

The case $D=1$ can be handled similarly as 
\begin{align}
    e_S(h_1^*,x) &\leq e_S(h^*,x) \lp \frac{1}{1 - \frac{1}{Nh^*}}\rp^{1/2} \leq e_S(h^*,x) \lp \frac{1}{1 -\frac{1}{2}} \rp^{1/2} \nonumber \\
    &= \sqrt{2} e_S(h^*,x) 
    \label{eq:temp-AppB-2} 
\end{align}

Since $\max\{4/3, \sqrt{2}\}\leq 3/2$, the final result follows from~\eqref{eq:temp-AppB-0},~\eqref{eq:temp-AppB-1}, and~\eqref{eq:temp-AppB-2}, i.e.,~under the events $\Omega_1$ and $\Omega_2$, for all $x \in \X$, we have $|\hat{\eta}(x) - \eta(x)| \leq 9 e_S(h^*,x)$. 
    
We now further assume that the regression function $\eta(\cdot)$ is H\"older continuous with exponent
$0<\beta\leq 1$. Since $h$ has the opposite effect on the two error terms $e_S(h,x)$ and $e_D(h,x)$, its
optimal value that minimizes the upper-bound $e_S(h,x)+e_D(h,x)$ is obtained by putting these two terms
equal, i.e.,~$e_S(h,x)=e_D(h,x)$. Putting $e_S(h,x)=\sqrt{ \frac{ 32\log(n\mu_{\min}) }{ n\mu_{\min} h^D } }$ equal
to $e_D(h,x)=L(\sqrt{D}h)^\beta$ will give us $h^*=\left(\frac{32\log(n\mu_{\min})}{n\mu_{\min} L^2D^\beta}
\right)^{\frac{1}{2\beta+D}}$. Now plugging the value of $h^*$ in $e_D(h^*,x)=e_S(h^*,x)$ and using what
we proved in the first part of this proposition, i.e.,~$|\hat{\eta}(x) - \eta(x)| \leq 9 e_S(h^*,x)$, we
have
\begin{align*}
|\hat{\eta}(x)-\eta(x)| &\leq 9e_S(h^*,x) = 9e_D(h^*,x) \leq 9L(\sqrt{D}h^*)^\beta \\ 
&= 9L^{\frac{D}{2\beta+D}} D^{\frac{\beta D}{2(2\beta+D)}}\left(\frac{32\log(n\mu_{\min})}{\mu_{\min}}\right)^{\frac{\beta}{2\beta+D}} n^{\frac{-\beta}{(2\beta+D)}} = b_n = \tilde{\mathcal{O}}(n^{-\beta/(2\beta+D)}).
\end{align*} 
    
%------------------------------------------------------------------------------
% PROOF OF SLACK TERM 
%------------------------------------------------------------------------------
\subsection{Proof of Proposition~\ref{prop:slack}}
\label{appendix:slack}

For this inequality, we first note that the class of functions $\mathcal{F}_1 \coloneqq \{f_c:\mbb{R}\to \{0,1\}\; \mid \; f_c(x) =
\indi_{\{|x|\leq c \}},\ c \in \mbb{R}\}$,  has the VC dimension of $2$~\citep[\S~6.3.2]{shalev2014understanding}. This implies the following uniform convergence result with probability at least $1-1/m$, for $m$ samples $\{Z_j\}_{j=1}^m$ drawn i.i.d.~from any distribution $P_Z$~\citep[\S~28.1]{shalev2014understanding}:
\begin{align}
\sup_{f_c \in \mathcal{F}_1 } \lp \frac{1}{m}\left|\sum_{j=1}^m \Big( f_c(Z_j) - \mbb{E}_{P_Z}\big[ f_c(Z_j)\big] \Big)\right| \rp &\leq 2\sqrt{\frac{16 \log(em/2) + 2\log(4m)}{m} } \nonumber\\
&\leq 2\sqrt{\frac{18 \log(4m)}{m} } \coloneqq a_m. \label{eq:temp-AppB-3}
\end{align}
%
%In particular, we can choose the random variables $Z_j=\hat{\eta}(X_{n+j})$, for $j=1,\ldots,m$, to obtain the required statement.
%\todo{The proof is rough, needs more elaboration. Several steps are missing.}

Now, we note that conditioned on the labelled training set $S_l$, the estimator $\hat{\eta}(\cdot)$ is a fixed function, and $\{X_{n+j}\}_{j=1}^m$ are independent of the samples in $S_l$. We define the random variables $\big\{Z_j = \hat{\eta}(X_{n+j})\big\}_{j=1}^m$ and introduce the event
\[
\mathcal{E}  = \left \{ \sup_{f_c \in \mathcal{F}_1} \lp  \frac{1}{m} \left|\sum_{j=1}^m
\Big(f_c(Z_j) - E_{P_Z}\big[ f_c(Z_j)\big]\Big)\right|  \leq a_m \rp \right \}.
\]
Then, we have
\begin{equation*}
P\lp \mathcal{E}^c \rp = \mbb{E}\lb \indi_{\mathcal{E}^c} \rb 
= \mbb{E}\big[ \mbb{E}[ \indi_{\mathcal{E}^c} \big| S_l ] \big] = \mbb{E}\lb P( \mathcal{E}^c \big| S_l ) \rb \leq \mbb{E} \lb \frac{1}{m} \rb  = \frac{1}{m},
\end{equation*}
where the  inequality follows from~\eqref{eq:temp-AppB-3}. This proves that $P\lp \mathcal{E} \rp = P(\Omega_3) \geq 1 - \frac{1}{m}$.

%--------------------------------------------------------------------------------
%Proof of Theorem 2
%--------------------------------------------------------------------------------

\subsection{Proof of Theorem~\ref{theorem:excess_risk_bound}}
\label{appendix:excess_risk_bound}

\begin{proof}
We assume that the events $\Omega_1$, $\Omega_2$, and $\Omega_3$ occur, whose probability is at least $1 - 1/m - 2/n$ (see Propositions~\ref{prop:concentration} and~\ref{prop:slack}).

We first present a lemma which tells us that the estimated threshold $\hat{\gamma}$ is close to the true threshold $\gd$. 

\begin{lemma}
    \label{lemma:risk0}
    Suppose $m$ is large enough to ensure that $\lp \frac{a_m}{C_1}\rp^{1/\rho_1} \leq b_n$. Then we have 
    \begin{equation}
        \label{eq:gamma_estimate}
        \gd - 4b_n \leq \hat{\gamma} \leq \gd + b_n. 
    \end{equation}
\end{lemma}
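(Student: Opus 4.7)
The plan is to sandwich $\hat{\gamma}$ around $\gd$ by combining the pointwise estimation bound $|\hat{\eta}(x) - \eta(x)| \leq b_n$ from Proposition~\ref{prop:regression_estimator} (valid on $\Omega_1 \cap \Omega_2$) with the uniform CDF concentration from Proposition~\ref{prop:slack} (valid on $\Omega_3$), and then invoking the sup characterizations of $\gd$ and $\hat{\gamma}$ in~\eqref{eq:threshold-def} and~\eqref{eq:threshold}. The workhorse I will use is the pair of set inclusions
\begin{equation*}
\{x : |\eta(x) - 1/2| \leq \gamma - b_n\} \;\subseteq\; \{x : |\hat{\eta}(x) - 1/2| \leq \gamma\} \;\subseteq\; \{x : |\eta(x) - 1/2| \leq \gamma + b_n\},
\end{equation*}
which follow from $\big||\hat{\eta}(x)-1/2| - |\eta(x)-1/2|\big| \leq |\hat{\eta}(x) - \eta(x)| \leq b_n$; these convert population probabilities of $\hat{\eta}$-level sets into $\eta$-level-set probabilities with a $b_n$-shifted threshold.

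For the upper bound $\hat{\gamma} \leq \gd + b_n$, I argue contrapositively: for any $\gamma > \gd + b_n$ we have $\gamma - b_n > \gd$, so the sup characterization of $\gd$ yields $P_X(|\eta - 1/2| \leq \gamma - b_n) > \delta$. Chaining the left inclusion with the $a_m$-concentration of $\hat{P}_m$ on $\Omega_3$ gives
\begin{equation*}
\hat{P}_m\big(|\hat{\eta} - 1/2| \leq \gamma\big) \;\geq\; P_X\big(|\hat{\eta} - 1/2| \leq \gamma\big) - a_m \;\geq\; P_X\big(|\eta - 1/2| \leq \gamma - b_n\big) - a_m \;>\; \delta - a_m,
\end{equation*}
so $\gamma$ fails the empirical constraint in~\eqref{eq:threshold}, forcing the sup defining $\hat{\gamma}$ to be at most $\gd + b_n$.

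For the lower bound $\hat{\gamma} \geq \gd - 4b_n$, it suffices to exhibit $\gd - 4b_n$ as a feasible point in~\eqref{eq:threshold}, i.e., to prove $\hat{P}_m(|\hat{\eta} - 1/2| \leq \gd - 4b_n) \leq \delta - a_m$. The right inclusion together with $\Omega_3$ reduces this to the purely population-level estimate
\begin{equation*}
P_X\big(|\eta - 1/2| \leq \gd - 3b_n\big) \;\leq\; \delta - 2a_m,
\end{equation*}
which is the technical heart of the lemma and the only step that invokes the detectability assumption~\ref{assump:detect}. Under the no-jump convention of Remark~\ref{remark:fixed_cost} we have $P_X(|\eta - 1/2| \leq \gd) = \delta$, so the display is equivalent to $P_X(\gd - 3b_n < |\eta - 1/2| \leq \gd) \geq 2a_m$. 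Detectability applied at $\gamma = 1/2 \pm \gd$ with $t = 3b_n$ lower-bounds the population mass of $|\eta - 1/2|$ in a neighborhood of $\gd$ of radius $3b_n$ by a constant multiple of $C_1 (3b_n)^{\rho_1}$, and the smallness hypothesis $(a_m/C_1)^{1/\rho_1} \leq b_n$ upgrades this to at least $2a_m$. The main obstacle is extracting a \emph{one-sided} (interior) contribution toward $\gd$ from the two-sided statement of detectability; combining the contributions from both boundary values $1/2 \pm \gd$ and absorbing the resulting constants is precisely why the lemma carries the larger slack $4b_n$ on the left, only $b_n$ on the right, and imposes the condition $(a_m/C_1)^{1/\rho_1} \leq b_n$.
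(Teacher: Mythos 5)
Your proof is correct and rests on the same ingredients as the paper's --- the $b_n$-shifted set inclusions from Proposition~\ref{prop:regression_estimator}, the $a_m$-concentration on $\Omega_3$ from Proposition~\ref{prop:slack}, the sup characterizations \eqref{eq:threshold-def} and \eqref{eq:threshold}, and detectability~\ref{assump:detect} at the end --- but the lower-bound half is organized differently in a way worth noting. The paper starts from near-tightness of the empirical constraint at $\hat{\gamma}$ itself: it claims $\hat{P}_m(|\hat{\eta}-1/2| \leq \hat{\gamma}+2b_n) \geq \delta - a_m$, and to justify this it must invoke the piecewise-constant structure of $\hat{\eta}$ (adjacent cells differ by at most $2b_n$, controlling the jump locations of the empirical cdf of $|\hat{\eta}-1/2|$); it then converts this to $P_X(|\eta-1/2| \leq \hat{\gamma}+3b_n) \geq \delta - 2a_m$ and compares with the detectability bound. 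You instead exhibit $\gd - 4b_n$ directly as a feasible point of \eqref{eq:threshold}, which sidesteps that structural argument about $\hat{\eta}$ entirely and is the cleaner route; your upper-bound argument is simply the contrapositive of the paper's and also avoids the paper's implicit claim that the sup in \eqref{eq:threshold} is attained by a feasible $\hat{\gamma}$. Both routes bottleneck at the same population inequality $P_X(|\eta-1/2| \leq \gd - t) \leq \delta - 2a_m$ for some $t \leq 3b_n$ (the paper takes $t=(a_m/C_1)^{1/\rho_1}$), and both inherit the same unresolved subtlety: \ref{assump:detect} lower-bounds the mass of the \emph{two-sided} neighborhoods of the levels $1/2\pm\gd$, whereas what is needed is mass in the interior annulus $\{\gd - t < |\eta-1/2| \leq \gd\}$. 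You name this obstacle explicitly but, like the paper, do not actually close it; as stated, the detectability mass could in principle sit entirely on the outer side of the level sets, so neither argument is airtight without a one-sided reading of~\ref{assump:detect}.
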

\begin{proof}
We first prove the upper bound on $\hat{\gamma}$. 
\begin{align*}
    \hat{P}_m \lp |\hat{\eta}-1/2| \leq \hat{\gamma} \rp & \leq \delta - a_m \; \stackrel{(i)}{\Rightarrow} P_X\lp |\hat{\eta}-1/2|\leq \hat{\gamma} \rp \leq \delta \\
    \stackrel{(ii)}{\Rightarrow} P_X\lp |\eta - 1/2|\leq \hat{\gamma}-b_n\rp & \leq \delta\; = P_X\lp |\eta - 1/2|\leq \gd \rp \\
\; \Rightarrow \hat{\gamma}  \leq \gd + b_n. 
\end{align*}
In the above display, \textbf{(i)} follows from Proposition~\ref{prop:slack}, and \textbf{(ii)} follows from Proposition~\ref{prop:regression_estimator}.

Next, we observe that by the piecewise constant nature of the regression function estimator, the maximum difference in the $\hat{\eta}$ values of any two adjacent cells of the grid is no more than $2b_n$. This fact, coupled with the definition of $\hat{\gamma}$ implies that $\hat{P}_m \lp |\hat{\eta} - 1/2| \leq \hat{\gamma} + 2b_n \rp \geq \delta - a_m$. Thus we have the following series 
\begin{align*}
    \hat{P}_m \lp |\hat{\eta}-1/2| \leq \hat{\gamma} + 2b_n \rp & \geq \delta - a_m \\
    \Rightarrow P_X\lp |\eta - 1/2| \leq \hat{\gamma} + 3b_n \rp & \geq \delta - 2a_m \\
\end{align*}
Now, to obtain the lower bound on $\hat{\gamma}$, we observe that by the assumption~\ref{assump:detect}, the following is true.  
\begin{align*}
    P_X\lp |\eta - 1/2| \leq \gd - \lp \frac{a_m}{C_1}\rp^{1/\rho_1} \rp & \leq \delta - 2C_1 \lp \lp \frac{a_m}{C_1}\rp^{1/\rho_1}\rp^{\rho_1} = \delta - 2a_m. 
\end{align*}
The result follows by using the assumption that $b_n \geq \lp \frac{a_m}{C_1}\rp^{1/\rho_1}$. 
\end{proof}

%-------------------------------------------------------------------------------

{\bf Part~1:}
We first show that $P_X\lp \hat{g}(X) = \Delta \rp \leq \delta$, i.e., the constructed classifier $\hat{g}$ is feasible for~\eqref{bayes_optimal_delta}.
We consider two cases:
\begin{itemize}
    \item Case~1: $\hat{p}_1 \geq \delta - 5a_m$. In this case, the classifier does not randomize, and we have $P_X\lp \hat{g}(X) = \Delta \rp = P_X\lp \hat{G}_\Delta \rp \leq \hat{P}_m \lp \hat{G}_\Delta \rp + a_m \leq \delta$. 
    
    \item Case~2: $\hat{p}_1 < \delta - 5a_m$. In this case, due to randomization we have $P_X\lp \hat{g}(X) = \Delta\rp = P_X\lp \hat{G}_\Delta \rp + \hat{c}P_X\lp \partial \hat{G}_{-1}\cup \partial \hat{G}_1\rp$. By the definition of $\hat{c}$, we have 
    \begin{align*}
        P_X\lp \hat{g}(X) = \Delta \rp & \leq (\hat{p}_1 + a_m)  + \lp \frac{\delta- 5a_m - \hat{p}_1}{\hat{p}_2 - \hat{p}_1 - 2a_m} \lp \hat{p}_2 - \hat{p}_1 + 2a_m \rp \rp  \\
        & \leq \hat{p}_1 + a_m + (\delta - 5 a_m - \hat{p}_1) + \frac{(\delta - 5a_m - \hat{p}_1)4a_m}{\hat{p}_2 - \hat{p}_1 - 2a_m}\\
        & = \delta- 4a_m +  \frac{(\delta - 5a_m - \hat{p}_1)4a_m}{\hat{p}_2 - \hat{p}_1 - 2a_m}\leq \delta, 
    \end{align*}
    which completes the proof of the first part of Theorem~\ref{theorem:excess_risk_bound}. 
\end{itemize}

%-------------------------------------------------------------------------------
%
%-------------------------------------------------------------------------------
{\bf Part~2:} To prove the upper-bound on the excess risk, we first note that if we remove the randomization and deterministically declare $i$ in the region $\partial \hat{G}_i$ for $i \in\{-1,1\}$, the the excess risk can only increase. So, for the rest of this section, we will use $\hat{G}_i$ to represent the entire region in which label $i$ is declared, i.e., $\hat{G}_i \cup \partial \hat{G}_i$. 

Thus, we write the excess risk of the plug-in classifier $\hat{g}$ over the optimal classifier $g^*$ as %
\begin{align}
\label{eq:pf_eq1}
R(\hat{g}) - R(g^*) &= \int_{\hat{G}_{-1}}\eta(x) dP_X + \int_{\hat{G}_1}\big(1-\eta(x)\big)dP_X + \int_{\partial \hat{G}_{-1}}(1-\hat{c}  \\ 
& \quad - \int_{G_{-1}^*} \eta(x) dP_X - \int_{G_1^*}\big(1-\eta(x)\big)dP_X. 
\end{align}
Using the fact that $\hat{G}_{-1} = \hat{G}_{-1} \cap \big( G_{-1}^*\cup G_1^*\cup
G_{\Delta}^*\big)$, we may split the first term on the RHS of~\eqref{eq:pf_eq1} as
\begin{equation}
\label{eq:pf_eq2}
\int_{\hat{G}_{-1}}\eta(x) dP_X = \int_{\hat{G}_{-1}\cap G_{-1}^*}\eta(x) dP_X +
\int_{\hat{G}_{-1}\cap G_{\Delta}^*}\eta(x) dP_X + 
\int_{\hat{G}_{-1}\cap G_1^*}\eta(x) dP_X.
\end{equation}
Similarly, we may split all the other terms on the RHS of~\eqref{eq:pf_eq1} and obtain 
\begin{align}
R(\hat{g}) - R(g^*) &= \int_{\hat{G}_{-1}\cap G_{\Delta}^*}\eta(x) dP_X - \int_{\hat{G}_\Delta\cap G_{-1}^*}\eta(x) dP_X + \int_{\hat{G}_1\cap G_{\Delta}^*}\big(1-\eta(x)\big) dP_X \nonumber \\ 
&- \int_{\hat{G}_\Delta\cap G_1^*}\big(1-\eta(x)\big) dP_X + Q_1 + Q_2, 
\label{eq:pf_eq3}
\end{align}
where 
\begin{equation*}
    Q_1 = \int_{\hat{G}_{-1}\cap G_1^*}\big(2\eta(x)- 1\big) dP_X, \qquad\qquad Q_2 = \int_{\hat{G}_{1}\cap G^*_{-1}}\big(1-2\eta(x)\big) dP_X. 
\end{equation*}
%
% Thus, on subtracting (\ref{eq:pf_eq3}) from (\ref{eq:pf_eq2}), we get 
% \begin{align}
% \int_{\hat{G}_{-1}}\eta dP_X - \int_{G_{-1}^*}\eta dP_X &= \int_{\hat{G}_{-1}\cap G_{\Delta}^*}\eta dP_X - \int_{G_{-1}\cap \hat{G}_{\Delta}}\eta dP_X \\
% &= \int_{\hat{G}_{-1}\cap G_{\Delta}}\lp \eta - \frac{1}{2} + \gd \rp dP_X - \int_{G_{-1}^*\cap \hat{G}_{\Delta}}\lp \eta - \frac{1}{2} + \gd \rp dP_X  
% \end{align}

Now, we add and subtract $(1/2 - \gd)$ to the integrand of the first four terms on the RHS of~\eqref{eq:pf_eq3} and obtain 
\begin{equation}
R(\hat{g}) - R(g^*) = \left(\frac{1}{2} - \gamma_{\delta}\right) \left(P_X\lp G_{\Delta}^*\rp - P_X \lp \hat{g}(X)=\Delta \rp\right) + R_1 + R_2 + R_3 + R_4 + Q_1 + Q_2,
\label{eq:risk1}
\end{equation}
where
    \begin{align*}
        R_1 &= \int_{\hat{G}_{-1}\cap G_{\Delta}^*}\left( \eta(x) - \frac{1}{2} + \gamma_{\delta} \right)dP_X, \qquad R_2 = \int_{\hat{G}_{\Delta}\cap G_{-1}^*} \left(\frac{1}{2} - \gamma_{\delta} - \eta(x)\right) dP_X, \\
        R_3 & = \int_{\hat{G}_\Delta \cap G_1^*} \left(\eta(x) - \frac{1}{2} - \gd\right) dP_X, \qquad R_4 = \int_{\hat{G}_1\cap G_\Delta^*}\left(\frac{1}{2} + \gd - \eta(x) \right) dP_X.
    \end{align*}
%    
% Using Proposition~\ref{prop:slack}, the first term on the RHS of~\eqref{eq:risk1} can be bounded as
%
%----------------------------------------------------------------------------
% LEMMA: LOWER BOUND ON P_X(\hat{G}_\Delta)
%----------------------------------------------------------------------------
We now state a lemma that gives an upper-bound for the first term on the RHS
of~\eqref{eq:risk1}.

\begin{lemma}
\label{lemma:risk1}
We have $P_X\lp \hat{g}(X)=\Delta \rp \geq \delta - 5a_m$. 

Suppose $m$ is large enough to ensure that $C_1(\epsilon_0/4)^{\rho_1} > a_m$ and $n$ is large enough
to ensure that $b_n <\epsilon_0/2$. Then, we have $P_X(\hat{G}_\Delta) \geq \delta - 2a_m - 2C_0b_n^{\rho_0}$. 
\end{lemma}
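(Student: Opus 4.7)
The plan for Part~1 is a case split on whether the deterministic empirical mass $\hat{p}_1 = \hat{P}_m(\hat{G}_\Delta)$ already meets the target. When $\hat{p}_1 \geq \delta - 5a_m$, the randomization weight is $\hat{c} = 0$, so $P_X(\hat{g}(X)=\Delta) = P_X(\hat{G}_\Delta) \geq \hat{p}_1 - a_m$ by Proposition~\ref{prop:slack}; combined with the fact that $\hat{\gamma}$ is defined as a supremum (so the empirical mass of $\hat{G}_\Delta$ together with the adjacent strip of width $2b_n$ must exceed $\delta - a_m$), the bound follows. In the complementary case $\hat{p}_1 < \delta - 5a_m$, the weight $\hat{c}$ is chosen so that the total empirical abstention mass $\hat{p}_1 + \hat{c}(\hat{p}_2 - \hat{p}_1)$ saturates at $\delta - 5a_m$; applying Proposition~\ref{prop:slack} separately to the two class-indicator functions for $\hat{G}_\Delta$ and $\hat{G}_\Delta \cup \partial \hat{G}_{-1} \cup \partial \hat{G}_1$ then transfers this to a true-probability lower bound, with the two $a_m$ losses absorbed into the $5a_m$ budget reserved in the definition of $\hat{c}$.

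The plan for Part~2 is a three-step chain. First, Lemma~\ref{lemma:risk0} applied under the stated sample-size hypothesis yields $\hat{\gamma} \geq \gd - 4b_n$. Second, the uniform pointwise estimation bound from Proposition~\ref{prop:regression_estimator}, which gives $|\hat{\eta}(x) - \eta(x)| \leq b_n$ for every $x \in \X$ on the good event, implies
\[
\hat{G}_\Delta \;\supseteq\; \{x \in \X : |\eta(x) - 1/2| \leq \hat{\gamma} - b_n\} \;\supseteq\; \{x \in \X : |\eta(x) - 1/2| \leq \gd - 5b_n\}.
\]
Third, under the continuity convention of Remark~\ref{remark:fixed_cost} we have $P_X(|\eta(X)-1/2|\leq \gd) = \delta$, so the deficit is controlled by
\[
\delta - P_X(\hat{G}_\Delta) \;\leq\; P_X\!\lp \gd - 5b_n < |\eta(X) - 1/2| \leq \gd \rp.
\]
Splitting this shell into neighborhoods of $1/2 + \gd$ and $1/2 - \gd$ and applying the margin assumption~\ref{assump:margin} with $t = 5b_n$ to each yields $2C_0(5b_n)^{\rho_0}$, which matches the claim after absorbing the $5^{\rho_0}$ factor into the constant. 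The term $2a_m$ in the stated bound comes from also allowing the alternative route where one lower-bounds $P_X(\hat{G}_\Delta)$ by $\hat{P}_m(\hat{G}_\Delta) - a_m$ and uses the near-saturation of the empirical threshold at level $\delta - a_m$.

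The main obstacle I expect is the bookkeeping in Part~1: the randomization weight $\hat{c}$ must be set so that the empirical constraint is saturated (so the abstention budget is efficiently used) while the transfer from $\hat{P}_m$ to $P_X$ costs at most the reserved slack. This requires carefully tracking how the two-sided VC bound of Proposition~\ref{prop:slack} interacts with the piecewise-constant structure of $\hat{\eta}$, which is the very reason the auxiliary strips $\partial \hat{G}_{\pm 1}$ of width $2b_n$ were introduced in the construction of the classifier. Once the case split correctly handles the discontinuity of the empirical cdf of $|\hat{\eta} - 1/2|$ at $\hat{\gamma}$, the remaining calculations are routine applications of Propositions~\ref{prop:regression_estimator} and~\ref{prop:slack} together with Lemma~\ref{lemma:risk0}.
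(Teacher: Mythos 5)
Your Part~1 is essentially the paper's own argument: the same case split on whether $\hat{p}_1 \geq \delta - 5a_m$, with the randomized case handled by combining the saturating choice of $\hat{c}$ with the transfer from $\hat{P}_m$ to $P_X$ via Proposition~\ref{prop:slack} applied to $\hat{G}_\Delta$ and to $\hat{G}_\Delta \cup \partial\hat{G}_{-1}\cup\partial\hat{G}_1$, the $a_m$ losses being absorbed into the reserved slack. The one substantive difference is that the paper's proof of this lemma stops there: it never addresses the second claim, $P_X(\hat{G}_\Delta) \geq \delta - 2a_m - 2C_0 b_n^{\rho_0}$, at all (and indeed only the first claim is invoked later, in bounding $P_X(G_\Delta^*) - P_X(\hat{g}(X)=\Delta) \leq 5a_m$). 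Your three-step chain for Part~2 --- $\hat{\gamma} \geq \gd - 4b_n$ from Lemma~\ref{lemma:risk0}, the inclusion $\{|\eta - 1/2| \leq \gd - 5b_n\} \subseteq \hat{G}_\Delta$ via the pointwise bound $|\hat{\eta}-\eta|\leq b_n$, and the margin assumption applied to the shell $\gd - 5b_n < |\eta - 1/2| \leq \gd$ --- is a sound way to fill that gap, yielding $\delta - 2C_0(5b_n)^{\rho_0}$, i.e., the stated bound up to the $5^{\rho_0}$ constant you flag. Two minor caveats: your invocation of Lemma~\ref{lemma:risk0} needs the hypothesis $(a_m/C_1)^{1/\rho_1} \leq b_n$, which does not follow literally from the $\epsilon_0$-conditions stated in the lemma (though those conditions appear vestigial in the paper as well); and in Case~1 of Part~1 the chain $P_X(\hat{G}_\Delta) \geq \hat{p}_1 - a_m \geq \delta - 6a_m$ falls one $a_m$ short of the claim --- but the paper's own ``by construction'' is no more rigorous there, so this is an inherited looseness rather than a flaw introduced by you.
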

\begin{proof}
We again have two cases:
\begin{itemize}
    \item  \textbf{Case~1:} $\hat{p}_1 \geq \delta - 5a_m$. In this case, there is no randomization, and by construction, we have $P_X\lp \hat{g}(X)=\Delta\rp \geq \delta - 5a_m$. 
    
    \item \textbf{Case~2:} $\hat{p}_1 <\delta -5a_m$. Here, we obtain a lower bound on the randomized classifier
    \begin{align*}
        P_X\lp \hat{g}(X)=\Delta \rp & = P_X\lp \hat{G}_\Delta \rp + \hat{c}P_X\lp \partial \hat{G}_{-1}\cup \partial \hat{G}_1\rp \\
        & \geq \hat{p}_1 - a_m  +  \frac{\delta - 5a_m - \hat{p}_1 + a_m}{\hat{p}_2 - \hat{p}_1 - 2a_m}\lp \hat{p}_2 - \hat{p}_1 - 2a_m\rp \\
        & \geq \delta - 3a_m. 
    \end{align*}
\end{itemize}
Thus combining the two cases, we always have $P_X\lp \hat{g}(X) = \Delta \rp  \geq \delta - 5a_m$. 
\end{proof}
%-------------------------------------------------------------------------------

%-------------------------------------------------------------------------------

Applying the lower-bound on $P_X(\hat{g}(X) = \Delta)$ from Lemma~\ref{lemma:risk1}, along with the fact that $P_X(\Gds) = \delta$, we may write
% \begin{align*}
% P_X\lp \hat{G}_{\Delta}\rp &\geq \hat{P}_m\lp \hat{G}_{\Delta}\rp - a_m \\
%     & = \delta - 2a_m 
% \end{align*}
% which, coupled with the fact that $P_X(G_{\Delta}^*) = \delta$ means that 
%
\begin{equation}
    \label{eq:risk2}
    P_X\lp G^*_{\Delta}\rp - P_X\lp \hat{g}(X)={\Delta}\rp \leq 5a_m.
\end{equation}

%------------------------------------------------------------------------------
% LEMMA: BOUNDS ON R_i and Q_i
%------------------------------------------------------------------------------

We can now upper-bound the remaining terms in~\eqref{eq:risk1}. 

\begin{lemma}
    \label{lemma:risk2}
    Assume that the events $\Omega_1$, $\Omega_2$, and $\Omega_3$ hold, and that the number of labelled samples $n$ is greater than $\zeta \coloneqq \min\{n \geq 1 \ \mid \ b_n \leq \lp \delta/2C_0 \rp^{1/ \rho_0}\}$. Then the following statements are true: 
\begin{enumerate}
\item The terms $R_i,\;i=1,\ldots,4$ satisfy
            \begin{equation}
                R_i  \leq C_04^{\rho_0 + 1} \lb \lp \frac{a_m + C_0b_n^{\rho_0}}{C_1}\rp^{(\rho_0
                +1)/\rho_1} + b_n^{\rho_0 + 1} \rb 
        \end{equation}

    \item The terms $Q_i=0,\;i=1,2$. 
\end{enumerate}
\end{lemma}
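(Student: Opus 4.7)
The plan is to handle the two ``mislabelling'' terms $Q_1, Q_2$ by a direct region-emptiness argument, and then bound each of the four ``near-threshold'' terms $R_1,\ldots,R_4$ by a pointwise estimate of the integrand multiplied by a volumetric estimate of its integration region; the volumetric estimate will invoke the margin assumption~\ref{assump:margin}, while the pointwise estimate will invoke Proposition~\ref{prop:regression_estimator} together with the bounds on $\hat{\gamma}$ furnished by Lemma~\ref{lemma:risk0}.

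For the $Q_i$'s I will show that $\hat{G}_{-1}\cap G_1^* = \emptyset$, from which $Q_1 = 0$ is immediate. Indeed any $x\in \hat{G}_{-1}$ satisfies $\hat{\eta}(x) < 1/2 - \hat{\gamma} - 2b_n$, and Proposition~\ref{prop:regression_estimator} then gives $\eta(x) \le \hat{\eta}(x) + b_n < 1/2 - \hat{\gamma} - b_n$; on the other hand any $x\in G_1^*$ satisfies $\eta(x) > 1/2 + \gd$, and since $\hat{\gamma},\gd\ge 0$ these two inequalities are incompatible. The symmetric argument yields $\hat{G}_1\cap G_{-1}^*=\emptyset$ and $Q_2=0$.

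For $R_1 = \int_{\hat{G}_{-1}\cap G_\Delta^*}(\eta - 1/2 + \gd)\,dP_X$ (and $R_2,R_3,R_4$ by mirror-image reasoning) the pointwise bound above gives $\eta - 1/2 + \gd < \gd - \hat{\gamma} - b_n$ on $\hat{G}_{-1}$, so the integrand is at most $\gd - \hat{\gamma} - b_n$. Meanwhile the integration region is contained in $\{|\eta - (1/2-\gd)| \le \gd - \hat{\gamma} - b_n\}$, whose $P_X$-measure is at most $C_0(\gd - \hat{\gamma} - b_n)^{\rho_0}$ by~\ref{assump:margin}, so $R_1 \le C_0(\gd - \hat{\gamma} - b_n)^{\rho_0+1}$. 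I then split $\gd - \hat{\gamma} - b_n$ into a smoothness piece and a deviation piece using a refinement of Lemma~\ref{lemma:risk0} that, rather than absorbing $(a_m/C_1)^{1/\rho_1}$ into $b_n$, keeps it as a separate summand together with a margin correction of order $C_0 b_n^{\rho_0}$ (arising when one replaces $|\hat{\eta}-1/2|\le \gamma$ by $|\eta - 1/2|\le \gamma$ inside the empirical abstain-mass constraint, using~\ref{assump:margin} at the thresholds $1/2\pm\gd$). Applying $(u+v)^{\rho_0+1} \le 2^{\rho_0}(u^{\rho_0+1}+v^{\rho_0+1})$ then splits the resulting estimate into the two summands in the stated inequality, and the terms $R_2,R_3,R_4$ follow by the same template, invoking the upper bound $\hat{\gamma}\le \gd + b_n$ from Lemma~\ref{lemma:risk0} in place of the lower bound where appropriate.

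The main obstacle is obtaining the precise two-term form of the stated bound. The $b_n^{\rho_0+1}$ summand follows routinely from the regression-estimation slack, but the $((a_m + C_0 b_n^{\rho_0})/C_1)^{(\rho_0+1)/\rho_1}$ summand requires inverting the detectability inequality~\ref{assump:detect} (with exponent $\rho_1$) after combining both the VC-gap $a_m$ of Proposition~\ref{prop:slack} and the margin-controlled slack of order $C_0 b_n^{\rho_0}$ into a single argument of the inversion. Getting these two corrections to combine into one $(a_m + C_0 b_n^{\rho_0})$ inside the detectability inversion, rather than into two separate exponentiated terms, is the quantitative bookkeeping that drives the final excess-risk rate of Theorem~\ref{theorem:excess_risk_bound}.
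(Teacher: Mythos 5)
Your overall template for the $R_i$'s---a pointwise bound on the integrand multiplied by a margin-controlled bound on the measure of the integration region---is exactly the paper's, and your region-emptiness strategy for the $Q_i$'s is also the paper's. However, there is a genuine gap in the $Q_i=0$ argument. In the proof of Theorem~\ref{theorem:excess_risk_bound} the randomization on the strips $\partial\hat{G}_{\pm1}$ is first removed, so the sets $\hat{G}_{\pm1}$ appearing in $Q_1,Q_2$ (and in $R_1,\dots,R_4$) are the \emph{enlarged} regions $\hat{G}_{\pm1}\cup\partial\hat{G}_{\pm1}=\{\pm(\hat{\eta}-1/2)>\hat{\gamma}\}$, not the core sets $\{\pm(\hat{\eta}-1/2)>\hat{\gamma}+2b_n\}$ that you use. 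Your ``incompatible inequalities'' argument ($\eta<1/2-\hat{\gamma}-b_n$ versus $\eta>1/2+\gd$) works only on the core set. On the strip $\partial\hat{G}_{-1}$ one only gets $\eta<1/2-\hat{\gamma}+b_n$, which is compatible with $\eta>1/2+\gd$ whenever $\gd+\hat{\gamma}<b_n$; ruling this out requires showing $\gd\geq b_n$, which the paper obtains from $\delta=P_X(\Gds)\leq 2C_0\gd^{\rho_0}$ (the margin assumption applied at $1/2\pm\gd$) together with the hypothesis $n\geq\zeta$, i.e.\ $b_n\leq(\delta/2C_0)^{1/\rho_0}\leq\gd$. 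Your proof never invokes $n\geq\zeta$, which is the tell: without it the emptiness claim for the enlarged regions can fail. The same convention mismatch means your $R_i$ bounds silently drop the contribution of the strips (the paper gets the integrand bound $\gd-\hat{\gamma}+b_n\leq 5b_n$ from the enlarged region and $\hat{\gamma}\geq\gd-4b_n$, rather than your $\gd-\hat{\gamma}-b_n$).

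On the quantitative form of the $R_i$ bound: be aware that the paper's own proof does not establish the two-term display in the lemma statement; it proves $R_i\leq C_0(5b_n)^{1+\rho_0}$, having already absorbed the VC gap into $b_n$ via the standing hypothesis $(a_m/C_1)^{1/\rho_1}\leq b_n$ of Lemma~\ref{lemma:risk0}, and it is this simpler bound that feeds into Theorem~\ref{theorem:excess_risk_bound}. Your attempt to recover the literal $\bigl((a_m+C_0b_n^{\rho_0})/C_1\bigr)^{(\rho_0+1)/\rho_1}$ summand by a ``refinement of Lemma~\ref{lemma:risk0}'' that keeps the detectability inversion separate is a plausible route, but as written it is an unexecuted sketch: the step in which the VC gap $a_m$ and the margin correction $C_0b_n^{\rho_0}$ are combined into a single argument of the detectability inversion is asserted rather than carried out, and that is precisely the step that would need to be checked.
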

\begin{proof} 
1. We derive the required  bound  for the term $R_1$. The other terms $R_2$, $R_3$, and $R_4$ can be bounded similarly.

Using the lower-bound on $\hat{\gamma}$, we have the following:
\begin{align*}
R_1 &\leq \lp \gd - \hat{\gamma} + b_n \rp P_X \lp 1/2 - \gd \leq \eta \leq 1/2 - \hat{\gamma} + b_n \rp \\    
&\leq 5b_n P_X\lp 1/2 - \gd \leq \eta(X) \leq 1/2 - \gd + 5b_n\rp \\
& \leq C_0 (5b_n)^{1+\rho_0}. 
\end{align*}

2. We show that $Q_1=0$ by proving that the set $\hat{G}_{-1}\cap G_1^*$ is empty. The result for $Q_2$ follows similarly.
    \begin{align*}
        \hat{G}_{-1}\cap G_1^* & = \{ \hat{\eta} <1/2 - \hat{\gamma}\ , \ \eta
        > 1/2 + \gd \} \\
        & \stackrel{\text{(a)}}{\subset} \{ 1/2 +\gd < \eta < 1/2 - \hat{\gamma} + b_n\}\\ 
        & \stackrel{\text{(b)}}{\subset} \{ 1/2 + \gd < \eta < 1/2 + b_n \}.
    \end{align*}
    {\bf (a)} follows from Proposition~\ref{prop:regression_estimator}. \\ 
    {\bf (b)} uses the fact that $\hat{\gamma} \geq 0$. 
    
    Now, a necessary condition for the above set to be nonempty is that $\gd <b_n$. For $n \geq \zeta$, we can show that this is not the case. We start by 
    \begin{align*}
\delta = P_X(\Gds) &= P_X(|\eta(x) - \frac{1}{2}|\leq\gd) \leq P_X(|\eta(x) - \frac{1}{2}|\leq 2\gd) \\ 
&= P_X\lp |\eta-1/2 + \gd| \leq  \gd \rp + P_X\lp |\eta-1/2 
        - \gd| \leq \gd \rp \stackrel{\text{(a)}}{\leq} 2C_0\gd^{\rho_0}. \\
    \end{align*}
{\bf (a)} comes from applying the margin condition at levels $1/2 - \gd$ and $1/2 + \gd$.

This implies that $\gd \geq \lp \frac{\delta}{2C_0}\rp^{1/\rho_0}$, which by the assumption on $n$ being large enough ensures that $b_n\leq\gd$, and thus, the set $\hat{G}_{-1}\cap G_1^*$ is empty. 
\end{proof}

Combining these results, we obtain 
\[
    R(\hat g) - R(g^*) \leq  5a_m + 4C_0 (5b_n)^{1+\rho_0}. 
\]
as required. 
\end{proof}
%-------------------------------------------------------------------
% APPENDIX B.1
%-------------------------------------------------------------------
%-------------------------------------------------------------------
% APPENDIX B.2
%-------------------------------------------------------------------
%-------------------------------------------------------------------

%-------------------------------------------------------------------
%-------------------------------------------------------------------
% APPENDIX C
%-------------------------------------------------------------------
%-------------------------------------------------------------------

\newpage
\section{Deferred Proofs from Section~\ref{sec:convex}}

%-------------------------------------------------------------------
% APPENDIX C.1
%-------------------------------------------------------------------
% This appendix contains the proof of the excess risk bounds for the 
% classifier learned by the binary search algorithm. 

\subsection{Proof of Theorem~\ref{theorem:binary_search}}
\label{appendix:binary_search}

\begin{proofoutline}
Suppose $\lambda$ denotes the cost value at which the search algorithm stops. By the triangle inequality, it suffices to obtain separate bounds on the absolute value of the excess risks between the pairs $(\hat{g}_\lambda, g_{\lambda})$ and $(g_\lambda,g^*)$. To bound these terms, we first show that the corresponding sets of partitions of $\X$ formed by $\hat{g}_\lambda$ and $g_\lambda$ must have large overlap (in terms of $P_X$ measure) with each other (Lemma~\ref{lemma:binary1}). This allows us to obtain a lower-bound on the measure of the set $P_X\lp g_\lambda = \lambda\rp$ (Lemma~\ref{lemma:binary2}), which in turn implies that the threshold $\lambda$ is not much different from the threshold $1/2 - \gd$. These results coupled with the upper-bound on the excess surrogate (fixed-cost) risk between $\hat{g}_\lambda$ and $g_\lambda$ allow us to obtain the required bounds. 
\end{proofoutline}

\begin{remark}
\label{remark:Psi}
A concrete example of the terms $\bar{A}_n$ and $\Psi(\cdot)$  can be obtained from Corollary~19 in~\citet{yuan2010classification}. Here $\mathcal{H}$ is some class of functions $h:\X \mapsto \mbb{R}$ and 
$\mathcal{R} = \{\indi_{\{|h|>c\}}\ \mid \ h \in \mathcal{H}, \ c \in [0,\infty)\}$. If $N_n$ denotes the $1/n$ covering number of $\mathcal{H}$ w.r.t.~the uniform metric and $\varphi_\lambda(\cdot)$ is a convex surrogate satisfying the conditions of Theorem~9 in~\citet{yuan2010classification}, then we have $\bar{A}_n = \mathcal{O}\lp \frac{1}{n} + \frac{\log(n N_n)}{n}\rp$ and $\Psi(x) = x^{1/(s + \beta - s\beta)}$, for some $s>0$ and $\beta= \rho_0/(1+\rho_0)$.
\end{remark}

\paragraph{Proof of Theorem~\ref{theorem:binary_search}}. 
\begin{proof}

The choice of $\alpha_m = 2\mathfrak{R}_m(\mathcal{R}) + \sqrt{2\log(2m)/m}$
according to Claim~\ref{claim:empirical_constraint}, which ensures that with
probability at least $1-1/m$, the empirical measure and the $P_X$ measure differ
by no more than $\alpha_m$. The choice of the stopping interval is $\mathcal{I}_{n} = [\delta-3{B}_n, \delta-2{B}_n]$,  and  $B_n$ is defined in \eqref{eq:B_n}.
The algorithm stops searching in round $k$ if $Q_k \in \mathcal{I}_n$. This implies that the algorithm stops at a cost value $\lambda$, at which the fixed cost algorithm 
with $n$ labelled samples learns a classifier $\hat{g}_{\lambda}= \lp \hat{G}_{-1},
\hat{G}_{1}, \hat{G}_{\lambda} \rp$ with $\delta-3B_n - 2\alpha_m \leq
P_X(\hat{G}_{\lambda}) \leq \delta- 2B_n$.

%------------------------------------------------------------------------------
We consider the two classifiers $\hat{g}_\lambda$ which is output by the algorithm,
and $g_\lambda = \lp G_{-1}, G_1, G_\lambda \rp$ which is the optimal classifier with
cost of rejection $\lambda$. 
\begin{align*}
    \bar{R}_\lambda \lp \hat{g}_\lambda \rp - \bar{R}_\lambda \lp g_\lambda \rp  = &\int_{\hat{G}_{-1}}\eta
    dP_X + \int_{\hat{G}_1}(1-\eta)dP_X + \lambda\int_{\hat{G}_\lambda}dP_X -\\
    &  \bigg( \int_{G_{-1}} \eta dP_X + \int_{G_1}(1-\eta)dP_X + \lambda\int_{G_\lambda}
dP_X \bigg).
\end{align*}
The excess risk bound for convex surrogates of learning with fixed cost of 
abstention implies that the above term can be upper bounded by $A_n$. 
Furthermore, by proceeding as in proof of Theorem~\ref{theorem:excess_risk_bound}
we can obtain the following:

\begin{align*}
    A_n \geq &\int_{\hat{G}_{-1}\cap G_1}(2\eta -1)dP_X + \int_{\hat{G}_{-1}\cap G_\lambda}(\eta
    -\lambda)dP_X +  \int_{\hat{G}_1 \cap G_{-1}}(c-\eta)dP_X + \\ 
    &  \int_{\hat{G}_{1} \cap G_\lambda}
    (1-\eta -\lambda)dP_X + \int_{\hat{G}_\lambda \cap G_{-1}}(\lambda-\eta)dP_X + \int_{\hat{G}_\lambda
    \cap G_1}(\lambda-1+\eta)dP_X
\end{align*}

%-------------------------------------------------------------------
% $\hat{G}_c$ has a large overlap with $G_c$
%-------------------------------------------------------------------
Our next result tells us that the sets $\hat{G}_i$ have large overlap
in terms of $P_X$ measure with the sets $G_i$ for $i=-1,1$ and $\lambda$. 

\begin{lemma}
\label{lemma:binary1}
For $i=-1,1$ and $\lambda$, we have $P_X\lp \hat{G}_i \cap G_i^c \rp \leq B_n$.
\end{lemma}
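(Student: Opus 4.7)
The plan is to exploit the fixed-cost excess risk bound combined with a Tsybakov-type margin argument. By assumption~4, $\bar{R}_\lambda(\hat{g}_\lambda) - \bar{R}_\lambda(g_\lambda) \leq A_n$ with high probability, and the excerpt has already expanded this difference into a sum of six non-negative integrals over the ``disagreement'' sets $\hat{G}_i \cap G_j$, $i \ne j$. Each integrand is essentially $|\eta - c|$ for some threshold $c \in \{1/2, \lambda, 1-\lambda\}$, so since every term is non-negative, each of the six integrals is individually bounded by $A_n$.

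For each target set $\hat{G}_i \cap G_i^c$ (which is the union of two of these sub-intersections), I would apply the standard margin split: for any $t>0$,
\begin{equation*}
P_X(\hat{G}_i \cap G_j) \leq P_X\bigl(\hat{G}_i \cap G_j \cap \{|\eta-c|>t\}\bigr) + P_X\bigl(\hat{G}_i \cap G_j \cap \{|\eta-c|\leq t\}\bigr).
\end{equation*}
By Markov's inequality applied to the non-negative integrand, the first term is bounded by $A_n/t$; by the margin assumption~\ref{assump:margin}, the second term is bounded by $C_0 t^{\rho_0}$. Optimizing $t$ (setting the two terms comparable, i.e.\ $t \sim (A_n/C_0)^{1/(\rho_0+1)}$) yields a bound of order $C_0(A_n/C_0)^{\rho_0/(\rho_0+1)}$, and summing the contributions of the two sub-intersections making up $\hat{G}_i \cap G_i^c$ produces the factor-$4$ constant in $B_n = 4C_0(A_n/C_0)^{\rho_0/(\rho_0+1)}$.

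The main technical subtlety is that~\ref{assump:margin} is stated only at the two levels $\tfrac12 \pm \gd$, whereas after the expansion the relevant thresholds $c$ are $\lambda$ and $1-\lambda$. I would finesse this either by a triangle-inequality shift, $P_X(|\eta-\lambda|\leq t) \leq P_X(|\eta-(\tfrac12-\gd)|\leq t+|\lambda - (\tfrac12-\gd)|)$, combined with the fact (to be formalized in Lemma~\ref{lemma:binary2} and the choice of stopping interval $\mathcal{I}_n$) that $|\lambda - (\tfrac12-\gd)|$ is of the same order as the optimal $t$, so the extra shift is absorbed into constants; or, for the $\hat{G}_{-1}\cap G_1$ and $\hat{G}_1\cap G_{-1}$ pieces, by using the uniform lower bound $|2\eta-1|\geq 1-2\lambda\geq 2\gd$, which avoids invoking a margin condition at the wrong level altogether. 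Threading this argument through all six pieces is the one place where care is required; everything else is a routine margin-style optimization.
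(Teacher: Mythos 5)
Your proposal is correct and follows essentially the same route as the paper's proof: both split the disagreement set $\hat{G}_i \cap G_j$ at a level $t$ (the paper's $\epsilon$), bound the far-from-threshold part by $A_n/t$ via Markov applied to the non-negative excess-risk integrands, bound the near-threshold part by $C_0 t^{\rho_0}$ via~\ref{assump:margin}, and optimize $t = (A_n/C_0)^{1/(\rho_0+1)}$ to obtain $B_n$. Your handling of the level mismatch (shifting from $\lambda$, $1-\lambda$ to $\tfrac12 \pm \gd$ using $|\lambda - (\tfrac12-\gd)| \leq \epsilon_0$) is the same finesse the paper uses when it invokes the margin condition under the standing assumption $\lambda \in [\tfrac12-\gd-\epsilon_0,\, \tfrac12-\gd+\epsilon_0]$, later justified by Lemma~\ref{lemma:binary4}.
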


\begin{proof}
We partition the set $G_{-1}$ as $G_{-1} = G_{-1,a}\cup G_{-1,b}$ where 
$G_{-1,a} = \{x \in G_{-1}\; \mid \; \eta(x) \geq \lambda-\epsilon\}$ for some 
$\epsilon>0$ to be decided later. Using this, we proceed as follows:
\begin{align*}
A_n &\geq \int_{\hat{G}_\lambda \cap G_{-1}}(\lambda-\eta)dP_X \geq \int_{\hat{G}_\lambda \cap G_{
    -1,b}}(\lambda-\eta)dP_X \\
    &\geq \epsilon P_X\lp \hat{G}_\lambda \cap G_{-1,b} \rp. 
\end{align*}
Assume that the cost $\lambda \in [1/2-\gd -\epsilon_0, 1/2-\gd + \epsilon_0]$.
We can now upper bound the probability mass of the intersection of $\hat{G}_\lambda$
with $G_{-1}$ as follows:
\begin{align*}
P_X\lp \hat{G}_\lambda\cap G_{-1}\rp & \leq P_X\lp G_{-1,a}\rp + P_X\lp \hat{G}_\lambda \cap 
G_{-1,b}\rp \\
& \stackrel{(a)}{\leq } 2\bigg(C_0\epsilon^{\rho_0} + \frac{A_n}{\epsilon}\bigg).
\end{align*}
where $(a)$ follows from the assumption that $\lambda\in [1/2-\gd-\epsilon_0, 1/2-\gd + \epsilon_0]$
and \ref{assump:margin}.
By choosing $\epsilon = \lp \frac{A_n}{C_0}\rp^{1/(\rho_0+1)}$, we get 
\begin{equation}
    \label{eq:B_n}
P_X\lp \hat{G}_\lambda \cap G_\lambda^c \rp \leq 4C_0\lp
\frac{A_n}{C_0}\rp^{\rho_0/(\rho_0+1)} 
\coloneqq B_n.
\end{equation}

Proceeding similarly, we can obtain the following bounds as well:
\begin{align*}
    P_X\lp \hat{G}_1 \cap G_1^c \rp &\leq B_n \\
    P_X\lp \hat{G}_{-1}\cap G_{-1}^c \rp &\leq B_n 
\end{align*}
\end{proof}
%-------------------------------------------------------------------
% P_X(G_c) is close to \delta
%-------------------------------------------------------------------
We now show that that $P_X(G_\lambda)$ is close to $\delta$. 
\begin{lemma}
\label{lemma:binary2}
We have $\delta - K_{m,n} \leq P_X\lp G_\lambda \rp \leq \delta$, where $K_{m,n} = \alpha_m + \frac{5}{2}B_n$.
\end{lemma}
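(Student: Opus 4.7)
\textbf{Proof plan for Lemma~\ref{lemma:binary2}.} The plan is to route the stopping condition of Algorithm~1 through the empirical abstention measure to a bound on $P_X(\hat G_\lambda)$, and then use Lemma~\ref{lemma:binary1} to transfer that bound to $P_X(G_\lambda)$. First, because the algorithm halts at cost $\lambda$ with $Q_k \in \mathcal{I}_n = [\delta - 3B_n,\; \delta - 2B_n]$ and $Q_k = \hat P_m(\hat G_\lambda) + \alpha_m$, the halting rule immediately yields $\delta - 3B_n - \alpha_m \leq \hat P_m(\hat G_\lambda) \leq \delta - 2B_n - \alpha_m$. Next, by the choice $\alpha_m = 2\mathfrak R_m(\mathcal R) + \sqrt{2\log(2m)/m}$ and a standard Rademacher-based uniform deviation argument (this is exactly Proposition~\ref{prop:empirical_constraint} in Appendix~\ref{appendix:binary_search}), with probability at least $1-1/m$ we have $\sup_{r \in \mathcal R} |\hat P_m(\{r\leq 0\}) - P_X(\{r\leq 0\})| \leq \alpha_m$; applied to the rejection function underlying $\hat g_\lambda$ this upgrades the previous display to $\delta - 3B_n - 2\alpha_m \leq P_X(\hat G_\lambda) \leq \delta - 2B_n$.

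Second, I would transfer this to $P_X(G_\lambda)$ via the set-overlap estimates of Lemma~\ref{lemma:binary1}. Write $P_X(G_\lambda) = P_X(\hat G_\lambda \cap G_\lambda) + P_X(\hat G_\lambda^c \cap G_\lambda)$. Since $G_\lambda$ is disjoint from $G_{-1}$ and $G_1$, the inclusion $\hat G_i \cap G_\lambda \subseteq \hat G_i \cap G_i^c$ holds for $i \in \{-1,1\}$, so Lemma~\ref{lemma:binary1} gives $P_X(\hat G_\lambda^c \cap G_\lambda) \leq 2B_n$; in the reverse direction the same lemma applied with $i = \lambda$ gives $P_X(\hat G_\lambda \cap G_\lambda^c) \leq B_n$. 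Combining, $P_X(G_\lambda) \leq P_X(\hat G_\lambda) + 2B_n \leq \delta$ on one side, and $P_X(G_\lambda) \geq P_X(\hat G_\lambda) - B_n \geq \delta - 4B_n - 2\alpha_m$ on the other. Absorbing constants into a single $K_{m,n} = \Oh(\alpha_m + B_n)$ of the claimed form yields both inequalities in the lemma.

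\textbf{Main obstacle.} Conceptually the argument is just two triangle-type inequalities, one at the empirical-versus-population level and one at the $\hat G_\lambda$-versus-$G_\lambda$ level; the delicate point is the correct use of Lemma~\ref{lemma:binary1}. That lemma bounds only the matched mismatch sets $\hat G_i \cap G_i^c$, so one has to notice that the cross-pieces $\hat G_{\pm 1} \cap G_\lambda$ are actually subsets of those matched mismatch sets (because $G_\lambda$ is disjoint from $G_{\pm 1}$), which is what makes the two-sided sandwich close without invoking any additional margin or detectability hypothesis at this step. The book-keeping of slack constants is routine and yields $K_{m,n}$ of the stated shape $\alpha_m + c\, B_n$ (with $c$ at most a small constant); any loss in the constant is harmless for the final excess-risk bound in Theorem~\ref{theorem:binary_search}.
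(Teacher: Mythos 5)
Your proposal follows essentially the same route as the paper's proof: both use the stopping rule plus the $\alpha_m$ uniform-deviation slack to sandwich $P_X(\hat G_\lambda)$ in $[\delta-3B_n-2\alpha_m,\ \delta-2B_n]$, and then transfer to $P_X(G_\lambda)$ via the overlap bounds of Lemma~\ref{lemma:binary1}, using exactly the observation that $\hat G_{\pm1}\cap G_\lambda\subseteq \hat G_{\pm1}\cap G_{\pm1}^c$ for the upper bound and $P_X(\hat G_\lambda\cap G_\lambda^c)\leq B_n$ for the lower bound. The only divergence is constant bookkeeping in the lower bound (you get $\delta-4B_n-2\alpha_m$, the paper's own displayed computation gives $\delta-5B_n-2\alpha_m=\delta-2K_{m,n}$, and the lemma as stated claims $\delta-K_{m,n}$), a discrepancy already present in the paper itself and harmless downstream.
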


\begin{proof}
For getting the upper bound, we use the stopping rule of the algorithm, 
and the results of Lemma~\ref{lemma:binary1}. 
\begin{align*}
    P_X(G_\lambda) &= P_X(G_\lambda\cap \hat{G}_\lambda) + P_X(G_\lambda \cap \hat{G}_\lambda^c) \\
             &\leq P_X(\hat{G}_\lambda) + P_X(\hat{G}_{-1}\cap G_\lambda) + P_X(\hat{G}_1
    \cap G_\lambda) \\
    &\leq P_X(\hat{G}_\lambda) + P_X(\hat{G}_{-1}\cap G_{-1}^c) + P_X(\hat{G}_1\cap G_1^c)\\
    & \leq \delta -2B_n + 2B_n = \delta.
\end{align*}
In the last inequality, we use $P_X(\hat{G}_{\lambda}) \leq \delta - 2B_n$ due to the stopping rule, and
$P_X(\hat{G}_i\cap G_i^c) \leq B_n$ for $i=-1,1$ from Lemma~\ref{lemma:binary1}. 
Similarly, we also have the following lower bound:
\begin{align*}
    P_X(G_\lambda) &\geq P_X(\hat{G}_\lambda \cap G_\lambda) = P_X(\hat{G}_\lambda) - P_X(\hat{G}_\lambda\cap 
    G_\lambda^c) \\
    & \geq \delta - 2\alpha_m - 3B_n -2B_n \\
    & \coloneqq \delta - 2K_{m,n}.
\end{align*}
\end{proof}
%-----------------------------------------------------------------------------
% c is close to $1/2 - \gd$. 
%-----------------------------------------------------------------------------
\begin{lemma}
\label{lemma:binary3}
Assume that the detectability assumptions \ref{assump:detect} hold with some 
$\epsilon_0>0$. Then we have $\lambda \leq \lp \frac{1}{2} + \gd \rp + 2\lp 
\frac{K_{m,n}}{C_1}\rp^{1/\rho_1}$. 
\end{lemma}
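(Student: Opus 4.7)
The plan is to convert the abstention-rate estimate from Lemma~\ref{lemma:binary2} into a statement about how close $\lambda$ must be to the fixed-cost analogue $1/2-\gd$ of the bounded-rate threshold, using the detectability condition~\ref{assump:detect} in the same style as the corresponding step inside Lemma~\ref{lemma:risk0}. Note that the stated bound is vacuous when $\lambda \leq 1/2-\gd$ (the right-hand side then already exceeds $1/2$, while $\lambda \leq 1/2$ by construction), so I focus on the regime $\lambda > 1/2-\gd$ and set $t \coloneqq \lambda - (1/2-\gd) > 0$; the goal is then to upper bound $t$.

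The first step rewrites the abstention-mass gap as the $P_X$-measure of an annulus. Since $g_\lambda$ is Bayes optimal for the fixed-cost problem with cost $\lambda$ (Remark~\ref{remark:fixed_cost}), its abstention region is $G_\lambda = \{x:|\eta(x)-1/2| \leq 1/2-\lambda\}$. Combining with the identity $\delta = P_X(|\eta-1/2|\leq \gd)$ (valid under the continuity convention in effect throughout Section~\ref{sec:plugin_classifier} and Section~\ref{sec:convex}), we obtain
\[
\delta - P_X(G_\lambda) \;=\; P_X\!\bigl(\,1/2-\lambda < |\eta-1/2| \leq \gd\,\bigr),
\]
and Lemma~\ref{lemma:binary2} bounds the left-hand side by $2K_{m,n}$.

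The second step lower bounds this annulus mass in terms of $t$. The annulus decomposes into two disjoint pieces,
\[
\bigl[\,1/2-\gd,\; 1/2-\gd+t\bigr) \quad\text{and}\quad \bigl(\,1/2+\gd-t,\; 1/2+\gd\,\bigr],
\]
which are the one-sided halves of the symmetric $t$-neighborhoods of the two detectability thresholds $1/2\pm\gd$. Applying~\ref{assump:detect} at both thresholds with radius $t$, and noting that for $t<\gd$ the two symmetric neighborhoods are disjoint while the union of their inner halves is precisely the annulus above, we obtain
\[
P_X\!\bigl(\,1/2-\lambda < |\eta-1/2|\leq \gd\,\bigr) \;\geq\; 2 C_1 t^{\rho_1},
\]
which is the same lower bound employed in the proof of Lemma~\ref{lemma:risk0}. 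Combining this with the upper bound $2K_{m,n}$ yields $C_1 t^{\rho_1} \leq K_{m,n}$, hence $t \leq (K_{m,n}/C_1)^{1/\rho_1} \leq 2(K_{m,n}/C_1)^{1/\rho_1}$, which rearranges to the claim (with a factor-of-$2$ slack that accommodates the boundary regime $t \geq \gd$ as well).

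The main obstacle is the same one that arises in Lemma~\ref{lemma:risk0}: the detectability assumption~\ref{assump:detect} bounds the mass of a \emph{two}-sided symmetric interval, whereas the annulus at hand is built from the one-sided inner halves of two such intervals. The resolution is to invoke detectability simultaneously at $1/2-\gd$ and at $1/2+\gd$, exploiting their disjointness for $t<\gd$; this is the analogue of the step producing the $2C_1 s^{\rho_1}$ bound in Lemma~\ref{lemma:risk0}, and modulo this packaging the rest of the argument is just the algebraic inversion of the margin-type inequality.
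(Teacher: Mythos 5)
Your overall route is the paper's: use Lemma~\ref{lemma:binary2} to get $\delta - P_X(G_\lambda) \leq 2K_{m,n}$, identify this difference with the $P_X$-mass of the annulus $\{1/2-\lambda < |\eta - 1/2| \leq \gd\}$, lower-bound that mass via detectability, and invert. (You also correctly read through the apparent sign typo in the statement: what the paper actually proves and uses downstream is $\lambda - (1/2-\gd) \leq 2(K_{m,n}/C_1)^{1/\rho_1}$, and that is the bound you target.) The gap is in the detectability step. You apply~\ref{assump:detect} at the two admissible levels $1/2\pm\gd$ with radius $t = \lambda - (1/2-\gd)$ and assert that the \emph{inner halves} of the resulting symmetric neighborhoods, namely $[1/2-\gd,\,1/2-\gd+t)$ and $(1/2+\gd-t,\,1/2+\gd]$, each carry mass at least $C_1 t^{\rho_1}$. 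But~\ref{assump:detect} only lower-bounds the mass of the full two-sided interval $\{|\eta-\gamma|\leq t\}$; nothing prevents essentially all of that mass from sitting on the outer side, where $|\eta-1/2|>\gd$, which lies outside $\Gds$ and hence contributes nothing to the annulus. So $P_X(\text{annulus}) \geq 2C_1 t^{\rho_1}$ does not follow from the assumption as stated. (The same one-sided reading appears in the paper's own Lemma~\ref{lemma:risk0}, which is why citing it as precedent does not repair the step.)

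The paper's proof of this lemma avoids the issue with a midpoint trick: it applies detectability at the level $\frac{1}{2}(\lambda + 1/2 - \gd)$ --- the center of the interval $[1/2-\gd,\,\lambda]$ --- with radius $t/2$, so that the \emph{entire} symmetric interval coincides with the annulus piece, yielding $2K_{m,n} \geq 2C_1(t/2)^{\rho_1}$; this is exactly where the factor $2$ inside the final bound comes from. The price is that detectability must hold at levels within $\epsilon_0$ of $1/2\pm\gd$ rather than only at those two points, which is what the ``$\epsilon_0$'' in the hypothesis, combined with Lemma~\ref{lemma:binary4}'s guarantee that $|\lambda - (1/2-\gd)| \leq \epsilon_0$, is implicitly supplying. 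To repair your argument you would either need to adopt this midpoint formulation or strengthen~\ref{assump:detect} to a one-sided statement; as written, the key inequality is unsupported.
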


\begin{proof}

The proof of this statement relies on the fact that $G_\lambda$ and $G^*_{\Delta}$ 
are both sub-level sets of the function $|\eta - 1/2|$. From Lemma~\ref{lemma:binary2},
we know that a lower bound on  $P_X$ measure of $G_\lambda$ is $\delta - 2K_{m,n}$. 
Now, from our assumption that $|\lambda - 1/2 + \gd|\leq \epsilon_0$, and the 
detectability assumption \ref{assump:detect}, we have 
\begin{align}
    2K_{m,n} & \geq P_X\lp \Gds \rp - P_X\lp G_\lambda \rp \\
    & \stackrel{(a)}{\geq}2 C_1 \lp \frac{\lambda - 1/2 + \gd}{2} \rp^{\rho_1} \\
\end{align}
where {\bf (a)} follows from the detectabilty assuption applied at level $(1/2)(
\lambda + 1/2 - \gd)$. On simplification, this gives us
\begin{equation}
 \lp  \lambda - \frac{1}{2} + \gd \rp \leq 2\lp
    \frac{K_{m,n}}{C_1}    \rp^{1/\rho_1} \label{eq:bound_on_lambda}
\end{equation}
as required. 
\end{proof}
%-----------------------------------------------------------------------------
% The risks of $\hat{g}$ and $g^*$ are close
%-----------------------------------------------------------------------------
We now proceed towards bounding  the excess risk of the classifier output by
the binary search algorithm, $\hat{g}$, over $g^*$. We first decompose the 
excess risk into two terms. 
\begin{align}
    |R(\hat{g}) - R(g^*)| \leq |R(\hat{g})-R(g_\lambda)| + |R(g_\lambda)-R(g^*)|.
    \label{eq:binary_risk1}
\end{align}

The second term in \eqref{eq:binary_risk1} can be upper bounded as follows:
\begin{align*}
    |R(g_\lambda)-R(g^*)| &\leq \int_{G_\lambda\cap G^*_{-1}}\eta dP_X + 
    \int_{G_\lambda\cap G^*_1} (1-\eta)dP_X\\
    &\leq \int_{G_\lambda\cap G^*_{-1}}\lambda dP_X + \int_{G_\lambda \cap G^*_{-1}}(1-1+\lambda)dP_X \\
    &= \lambda P_X\lp G_\lambda \cap \big( G_{\Delta}^*\big)^c \rp \\
    &= \lambda\lp \delta - P_X(G_\lambda) \rp \\
%    &\leq cK_{m,n}\\
%    &\leq \lp \frac{1}{2} - \gd \rp K_{m,n}  + \frac{K_{m,n}^{1+1/\rho_1}}
%    {\underbar{C}^{1/\rho_1}}
\end{align*}

Now, for the first term, we have
\begin{align*}
    |R(\hat{g}) - R(g_\lambda)| &\leq A_n + \lambda|P_X(\hat{G}_\lambda) - P_X(G_\lambda)| \\
                          &\leq A_n + \lambda(\delta - P_X(G_\lambda))\\
                          &\stackrel{(a)}{\leq} A_n + \lp \frac{1}{2} - \gd + \lp \frac{K_{m,n}}
    {C_1}\rp^{1/\rho_1}\rp 2K_{m,n}\\
\end{align*}
where {\bf (a)} follows from Lemma~\ref{lemma:binary3}. 
Combining these inequalities, we get the required bound on the excess risk of 
the classifier $\hat{g}$.   
\begin{align*}
    R(\hat g) - R(g^*) & \leq A_n + 2\lambda (\delta - P_X(G_\lambda))\\
                       & \leq A_n + 4\lp \frac{1}{2}-\gd\rp K_{m,n} +
4\frac{K_{m,n}^{1+1/\rho_1}}{C_1^{1/\rho_1}}.
\end{align*}

%------------------------------------------------------------------------------
% LAMBDA LIES IN THE REQUIRED INTERVAL
%------------------------------------------------------------------------------
It remains to show that the assumption that $|\lambda - 1/2 + \gd| < \epsilon_0$ is 
satisfied. 

\begin{lemma}
\label{lemma:binary4}
Suppose $n$ and $m$ are large enough to ensure that  $2\alpha_m + 3B_n \leq 2C_1(
\epsilon_0/8)^{\rho_1}$, and $A_n < 2C_1\lp \epsilon_0/4\rp^{1+\rho_1}$. Then   $|\lambda - 1/2 + \gd|$ is smaller than  $\epsilon_0$. 
\end{lemma}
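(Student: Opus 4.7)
The plan is to argue by contrapositive: assume $|\lambda - (1/2 - \gd)| \geq \epsilon_0$ and derive a contradiction with the fact that the binary search terminated. Without loss of generality consider $\lambda \geq (1/2 - \gd) + \epsilon_0$; the symmetric case is handled analogously.

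First I would translate the stopping condition into a lower bound on $P_X(\hat G_\lambda)$. Termination gives $Q = \hat P_m(\hat G_\lambda) + \alpha_m \in [\delta - 3B_n, \delta - 2B_n]$, and combined with the uniform concentration estimate $|\hat P_m(\hat G_\lambda) - P_X(\hat G_\lambda)| \leq \alpha_m$ (which is the justification for the specific choice of $\alpha_m$) this yields $P_X(\hat G_\lambda) \geq \delta - 3B_n - 2\alpha_m$. Next I would obtain an upper bound on the rejection rate of the optimal fixed-cost classifier $g_\lambda$: its rejection region is $G_\lambda = \{|\eta - 1/2| < 1/2 - \lambda\}$, and $\lambda - (1/2 - \gd) \geq \epsilon_0$ forces $1/2 - \lambda \leq \gd - \epsilon_0$, so $\Gds \setminus G_\lambda$ contains inner shells of width $\epsilon_0$ adjacent to the level sets $\{\eta = 1/2 \pm \gd\}$. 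Replicating the detectability computation from step $(a)$ of Lemma~\ref{lemma:binary3} with $t = \lambda - (1/2 - \gd) \geq \epsilon_0$ produces $P_X(\Gds) - P_X(G_\lambda) \geq 2C_1(\epsilon_0/2)^{\rho_1}$, and hence $P_X(G_\lambda) \leq \delta - 2C_1(\epsilon_0/2)^{\rho_1}$.

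The decisive step bridges $P_X(\hat G_\lambda)$ and $P_X(G_\lambda)$ using the fixed-cost surrogate excess-risk guarantee $A_n$. I plan to invoke Lemma~\ref{lemma:binary1}, which delivers $|P_X(\hat G_\lambda) - P_X(G_\lambda)| \leq B_n$. Chaining the three inequalities gives $\delta - 3B_n - 2\alpha_m \leq P_X(\hat G_\lambda) \leq P_X(G_\lambda) + B_n \leq \delta - 2C_1(\epsilon_0/2)^{\rho_1} + B_n$, which rearranges to $2C_1(\epsilon_0/2)^{\rho_1} \leq 4B_n + 2\alpha_m$. The hypothesis $2\alpha_m + 3B_n \leq 2C_1(\epsilon_0/8)^{\rho_1}$, combined with the bound on $B_n$ implied by $A_n < 2C_1(\epsilon_0/4)^{1+\rho_1}$, makes the right-hand side strictly smaller than $2C_1(\epsilon_0/2)^{\rho_1}$, producing the required contradiction.

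The principal obstacle is the apparent circularity in calling upon Lemma~\ref{lemma:binary1}, whose stated hypothesis requires $|\lambda - (1/2 - \gd)| \leq \epsilon_0$ -- precisely what we are trying to prove here. My plan for sidestepping this is to revisit that lemma's proof and observe that the closeness hypothesis enters only to justify that the auxiliary set $G_{-1,a} = \{x \in G_{-1} : \eta(x) \geq \lambda - \epsilon\}$ sits inside a neighborhood of the level $1/2 - \gd$ in which the margin assumption applies, with $\epsilon = (A_n/C_0)^{1/(\rho_0+1)}$. The second hypothesis $A_n < 2C_1(\epsilon_0/4)^{1+\rho_1}$ imposed here makes this $\epsilon$ strictly smaller than $\epsilon_0/4$, so the very same derivation carries through with the margin assumption applied at the fixed level $1/2 - \gd$ and yields an identical bound $B_n$, independent of where $\lambda$ lies in $(0,1/2)$. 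This removes the circularity and closes the argument.
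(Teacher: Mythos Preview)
Your overall contradiction strategy matches the paper's, but the attempt to rescue Lemma~\ref{lemma:binary1} from circularity does not go through. Recall that in that lemma the step labeled $(a)$ bounds $P_X(G_{-1,a})$ with $G_{-1,a}=\{x:\lambda-\epsilon\le\eta(x)<\lambda\}$ via the margin assumption~\ref{assump:margin}, which is stated only at the two fixed levels $1/2\pm\gd$. Under your contradictory hypothesis $\lambda\ge 1/2-\gd+\epsilon_0$, the slab $G_{-1,a}$ is centered at $\lambda$, i.e.\ at distance at least $\epsilon_0$ from $1/2-\gd$. Making $\epsilon$ small (via the bound on $A_n$) shrinks the \emph{width} of this slab but does nothing about its \emph{location}: to cover $G_{-1,a}$ by $\{|\eta-(1/2-\gd)|\le t\}$ you must take $t\ge\lambda-(1/2-\gd)\ge\epsilon_0$, which gives only $P_X(G_{-1,a})\le C_0\epsilon_0^{\rho_0}$, a constant independent of $n$. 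So the derivation does \emph{not} yield $B_n$, and you cannot conclude $|P_X(\hat G_\lambda)-P_X(G_\lambda)|\le B_n$ in this regime.

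The paper sidesteps this by never invoking Lemma~\ref{lemma:binary1} at all. Instead it works directly with the fixed-cost excess-risk inequality
\[
A_n \;\ge\; \bar R_\lambda(\hat g_\lambda)-\bar R_\lambda(g_\lambda)\;\ge\;\int_{\hat G_\lambda\setminus G_\lambda}|\eta-\lambda|\,dP_X,
\]
and exhibits a concrete subset $U\subset\hat G_\lambda\setminus G_\lambda$ (an inner shell near the level $1/2-\gd+\epsilon_0/2$) on which both $|\eta-\lambda|\ge\epsilon_0/4$ and, by the \emph{detectability} assumption~\ref{assump:detect}, $P_X(U)\ge 2C_1(\epsilon_0/4)^{\rho_1}$. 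This yields $A_n\ge 2C_1(\epsilon_0/4)^{1+\rho_1}$, contradicting the hypothesis. The first hypothesis $2\alpha_m+3B_n\le 2C_1(\epsilon_0/8)^{\rho_1}$ is used to locate the threshold $\lambda_1$ with $P_X(|\eta-1/2|\le\lambda_1)=\delta-3B_n-2\alpha_m$ within $\epsilon_0/4$ of $\gd$, ensuring $U\subset G_{\lambda_1}\setminus G_\lambda\subset\hat G_\lambda\setminus G_\lambda$. The key point is that this route uses~\ref{assump:detect} (a lower bound on mass near $1/2\pm\gd$) rather than~\ref{assump:margin}, and therefore needs no a priori control on $\lambda$.
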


\begin{proof}
    We proceed by contradiction. Assume that $\lambda>1/2 - \gd + \epsilon_0$. 
    (The case $\lambda<1/2 - \gd - \epsilon_0$ can be handled similarly). 
Let $\lambda_1$ denote the threshold at which we have $P_X(|\eta - 1/2|\leq \lambda_1) =
\delta - 3B_n - 2\alpha_m$, and let $G_{\lambda_1} = \{x \in \X\ \mid |\eta(x) - 1/2|
\leq \lambda_1\}$. 
By the condition on $m$ and $n$, we know that $3B_n + 2\alpha_m \leq 2C_1\lp \epsilon_0/8
\rp^{\rho_1}$, which implies that $\lambda_1 \leq 1/2 - \gd + \epsilon_0/4$. 

Define the set $U = \{ x \in \X \ \mid \ |\eta -1/2 +\gd - \epsilon_0/2|\leq 
\epsilon_0/4\}$.  By our assumption that $\lambda > 1/2 - \gd + \epsilon_0$,
the set $U$ is a subset of  $G_{\lambda_1}\setminus G_\lambda$, and 
for all $x \in U$, we have $|\eta(x)-\lambda| \geq \epsilon_0/4$.
Furthremore, by applying the detectability condition at level $1/2 - \gd + \epsilon/2$,
we have $P_X(U) \geq 2C_1\lp \frac{\epsilon_)}{4}\rp^{\rho_1}$. 

We now observe the following:
\begin{align*}
    A_n &\geq \bar{R}_\lambda\lp \hat{g}_\lambda \rp - \bar{R}_\lambda(g_\lambda)
    \geq \int_{\hat{G}_\lambda \setminus G_\lambda}|\eta - \lambda|dP_X \\
    &\stackrel{(a)}{\geq} \int_{G_{\lambda_1}\setminus G_\lambda}|\eta - \lambda|dP_X 
      \geq \int_U |\eta - \lambda|dP_X \\
    & \stackrel{(b)}{\geq} \frac{\epsilon_0}{4}C_1 \lp \frac{\epsilon_0}{4}\rp^{\rho_1}
     = 2C_1\lp \frac{\epsilon_0}{4}\rp^{1+\rho_1} > A_n,
\end{align*}
which gives us the required contradiction. In the above display,
{\bf (a)} follows from the fact that $P_X(G_{\lambda_1}\setminus G_\lambda) \leq
P_X(\hat{G}_\lambda \setminus G_\lambda)$, and an argument similar to the proof
of Proposition~\ref{prop:bayes1}.
{\bf (b)} follows from the results on $P_X(U)$ and $|\eta(x) - \lambda|$ for $x\in U$. 

\end{proof}

\end{proof}

\subsection{Slack Term in~\eqref{convex_surrogate_2}}
\label{appendix:empirical_constraint}
We present a result which provides us with an appropriate value of slack which ensures that the constraint in~\eqref{convex_surrogate_2} is satisfied with high probability. 

\begin{proposition}
\label{prop:empirical_constraint}
Let $\bar{\varphi}_H(z) \coloneqq \min \{1, \varphi_H(z)\}$ be the clipped version of the hinge loss and define $\mathcal{F} \coloneqq \{\bar{\varphi}_H\circ r \; \mid \; r \in \mathcal{R}\}$. Then, for any $m >1$ and all $r\in\mathcal{R}$, with probability at least $1-1/m$, we have
% \todo{I use $j$ for the index of training samples. This is because we have used $i$ or $i_h$ as a function. Make sure we are consistent everywhere, especially in the appendix.}
%
\begin{equation*}
P_X \big( r(X) \leq 0 \big) \leq \frac{1}{m}\sum_{j=n+1}^{n+m} \varphi_H\big( r(X_j)\big) + \frac{\tau}{\sqrt{m}}, 
\end{equation*}
where $\tau = 2\sqrt{m}\mathfrak{R}_m\lp \mathcal{F}\rp + \sqrt{2\log(2m)}$ and $\mathfrak{R}_m(\mathcal{F})$ is the Rademacher complexity of the function class $\mathcal{F}$. 
\end{proposition}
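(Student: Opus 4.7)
The plan is to reduce the claim to a standard uniform-deviation bound for the bounded function class $\mathcal{F} = \{\bar{\varphi}_H \circ r : r \in \mathcal{R}\}$. The starting point is the pointwise inequality
\[
\indi_{\{z \leq 0\}} \;\leq\; \bar{\varphi}_H(z) \;\leq\; \varphi_H(z), \qquad z \in \mathbb{R},
\]
which holds because $\varphi_H(z) = \max\{0,1-z\} \geq 1$ whenever $z \leq 0$, so clipping at $1$ still upper-bounds the indicator. Setting $z = r(X)$ and taking $P_X$-expectation on the left, while keeping the empirical average of $\varphi_H \circ r$ on the right, already produces the desired shape: the task reduces to controlling $\mathbb{E}[\bar{\varphi}_H(r(X))]$ by the empirical average $\tfrac{1}{m}\sum_j \bar{\varphi}_H(r(X_j))$, uniformly over $r \in \mathcal{R}$.

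Next I would invoke the standard Rademacher uniform-convergence bound for $[0,1]$-valued function classes applied to $\mathcal{F}$. Since the $m$ unlabelled samples $X_{n+1},\ldots,X_{n+m}$ are i.i.d.\ from $P_X$ and every $f \in \mathcal{F}$ takes values in $[0,1]$, McDiarmid's inequality (with bounded-differences constants $1/m$) combined with the usual symmetrization argument gives, with probability at least $1-1/m$,
\[
\sup_{f \in \mathcal{F}}\left(\mathbb{E}[f(X)] - \frac{1}{m}\sum_{j=n+1}^{n+m} f(X_j)\right) \;\leq\; 2\mathfrak{R}_m(\mathcal{F}) + \sqrt{\frac{2\log(2m)}{m}}.
\]
Chaining this uniform bound with the two pointwise inequalities above, and rescaling by $\sqrt{m}$, yields the advertised inequality with $\tau = 2\sqrt{m}\,\mathfrak{R}_m(\mathcal{F}) + \sqrt{2\log(2m)}$.

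There is no serious obstacle once clipping is introduced; in fact, the clipping step is precisely what makes the argument go through. Without it, $\varphi_H$ is unbounded on its domain, so the bounded-differences constant needed for McDiarmid's inequality would blow up and the Rademacher symmetrization bound would not directly apply to $\{\varphi_H \circ r : r \in \mathcal{R}\}$. The uniformity over $r \in \mathcal{R}$ is essential, because the rule that selects the classifier in \eqref{convex_surrogate_2} depends on the unlabelled sample, so the data-dependent choice of $r$ must still be covered; the supremum formulation above handles this automatically. The numerical constants can be sharpened, but the stated form is the one needed downstream in the proof of Theorem~\ref{theorem:binary_search}.
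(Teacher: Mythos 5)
Your proposal is correct and follows essentially the same route as the paper's proof: the pointwise chain $\indi_{\{z\leq 0\}} \leq \bar{\varphi}_H(z) \leq \varphi_H(z)$ combined with the standard Rademacher generalization bound for the bounded class $\mathcal{F}$ at confidence level $1-1/m$. Your added remarks on why clipping is needed for the bounded-differences step and why uniformity over $\mathcal{R}$ matters are accurate but not points of divergence from the paper's argument.
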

\begin{proofoutline}
The proof follows by employing the standard Rademacher complexity generalization bound~\citep[Theorem~26.5]{shalev2014understanding} over the bounded class of functions $\mathcal{F}$, and then using the fact that $\varphi_H\circ r \geq \bar{\varphi}_H \circ r$.  The detailed proof is given in Appendix~\ref{appendix:empirical_constraint}.
\end{proofoutline}

\begin{proof}
We proceed as follows
\begin{align*}
    \mbb{E}\lb \bar\varphi_H\big(r(X)\big)\rb\big) - 
     \frac{1}{m}\sum_{i=1}^m \bar \varphi_H\big( r(X_{i+m})\big)& \stackrel{(a)}{\leq}
    2\mathfrak{R}_m(\mathcal{F}) + \sqrt{\frac{2\log(2m)}{m}} \\
\Rightarrow     \mbb{E}\lb \bar\varphi_H\big(r(X)\big)\rb - 
 \frac{1}{m}\sum_{i=1}^m\varphi_H\big( r(X_{n+i}) \big)& 
\stackrel{(b)}{\leq } 2\mathfrak{R}_m(\mathcal{F}) + \sqrt{\frac{2\log(2m)}{m}} .
\end{align*}
The inequality $(a)$ in the above display follows form the Rademacher
complexity generalization bounds for the bounded loss function $\bar
\varphi_H(\cdot)$, while $(b)$ follows from the fact that $\varphi_H(x) \geq
\bar \varphi(x)$ for all $x$. Finally, the result is obtained by using the
fact that $P(r(X) \leq 0) \leq \mbb{E}\lb \bar\varphi_H\big(r(X)\big)\rp$.

\end{proof}
%-------------------------------------------------------------------
% APPENDIX C.2
%-------------------------------------------------------------------

%-------------------------------------------------------------------
%-------------------------------------------------------------------
%-------------------------------------------------------------------
%-------------------------------------------------------------------
%-------------------------------------------------------------------
% APPENDIX: EXPERIMENT DETAILS
%-------------------------------------------------------------------
\newpage 
\section{Details of Experiments}

\subsection{Details of Implementation}
\label{appendix:details_experiment}
\paragraph{Algorithm~1.} For the fixed cost subroutine required
by Algorithm~1, we implemented the primal form of the CHR algorithm of \citep[\S~4]{cortes2016learning}, employing  the random Fourier feature maps of
\citep{rahimi2008random} of RBF kernel. For selecting the regularization parameters
of the objective function of CHR algorithm, we performed a grid search over the set
$\{10^i\ \mid \ -5 \leq i\leq 5\}$. We set the slack term $\alpha_m = 0.1/\sqrt{m}$ 
and the algorithm stopped when $\delta - Q_k \leq \text{tol}$, and we used the value 
$\text{tol}=0.01$.

\vspace{-1em}
\paragraph{Algorithm~2.} We used the hinge loss
$\varphi_H(\cdot)$ for both the objective and the constraint. This
however, resulted in very conservative solutions for which the abstention 
rate was much smaller than $\delta$ due to the more stringent constraints.
To alleviate this problem, 
we relaxed the constraint by using the term $c \delta$ for $c\in [1,2]$, and the value of $c$ was chosen by grid search.

We now report the figures for three other benchmark machine learning datasets, namely \texttt{cod-rna}, \texttt{skin} and \texttt{digits}.

\begin{figure}[h]
\label{fig:cod}
\centering
\includegraphics[ width=0.9\linewidth]{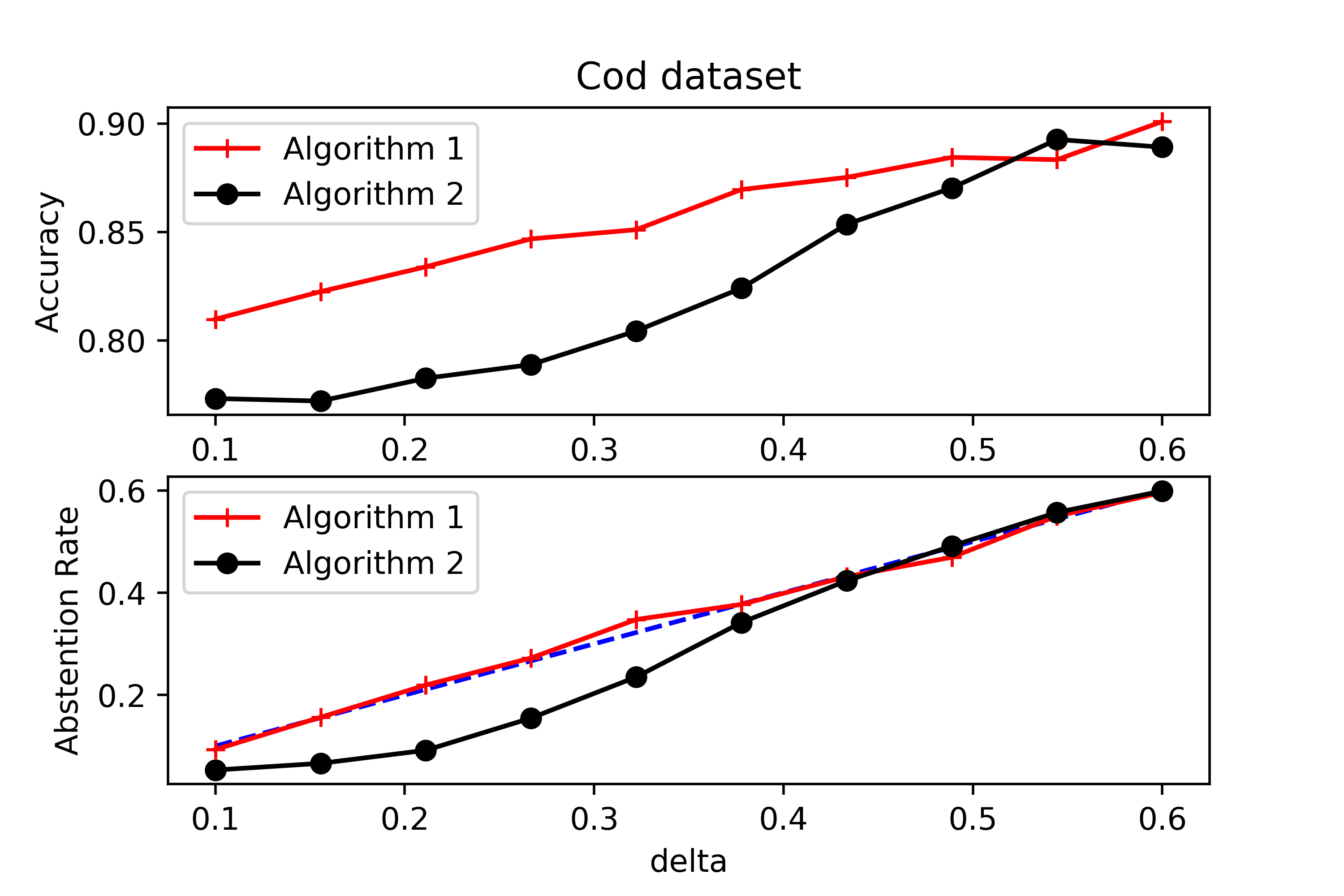}
\caption{Plot of the rejection rate versus accuracy as $\delta$ varies from $0.1$
to $0.6$ for the two algorithms on the \texttt{cod-rna} dataset.}
\end{figure}

\begin{figure}[h]
\label{fig:digits}
\centering
\includegraphics[ width=0.9\linewidth]{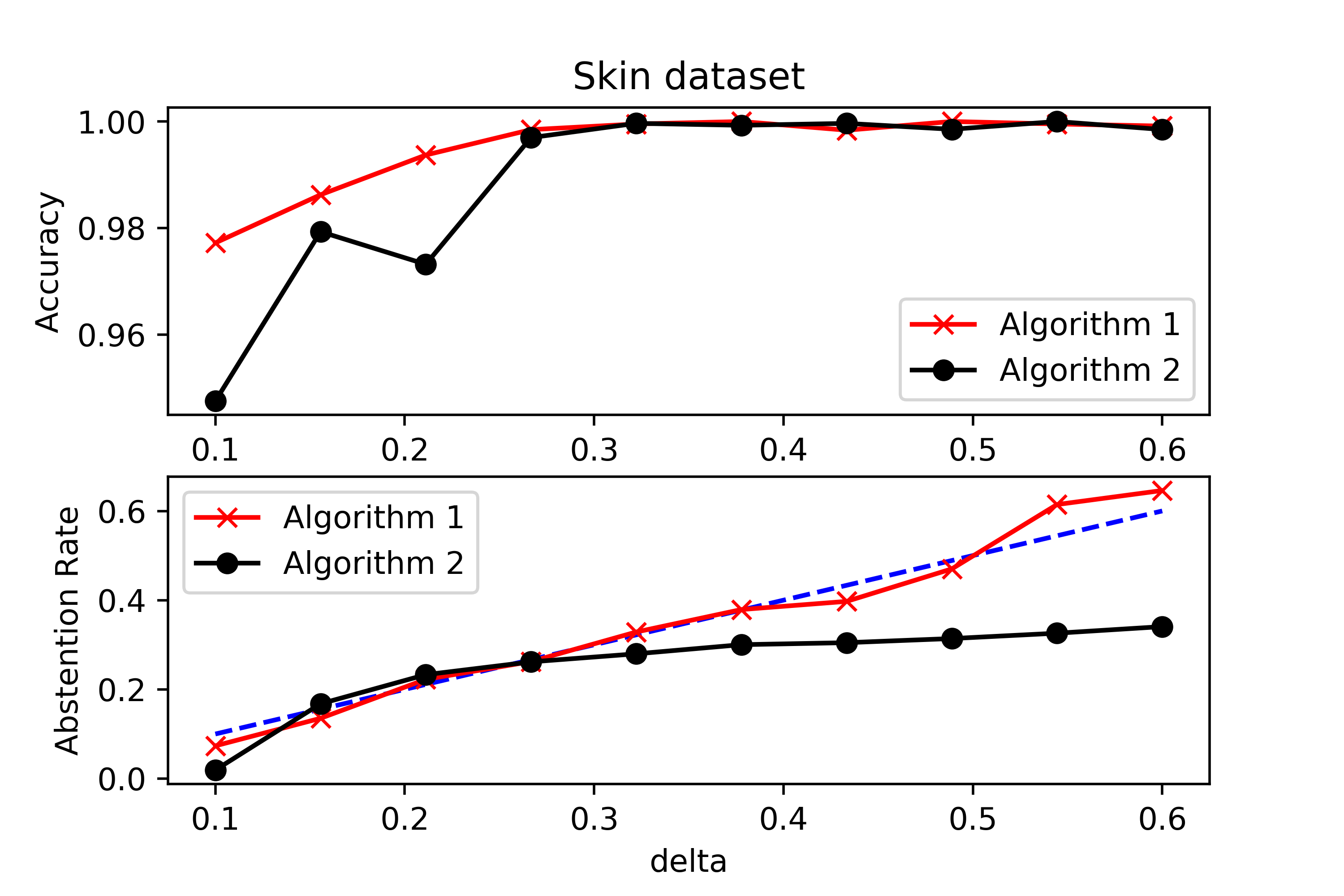}
\caption{Plot of the rejection rate versus accuracy as $\delta$ varies from $0.1$
to $0.6$ for the two algorithms on the \texttt{skin} dataset.}
\end{figure}

\begin{figure}[h]
\label{fig:digits}
\centering
\includegraphics[ width=0.9\linewidth]{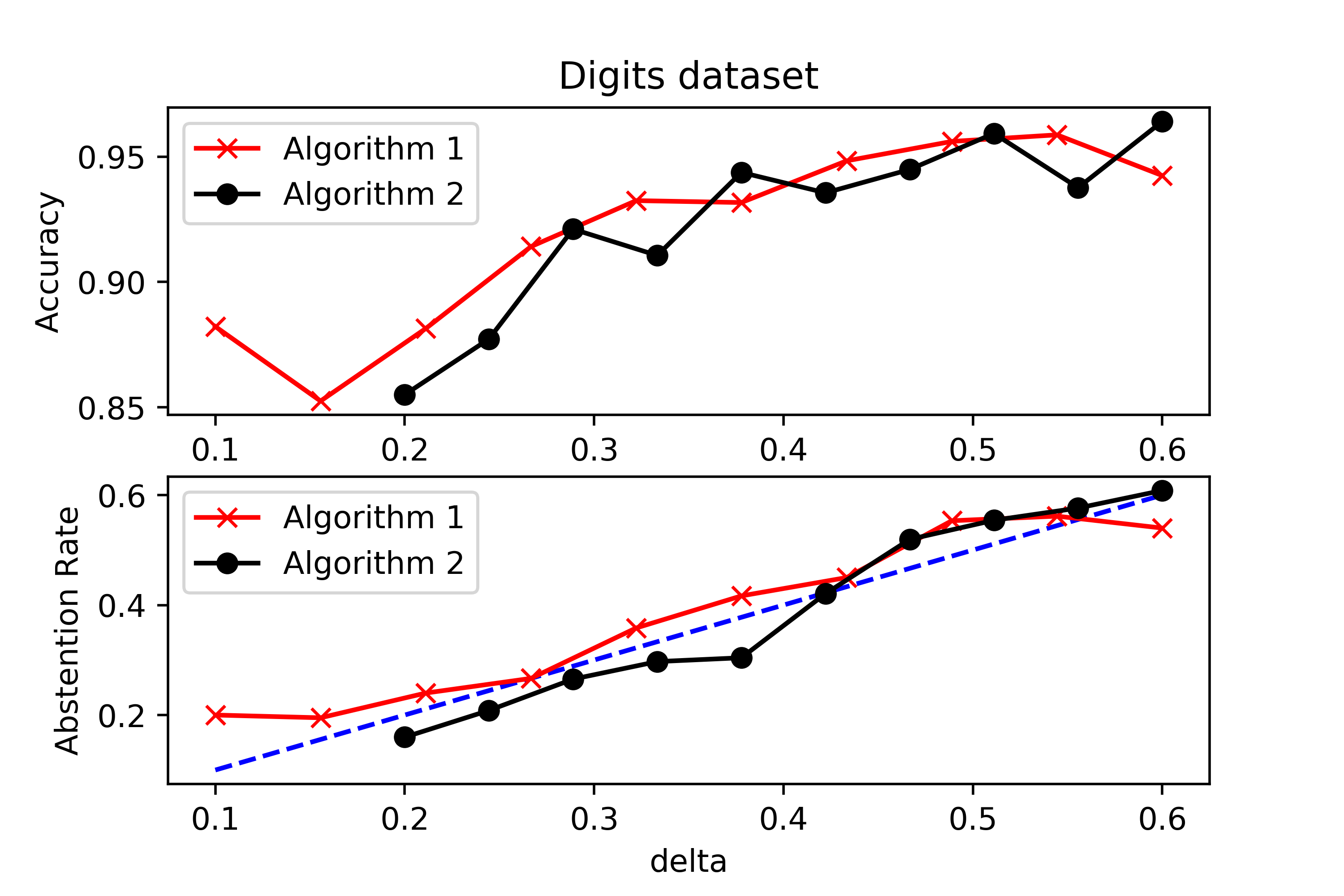}
\caption{Plot of the rejection rate versus accuracy as $\delta$ varies from $0.1$
to $0.6$ for the two algorithms on the digits dataset.}
\end{figure}

\label{appendix:experiments}

\end{appendix}

%-------------------------------------------------------------------
%-------------------------------------------------------------------
%-------------------------------------------------------------------
%-------------------------------------------------------------------
%-------------------------------------------------------------------

\end{document}